\documentclass[10pt]{article} 
\usepackage{natbib}
\usepackage{tikz}
\usetikzlibrary{positioning}
\usetikzlibrary{arrows.meta}
\usepackage[accepted]{tmlr}

\usepackage{amsmath,amsfonts,bm}

\def\eqref#1{equation~\ref{#1}}

\def\1{\bm{1}}

\DeclareMathAlphabet{\mathsfit}{\encodingdefault}{\sfdefault}{m}{sl}
\SetMathAlphabet{\mathsfit}{bold}{\encodingdefault}{\sfdefault}{bx}{n}

\DeclareMathOperator*{\argmax}{arg\,max}
\DeclareMathOperator*{\argmin}{arg\,min}

\usepackage{array,tabularx}
\newcolumntype{M}[1]{>{\centering\arraybackslash}m{#1}} 
\newcolumntype{Y}{>{\centering\arraybackslash}X}
\usepackage{titletoc}
\usepackage{hyperref}
\usepackage{url}
\usepackage{changepage}
\usepackage{amsthm}
\newtheorem{Definition}{Definition}

\newtheorem{lemma}{Lemma}
\usepackage{algorithm}
\usepackage{placeins}
\usepackage{algpseudocode}
\usepackage{graphicx}
\usepackage{multirow}
\usepackage{booktabs}
\usepackage{amsmath,amssymb}
\usepackage{enumitem}
\usepackage{xcolor}

\newcommand{\fslot}[1]{\text{\footnotesize\itshape (#1)}\ \ }
\newenvironment{formalsummary}[2]{
  \par\medskip\noindent\textbf{Formal summary (#1).}%
  \small
  \begin{subequations}\label{#2}%
}{\end{subequations}\par\medskip}

\definecolor{darkblue}{RGB}{0,0,100}
\hypersetup{
  colorlinks=true,
  linkcolor=darkblue,
  citecolor=darkblue,
  urlcolor=blue
}

\newcommand{\printappendixtoc}{%
\begingroup
\section*{Table of Contents}

\begin{adjustwidth}{0.06\textwidth}{0.008\textwidth}
\hrule

\setcounter{tocdepth}{3}

\titlecontents{section}
  [0pt]
  {\addvspace{0.45em}\bfseries}
  {\contentslabel{2.8em}}
  {}
  {\titlerule*[0.6pc]{.}\contentspage}

\titlecontents{subsection}
  [1.5em]
  {}
  {\contentslabel{3.8em}}
  {}
  {\titlerule*[0.6pc]{.}\contentspage}

\titlecontents{subsubsection}
  [3.5em]
  {}
  {\contentslabel{4.8em}}
  {}
  {\titlerule*[0.6pc]{.}\contentspage}

\printcontents[appendix]{}{1}{\setcounter{tocdepth}{3}}
\end{adjustwidth}

\endgroup
}

\title{Causal Bayesian Optimization:\\ Foundations, Methods, and Applications}

\author{\name Chenfeng Huang \email chenfenghuang@ucla.edu\\
      \addr Department of Statistics \& Data Science \\
      University of California, Los Angeles
      \AND
      \name Thuy T. Le  \email Thuy.Le@csulb.edu  \\
      \addr Department of Mathematics and Statistics \\
      California State University, Long Beach
      \AND
      \name Zixuan Ma   \email zixuanma@ucla.edu \\
      \addr Department of Statistics \& Data Science \\
      University of California, Los Angeles
      \AND
      \name Hien Tran \email tran@math.ncsu.edu
      \\
      \addr Department of Mathematics \\
      North Carolina State University
 }

\def\month{08}  
\def\year{2026} 
\def\openreview{\url{https://openreview.net/forum?id=XT6DC37m5I}} 

\begin{document}

\maketitle

\begin{abstract}
Causal Bayesian Optimization (CBO) integrates causal inference with Bayesian optimization to enable sample-efficient intervention selection in systems governed by causal structure. This survey provides a comprehensive and systematic review of the CBO landscape, organizing the growing literature through a unified BO-loop perspective that reveals how causal assumptions shape four core components: intervention search spaces, surrogate construction, acquisition design, and decision policies. We organize methods along recurring design axes, including graph and system-knowledge assumptions, environmental assumptions, intervention representation, surrogate architecture, and decision rules, and we clarify conceptual and notational connections between CBO and adjacent fields, including causal bandits, Bayesian experimental design, safe optimization, policy search, and causal abstraction. To address the lack of standardized evaluation in the field, we introduce a reproducibility-oriented benchmark that covers hard- and soft-intervention settings, implements both the standard GAP metric and a new trajectory-aware Path-Aware GAP (PA-GAP) metric, and evaluates seven CBO methods alongside a non-causal BO baseline under a common scoring protocol. Within this benchmark-specified regime, where the benchmark SCM, intervention domains, and reference optima are fixed and known-graph methods receive the benchmark graph unless explicitly specified otherwise, our empirical study across thirteen datasets, three budget levels, and two metrics reveals that no single method dominates uniformly: rankings depend critically on the dataset, budget, metric, and method-specific use of causal information, and strong non-causal baselines remain competitive in several settings. We further add controlled graph-misspecification and omitted-variable stress tests, which show that rankings can change substantially when the learner-side causal information is perturbed. We therefore identify robustness to causal-assumption violations, scalable unknown-graph optimization, mixed intervention types, realistic cost models, tighter theoretical guarantees, and integration with modern representation learning and causal abstractions as open challenges for moving CBO from controlled benchmarks toward reliable deployment.
\end{abstract}

\section{Introduction}

Many scientific and engineering problems require the selection of interventions in systems whose variables are causally coupled. A clinician chooses a drug dosage that propagates through metabolic pathways to affect a clinical endpoint; an engineer tunes process parameters that cascade through a manufacturing pipeline; a policymaker adjusts economic levers whose effects depend on the causal structure of the economy. In each case, the key challenge is the same: the system's response to an intervention is not governed by statistical association but by a causal mechanism, and purely correlational optimization can recommend actions that appear promising in observational data, yet fail or even cause harm when deployed \citep{pearl2009causality,peters2017elements}.

Bayesian Optimization (BO) offers a basic framework for sample-efficient optimization when evaluations are expensive, noisy, or constrained \citep{Jones1998,shahriari2016taking}. By iteratively fitting a probabilistic surrogate and selecting queries via an acquisition function that balances exploitation against exploration, BO can identify near-optimal solutions with far fewer evaluations than grid or random search. However, standard BO treats the objective as a black box: it is agnostic to the causal relationships among the variables it manipulates. Causal Bayesian Optimization (CBO) closes this gap by embedding structural causal knowledge into the BO loop, enabling the optimizer to reason about \emph{which} variables to intervene on, \emph{how} observational data relate to interventional effects, and \emph{where} the experimental budget is best spent \citep{aglietti2020CBO}.

Since its introduction, CBO has rapidly expanded into a diverse family of methods, spanning dynamic environments \citep{aglietti2021}, safety constraints \citep{aglietti2023cCBO}, mechanism-level surrogates with regret guarantees \citep{Sussex2022MCBO}, functional and contextual policies \citep{gultchin2023FCBO,ContextualCBO}, adversarial non-stationarity \citep{ACBOSussex}, high-dimensional scaling \citep{Wu2024HighDimensionalCBO}, unknown-graph optimization \citep{branchini2023CEO,mukherjee2024}, noise-robust acquisition with learned priors \citep{Li2023}, and abstraction-based causal decision making \citep{zennaro2024camab,dyer2025atucb,zeitler2025mfacbo}. Section~\ref{Methods} organizes these developments along five recurring design axes---graph and system-knowledge assumptions, environment, intervention representation, surrogate architecture, and decision rule (Figure~\ref{fig:cbo_categories}, Table~\ref{tab:cbo-variants}).

Despite this progress, the field lacks two critical ingredients for maturation. First, a \textbf{unified conceptual framework} that reveals the shared structure across CBO variants and connects them to adjacent fields, such as causal bandits, Bayesian experimental design, active causal discovery, and safe reinforcement learning, is missing. Second, \textbf{standardized evaluation infrastructure} is absent. Existing papers use different datasets, incompatible codebases, inconsistent intervention conventions, and different metrics, making cross-paper comparisons unreliable. A method that excels when the graph is known and interventions are low-dimensional may behave very differently under soft interventions, unknown graphs, or context-dependent policies, yet these distinctions are obscured when each paper evaluates in isolation.

This survey addresses both gaps. We provide a systematic, technically detailed review of the CBO landscape organized through a unified BO-loop perspective, and we introduce a reproducibility-oriented benchmark that standardizes and repackages inherited CBO tasks together with newly added scenarios under a common execution and scoring protocol. In terms of concreteness, our contributions are as follows.
\begin{itemize}[leftmargin=*,itemsep=2pt]
    \item \textbf{A unified design-space perspective.} We organize CBO methods by graph and system-knowledge assumptions, environmental assumptions, intervention representation, surrogate architecture, and decision rule (Section~\ref{Methods}). This taxonomy is intended as a survey-level organization of recurring design patterns and couplings, rather than as an ablation study proving that components are interchangeable.
    \item \textbf{Connections to adjacent fields.} We clarify conceptual and notational links between CBO and causal bandits, Bayesian experimental design, active causal discovery, safe optimization, policy search, and causal abstraction / multi-scale decision making, while distinguishing such connections from formal equivalences or reductions (Section~\ref{subsec:adjacent_fields}).
    \item \textbf{A reproducibility-oriented benchmark.} We release a standardized benchmark harness covering eight hard-intervention and five soft-intervention scenarios, combining inherited tasks from prior CBO/function-network work with newly added or reconstructed scenarios. The benchmark provides standardized GAP, a new trajectory-aware PA-GAP metric, and a common best-so-far trajectory interface that enables fair re-scoring of outputs from heterogeneous implementations (Section~\ref{sec:experiments}).
    \item \textbf{A unified empirical comparison.} We evaluate seven CBO methods and a non-causal BO baseline across thirteen datasets, three budget levels, and two metrics. Because the compared methods do not all solve the same optimization problem, we do not pool them into cross-method rank plots: each method--task pair carries a formulation label distinguishing matched comparisons, stress-test comparisons, and descriptive single-method or single-dataset settings, the main text reports per-dataset comparisons, and budget-stratified descriptive rank-reliability checks are reported in the appendix. The results show that no method is statistically separable as uniformly best, rankings depend strongly on dataset--budget-metric interactions, and strong non-causal baselines remain competitive in several settings (Section~\ref{subsec:hard_benchmark}--\ref{subsec:soft_benchmark}).
    \item \textbf{Structured open problems.} We identify six concrete research directions (robustness to causal assumptions, scalability, richer intervention models, realistic evaluation, theoretical foundations, and integration with representation learning) that must be addressed for CBO to achieve reliable real-world deployment (Section~\ref{sec:open_problems}).
\end{itemize}

\paragraph{Paper outline.}
Section~\ref{sec:related_surveys} places this survey relative to existing reviews. Section~\ref{sec:foundations} establishes the theoretical foundations in causal inference and Bayesian optimization. Section~\ref{Methods} presents the methodological frameworks organized by design axes. Section~\ref{sec:experiments} describes the benchmark, metrics, datasets, and empirical results. Section~\ref{sec:open_problems} discusses open problems, and Section~\ref{sec:conclusion} concludes.

\subsection*{Related Surveys and Scope}
\label{sec:related_surveys}

Several surveys cover topics adjacent to CBO, but none provides the unified treatment of causal optimization methods and standardized empirical evaluation that we pursue here.

\paragraph{Bayesian optimization surveys.}
Comprehensive reviews of BO cover surrogate modeling, acquisition functions, and scalability \citep{shahriari2016taking,frazier2018tutorial,garnett2023bayesian}, but treat the objective as a black box and do not address causal structure, intervention design, or the observation-intervention trade-off central to CBO.

\paragraph{Causal inference and causal discovery.}
Textbooks and surveys on causal inference \citep{pearl2009causality,peters2017elements,hernan2020whatif,glymour2019review} provide the framework of the structural causal model that underlies CBO, but focus on the identification and estimation of causal effects rather than on sequential optimization of interventions under budget constraints. Causal discovery surveys \citep{spirtes2000causation,chickering2002optimal,glymour2019review} address the learning of the graph from data, which is a component of unknown-graph CBO but not its main goal.

\paragraph{Causal bandits and Bayesian experimental design.}
Causal bandits \citep{lattimore2016causal,lu2021causal,nair2021budgeted} share CBO's goal of selecting interventions using causal structure but typically assume discrete action sets with parametric reward models, whereas CBO handles continuous domains with nonparametric surrogates; Bayesian optimal experimental design \citep{chaloner1995bayesian,foster2021deep} selects experiments to maximize information about model parameters rather than to optimize an outcome. Section~\ref{subsec:adjacent_fields} details both relationships.

\paragraph{Scope of this survey.}
We focus on methods that explicitly combine (i)~a structural causal model or causal graph with (ii)~a sequential Bayesian optimization loop to select interventions. We exclude pure causal effect estimation without an optimization loop, causal bandits without GP-based surrogates (except when contrasting with CBO), and reinforcement learning methods that learn policies through environment interaction without an explicit causal model. We include both known- and unknown-graph settings, hard and soft interventions, and single- and multi-objective formulations.

\paragraph{Survey protocol and inclusion criteria.} To make the construction of the review corpus explicit, we followed a targeted search-and-screening protocol with a literature cutoff date of June 28, 2026. We searched OpenReview, PMLR, arXiv, Google Scholar, and the proceedings of major machine learning and causality venues using keyword combinations such as “causal Bayesian optimization,” “causal global optimization,” “Bayesian optimization with interventions,” “causal optimization,” and “optimization under causal structure.” Candidate papers were first screened by title and abstract, and then by their methodological formulation. A work was included if its central contribution couples an explicit causal model, such as an SCM, causal graph, or distribution over graphs, with a sequential Bayesian-optimization-style procedure for intervention selection. We further used backward and forward citation tracing from the original CBO paper and subsequent representative extensions to identify closely related work. Papers whose primary objective is only causal effect estimation, graph recovery, bandit learning, experimental design, or reinforcement learning were excluded from the main corpus unless they directly clarify the boundary between CBO and adjacent fields. This protocol yields a focused corpus of methods in which causal assumptions actively modify the Bayesian optimization loop, rather than a broad survey of causal decision-making methods.

\section{Theoretical Foundations}
\label{sec:foundations}
This section establishes the conceptual and mathematical foundations underlying Causal Bayesian Optimization.
We first review structural causal models, interventions, and identifiability (Section~\ref{subsec:causal_inference}).
We then summarize classical Bayesian optimization with an emphasis on components that CBO modifies (Section~\ref{subsec:classical_bo}).
Finally, we formulate the CBO problem and highlight the key ways in which causality reshapes the BO loop (Section~\ref{subsec:cbo_formulation}).
Table~\ref{tab:notation} consolidates the notation used throughout the paper.

\begin{table}[t]
\caption{Summary of notation used throughout this survey.}
\label{tab:notation}
\centering
\small
\setlength{\tabcolsep}{5pt}
\renewcommand{\arraystretch}{1.12}
\begin{tabular}{@{}ll@{}}
\toprule
Symbol & Meaning \\
\midrule
$M = \langle U, V, F, P(U)\rangle$ & Structural Causal Model \\
$V = \{V_1,\dots,V_n\}$ & Endogenous (observed) variables \\
$U$ & Exogenous variables (determined outside the system) \\
$\mathrm{Pa}_i$ & Parents of $V_i$ in the causal graph $\mathcal{G}$ \\
$\mathcal{G}$ & Causal directed acyclic graph (DAG) \\
$do(X{=}\mathbf{x})$ & Hard intervention setting $X$ to value $\mathbf{x}$ \\
$X, Y$ & Manipulable variables, target outcome \\
$C$ & Context variables (used only by contextual extensions, Section~\ref{subsec:intervention_developments}) \\
$X_s \subseteq X$ & Intervention scope (subset of manipulable variables) \\
$\mathbf{x}_s \in \mathcal{D}(X_s)$ & Intervention value in the feasible domain \\
$f(X_s,\mathbf{x}_s)$ & Interventional objective: $\mathbb{E}[Y \mid do(X_s{=}\mathbf{x}_s)]$ ($C$ enters only in contextual extensions) \\
$\mathcal{GP}(m, k)$ & Gaussian process with mean $m$ and kernel $k$ \\
$\mu_t(\cdot),\;\sigma_t^2(\cdot)$ & GP posterior mean and variance after $t$ observations \\
$\alpha(\cdot)$ & Acquisition function \\
$T$ & Total number of interventional trials (budget) \\
$y^*$ & Reference optimum (for metric computation) \\
$R_t$ & Best-so-far improvement ratio at trial $t$ \\
\bottomrule
\end{tabular}
\end{table}

\subsection{Causal Inference}
\label{subsec:causal_inference}

Most CBO methods adopt the framework of Structural Causal Models (SCMs) \citep{pearl2009causality}, which provide a compact, modular representation of how variables are generated and how they respond to interventions.

\begin{Definition}[Structural Causal Model (SCM)]
An SCM is a tuple $M=\langle U,V,F,P(U)\rangle$ where $U$ is a set of exogenous variables, $V=\{V_1,\dots,V_n\}$ is a set of endogenous variables, and
$F=\{f_1,\dots,f_n\}$ is a collection of structural assignments
\begin{equation}
V_i := f_i(\mathrm{Pa}_i, U_i), \qquad i=1,\dots,n,
\end{equation}
where $\mathrm{Pa}_i \subseteq V\setminus\{V_i\}$ denotes the direct causes (parents) of $V_i$ in the causal graph and $U_i\subseteq U$ is the exogenous noise that affects $V_i$.
The distribution $P(U)$ completes the model and, together with $F$, induces a unique joint distribution on $V$. The exogenous variables $U$ are variables determined outside the modeled causal system; their probability distribution $P(U)$ is explicitly part of the SCM specification, so they are not ``unmodeled'' but rather externally determined.
\end{Definition}

\paragraph{Causal graphs, assumptions, and the causal Markov property.}
An SCM induces a causal graph $\mathcal{G}$, commonly represented as a directed acyclic graph (DAG), with a node for each endogenous variable and an edge $V_j \to V_i$ whenever $V_j \in \mathrm{Pa}_i$. The graph encodes a qualitative causal structure in which variables are direct causes of others but does not, by itself, specify functional forms or effect magnitudes. That quantitative information resides in the structural equations $F$.

Two standard assumptions link the graph to the distribution. The \emph{causal Markov condition} states that each variable is conditionally independent of its non-descendants given its parents \citep{pearl2009causality,spirtes2000causation}. Under the additional assumption that exogenous variables are mutually independent (\emph{causal sufficiency}), the observational distribution factorizes according to the graph.
\begin{equation}
P(V_1,\dots,V_n) \;=\; \prod_{i=1}^n P(V_i \mid \mathrm{Pa}_i).
\label{eq:markov_factorization}
\end{equation}
The \emph{faithfulness} assumption further requires that all conditional independencies in $P$ are entailed by the Markov condition applied to $\mathcal{G}$, ruling out cancelations that could hide edges. Together, these assumptions enable structure learning from observational data and underpin the identifiability results that CBO exploits.

\paragraph{Hard interventions.}
In Pearl's framework, a \emph{hard} (or \emph{perfect}) intervention $do(X{=}\mathbf{x})$ replaces the structural equation for $X$ with the constant $\mathbf{x}$, severing all incoming edges to $X$ while leaving other mechanisms unchanged. This produces a modified SCM $M_{do(X=\mathbf{x})}$ and an interventional distribution that admits a \emph{truncated factorization}:
\begin{equation}
P(V_1,\dots,V_n \mid do(X{=}\mathbf{x})) \;=\; \prod_{V_i \notin X} P(V_i \mid \mathrm{Pa}_i)\;\bigg|_{X=\mathbf{x}}.
\label{eq:truncated_factorization}
\end{equation}
Hard interventions are the default in the original CBO formulation and in most Hard intervention benchmarks.

\paragraph{Soft interventions.}
A \emph{soft} (or \emph{imperfect}) intervention modifies the structural mechanism of a variable without completely severing its incoming edges. Formally, a soft intervention on $V_i$ replaces $f_i$ with a new mechanism $\tilde{f}_i(\mathrm{Pa}_i, U_i; \theta)$ parameterized by an intervention parameter $\theta$. In the broader literature, soft interventions can also encompass structural changes such as dropping or adding parent dependencies, or policy-based functional interventions where $\theta$ parameterizes a function mapping contexts to mechanism parameters; the formal summaries in Section~\ref{Methods} state what $\theta$ represents for each method. This is important in several CBO variants, notably MCBO \citep{Sussex2022MCBO} and fCBO \citep{gultchin2023FCBO}, that model mechanism modifications rather than variable clamping. Soft interventions yield a richer action space but require modeling how the intervention parameter interacts with the existing mechanism.

\paragraph{Counterfactuals.}
SCMs also define counterfactual quantities that describe outcomes under hypothetical actions in a specific latent world (indexed by $u\in U$), typically written as $Y_\mathbf{x}(u)$ \citep{pearl2009causality}.
The methods surveyed here optimize interventional expectations rather than counterfactuals; counterfactual CBO (for example, personalized optimization after observing unit-specific histories, or counterfactual safety constraints) remains a future direction.

\paragraph{Identifiability and the $do$-calculus.}
In practice, the full SCM is rarely known, so CBO relies on \emph{identifiability}: whether an interventional quantity such as $\mathbb{E}[Y\mid do(X=\mathbf{x})]$ can be expressed purely in terms of the observational distribution given assumptions about the graph.
Pearl's $do$-calculus provides three general inference rules for transforming interventional expressions into observational quantities when identification is possible \citep{pearl2009causality,pearl2012docalculus}. A common special case is the \emph{backdoor adjustment}: if a set $Z$ satisfies the backdoor criterion relative to $(X, Y)$, then
\begin{equation}
\mathbb{E}[Y \mid do(X{=}\mathbf{x})] \;=\; \sum_z \mathbb{E}[Y \mid X{=}\mathbf{x}, Z{=}z]\, P(Z{=}z).
\label{eq:backdoor}
\end{equation}
When identification holds but closed-form evaluation is difficult, one can approximate the required expectations via Monte Carlo integration of the identified estimand.

\paragraph{Identifiability as an algorithmic assumption in CBO.}
In CBO, identifiability is not merely a causal inference condition; it directly shapes the optimizer's behavior. CBO benefits from identifiability to construct observationally informed priors, which accelerates sample efficiency, but the framework does not strictly rely on it to function. If a causal effect is not identifiable under the given graph, a sound algorithm can recognize this and fall back to a standard uninformative GP prior while still leveraging the causal graph for search-space reduction via POMIS. When an intervention effect is identifiable from observational data, the observational sample can be used to construct informative GP priors, initialize surrogates, or eliminate dominated intervention scopes \emph{before} any costly intervention is performed.
Two failure modes should be distinguished: if identifiability fails on a correctly specified graph, a sound adjustment procedure defaults to an uninformative prior, avoiding bias; overconfident, biased priors instead arise when the graph or adjustment assumptions themselves are wrong---a failure of the graph assumption rather than of identifiability. Empirical comparisons should therefore distinguish methods that assume correct identifiable priors from those that rely primarily on interventional data or explicitly model graph and identification uncertainty. Throughout this survey, we follow the common CBO assumption of causal sufficiency and perfect interventions unless otherwise stated.

\subsection{Classical Bayesian Optimization}
\label{subsec:classical_bo}

\textbf{Bayesian Optimization (BO)} is a sequential strategy for optimizing an expensive black-box objective $f:\mathcal{X}\to\mathbb{R}$ on a bounded domain $\mathcal{X}\subset\mathbb{R}^d$ \citep{Kushner1964, Zhilinskas1975, Mockus1978, Mockus1989, Jones1998, shahriari2016taking}.
At each iteration $t$, BO selects a query $\mathbf{x}_t$, observes $y_t = f(\mathbf{x}_t) + \epsilon_t$ with noise $\epsilon_t$, and updates a probabilistic model to guide future queries. BO is characterized by two interacting components.

\paragraph{Probabilistic surrogate.}
The most common surrogate is a Gaussian Process (GP) \citep{rasmussen2006gaussian}, specified by a prior mean function $m_0(\cdot)$ and a kernel (covariance function) $k(\cdot,\cdot)$. Given $t$ observations $\mathcal{D}_t = \{(\mathbf{x}_i, y_i)\}_{i=1}^t$, the GP posterior of any candidate input $\mathbf{x}$ is available in closed form:
\begin{align}
\mu_t(\mathbf{x}) &= m_0(\mathbf{x}) + \mathbf{k}_t(\mathbf{x})^\top (\mathbf{K}_t + \sigma_\epsilon^2 \mathbf{I})^{-1}(\mathbf{y}_t - \mathbf{m}_0), \label{eq:gp_mean}\\
\sigma_t^2(\mathbf{x}) &= k(\mathbf{x},\mathbf{x}) - \mathbf{k}_t(\mathbf{x})^\top (\mathbf{K}_t + \sigma_\epsilon^2 \mathbf{I})^{-1} \mathbf{k}_t(\mathbf{x}), \label{eq:gp_var}
\end{align}
where $\mathbf{k}_t(\mathbf{x})$ is the vector of covariances between $\mathbf{x}$ and the observed inputs, $\mathbf{K}_t$ is the kernel matrix over the observed inputs, and $\sigma_\epsilon^2$ is the variance of the observation noise. The posterior mean $\mu_t$ provides a point estimate, while $\sigma_t^2$ quantifies the predictive uncertainty, enabling the acquisition function to distinguish well-explored from poorly-understood regions.

\paragraph{Acquisition functions.}
The acquisition function $\alpha(\mathbf{x}; \mathcal{D}_t)$ scores candidate inputs by balancing exploitation (questioning where $\mu_t$ is favorable) against exploration (questioning where predictive uncertainty or information gain is large). Two widely used choices are Expected Improvement (EI) and the Upper Confidence Bound (UCB):
\begin{align}
\alpha_{\mathrm{EI}}(\mathbf{x}) &= \mathbb{E}\big[\max(f(\mathbf{x}) - f^+, 0) \mid \mathcal{D}_t\big], \label{eq:ei}\\
\alpha_{\mathrm{UCB}}(\mathbf{x}) &= \mu_t(\mathbf{x}) + \beta_t^{1/2}\, \sigma_t(\mathbf{x}), \label{eq:ucb}
\end{align}
where $f^+$ is the incumbent value (in noisy settings, the best noisy observation $y^+$ or a posterior estimate of the latent incumbent) and $\beta_t > 0$ is a schedule parameter that controls the exploration--exploitation trade-off \citep{Jones1998, Srinivas2010GP-UCB}. The BO loop iterates: maximize $\alpha$ to select $\mathbf{x}_{t+1}$, evaluate $f$, and update the GP posterior.

\paragraph{Regret and sample efficiency.}
Under regularity conditions in the kernel and the appropriate choice of $\beta_t$, GP-UCB achieves a sublinear cumulative regret $\sum_{t=1}^T [f(\mathbf{x}^*) - f(\mathbf{x}_t)] = \mathcal{O}(\sqrt{T\gamma_T})$, where $\gamma_T$ is the maximum information gain of the kernel \citep{Srinivas2010GP-UCB,chowdhury2017kernelized}. This theoretical grounding is relevant to CBO because MCBO \citep{Sussex2022MCBO} and ACBO \citep{ACBOSussex} extend GP-UCB-style regret analysis to causal settings, obtaining bounds that can improve over the non-causal case when the causal graph induces a factored reward structure.

\paragraph{Beyond standard GPs.}
Although GPs remain the dominant surrogate in CBO, the broader BO literature also uses random forests, Bayesian neural networks, neural processes, and deep kernel learning \citep{shahriari2016taking,garnett2023bayesian}. These alternatives are not causal by themselves. They become CBO components only when the modeled target is a causal estimand, when observational data enter through an identified causal prior, or when the architecture respects graph or mechanism structure. Thus, surrogate choice encodes both a statistical model class and a causal modeling target: effect-level surrogates model $\theta\mapsto \mathbb{E}[Y\mid do(X_s=\theta)]$, whereas mechanism-level surrogates model structural equations and propagate uncertainty through the graph.

\subsection{Causal Bayesian Optimization}
\label{subsec:cbo_formulation}

In CBO, the manipulable variables are linked to the target through structural mechanisms rather than through a black-box input--output map. This causal structure requires and enables algorithmic choices that standard BO does not face: which variables to intervene on (the intervention scope), at what level, and how abundant observational data relate to interventional effects through causal identification. This coupling of combinatorial scope selection with continuous value optimization, guided by a causal graph, is what distinguishes CBO from black-box BO.

Section~\ref{subsec:core-cbo} formalizes the CBO problem, defines the key concepts of minimal and possibly-optimal intervention sets, presents the surrogate and acquisition design, and identifies the limitations that motivate the extensions reviewed in the remainder of Section~\ref{Methods}.

\section{Methodological Frameworks}
\label{Methods}

Causal Bayesian Optimization (CBO) is best viewed as a design space rather than a single algorithm.
All CBO methods instantiate a common loop: maintain a probabilistic model of candidate interventional effects, score candidate interventions with an acquisition or decision rule, select the next intervention or observation action, and update beliefs using newly obtained observational and/or interventional data. What differentiates methods is where the causal structure is injected into this loop and which uncertainty sources are modeled.

\paragraph{A unifying lens: design axes and assumptions.}
In the literature, most variants can be understood through five interacting axes (Figure~\ref{fig:cbo_categories}):
\begin{enumerate}
    \item \textbf{Graph and system-knowledge assumptions.} Known graph, unknown graph, graph uncertainty, causal sufficiency, hidden confounding, and identifiability assumptions.
    \item \textbf{Environmental assumptions.} Static, dynamic, contextual, constrained, adversarial or non-stationary, and multi-objective settings.
    \item \textbf{Intervention representation.} Hard interventions, soft mechanism modifications, intervention scopes, functional policies, context-dependent actions, and mixed discrete--continuous actions.
    \item \textbf{Surrogate architecture.} Direct effect-level surrogates, mechanism-level surrogates, graph-uncertain surrogates, multi-output models, and neural or deep-kernel variants.
    \item \textbf{Decision rule and budget allocation.} EI-/UCB-style acquisition, information-gain search, optimism over plausible models, bandit allocation across scopes, and online-learning rules.
\end{enumerate}
We use these axes to highlight what assumptions each method relaxes or introduces, what mathematical object changes relative to baseline CBO, and what statistical efficiency or computational cost follows.

\begin{figure}[!ht]
\centering
\resizebox{0.80\linewidth}{!}{%
\begin{tikzpicture}[
  font=\small\sffamily,
  arr/.style={-{Stealth[length=2.2mm]}, line width=0.9pt, draw=black!70},
  root/.style={draw=black!80, rounded corners=4pt, line width=0.9pt,
               fill=black!2, inner sep=6pt, align=left, text width=3.4cm},
  cat/.style={draw=black!70, rounded corners=4pt, line width=0.8pt,
              fill=black!1, inner sep=6pt, align=left, text width=4.6cm},
  lst/.style={draw=black!60, rounded corners=4pt, line width=0.8pt,
              fill=white, inner sep=6pt, align=left, text width=4.6cm},
  tight/.style={}
]

\node[root, tight] (cbo) at (0,0) {\textbf{Original CBO}\\[-1pt]\citep{aglietti2020CBO}};

\node[cat,  tight] (mod) at (5.2, 3.15) {\textbf{Modeling extensions}\\[-1pt]\footnotesize dynamics / constraints / high-dim / multi-objective};
\node[cat,  tight] (int) at (5.2, 1.05) {\textbf{Intervention representations}\\[-1pt]\footnotesize mechanism / policy / context / adversary};
\node[cat,  tight] (unk) at (5.2,-1.05) {\textbf{Unknown causal structure}\\[-1pt]\footnotesize learn / average graphs};
\node[cat,  tight] (rob) at (5.2,-3.15) {\textbf{Robustness to noise \& priors}\\[-1pt]\footnotesize noise-aware acquisition / learned priors};

\node[lst, tight] (modex) at (11.0, 3.15) {%
\textbf{DCBO} \citep{aglietti2021}\\
\textbf{cCBO} \citep{aglietti2023cCBO}\\
\textbf{HCBO} \citep{Wu2024HighDimensionalCBO}\\
\textbf{MO-CBO} \citep{MOCBOBhatija}
};

\node[lst, tight] (intex) at (11.0, 1.05) {%
\textbf{MCBO} \citep{Sussex2022MCBO}\\
\textbf{fCBO} \citep{gultchin2023FCBO}\\
\textbf{CoCaBO} \citep{ContextualCBO}\\
\textbf{ACBO} \citep{ACBOSussex}
};

\node[lst, tight] (unkex) at (11.0,-1.05) {%
\textbf{CEO} \citep{branchini2023CEO}\\
\textbf{GACBO} \citep{mukherjee2024}
};

\node[lst, tight] (robex) at (11.0,-3.15) {%
\textbf{CNEI / learned priors} \citep{Li2023}
};

\draw[arr] (cbo.east) -- (mod.west);
\draw[arr] (cbo.east) -- (int.west);
\draw[arr] (cbo.east) -- (unk.west);
\draw[arr] (cbo.east) -- (rob.west);

\draw[arr] (mod.east) -- (modex.west);
\draw[arr] (int.east) -- (intex.west);
\draw[arr] (unk.east) -- (unkex.west);
\draw[arr] (rob.east) -- (robex.west);

\end{tikzpicture}%
}
\caption{Causal Bayesian Optimization development map organized by the primary bottleneck addressed.}
\label{fig:cbo_categories}
\end{figure}

\paragraph{How we summarize methods.}
For each approach, we emphasize: (i) assumptions (graph knowledge, intervention type, observed variables); (ii) the main algorithmic lever (search-space reduction, surrogate design, acquisition/decision policy); and (iii) failure modes and scaling constraints.
This avoids re-stating paper narratives verbatim and instead clarifies how the methods relate at the level of design choices in the CBO loop.
To make this comparison fully systematic, every surveyed method closes with a \emph{formal summary} block whose four slots restate, as mathematics, the four columns of Table~\ref{tab:cbo-variants}: (i)~assumptions, (ii)~objective, (iii)~surrogate, and (iv)~acquisition. Slots whose defining equation already appears in the text point to it rather than repeating it.
Unless a statement is explicitly tied to benchmark evidence in Section~\ref{sec:experiments}, the ``main challenge'' and limitation remarks in this section are \emph{analytical}: they follow from each method's stated assumptions, surrogate level, and decision rule (see also the identification-strategy audit in Table~\ref{tab:ident-audit}), and should not be read as new empirical findings of this survey.

\paragraph{Common notation for method summaries.}
To make the method summaries precise and comparable, we use a single notation throughout this section: $\mathcal{G}$ denotes the causal graph, $X_s \subseteq X$ an intervention scope, $\boldsymbol{\theta}$ an intervention value or policy parameter, $Y$ the target outcome, and $\mathcal{D}_{\mathrm{obs}}$ and $\mathcal{D}_{\mathrm{int}}$ the observational and interventional datasets. Effect-level surrogates model the map $g_s(\boldsymbol{\theta}) = \mathbb{E}[Y \mid do(X_s{=}\boldsymbol{\theta})]$ directly, whereas mechanism-level surrogates place a model $\hat{f}_j$ on each structural equation $f_j(\mathrm{Pa}_j, U_j)$ and obtain the effect on $Y$ by propagating samples or uncertainty through $\mathcal{G}$. Actions are written in one of three equivalent forms depending on the method family: hard scope--value interventions $(X_s,\mathbf{x}_s)$ (CBO, DCBO, cCBO, HCBO, MO-CBO, CEO, GACBO, CNEI), mechanism-shift actions $\boldsymbol{\theta}\in\Theta$ that enter the structural equations as inputs (MCBO, ACBO), and policies $\pi$ (fCBO, CoCaBO). Objectives follow the minimization convention of Definition~\ref{def:cbo_problem} unless a method is inherently a maximization or regret formulation (cCBO, CoCaBO, GACBO, ACBO), in which case the conversion is immediate.

\subsection{Causal Bayesian Optimization: Core Framework}
\label{subsec:core-cbo}

The original CBO framework \citep{aglietti2020CBO} formalizes causal optimization as a search over interventions in a known causal graph. Let $X$ denote manipulable variables, $C$ non-manipulable context variables, and $Y$ the target outcome. An action consists of an intervention scope $X_s \subseteq X$ and an assignment $\mathbf{x}_s$ in the feasible intervention domain $\mathcal{D}(X_s)$.

\begin{Definition}[CBO objective and action space]\label{def:cbo_problem}
A CBO action is a pair $(X_s,\mathbf{x}_s)$ where $X_s \subseteq X$
is an intervention scope and $\mathbf{x}_s \in \mathcal{D}(X_s)$ is an assignment in the feasible intervention domain. The interventional objective is
\begin{equation}
f(X_s,\mathbf{x}_s) \;=\; \mathbb{E}\!\left[\,Y \mid do(X_s=\mathbf{x}_s)\,\right].
\end{equation}
The global causal optimization problem is
\begin{equation}
(X_s^\star,\mathbf{x}_s^\star)
\in
\argmin_{X_s \subseteq X,\; \mathbf{x}_s \in \mathcal{D}(X_s)}
\; f(X_s,\mathbf{x}_s),
\label{eq:cgo}
\end{equation}
with $\argmin$ replaced by $\argmax$ for maximization tasks.
\end{Definition}

Definition~\ref{def:cbo_problem} states the baseline CBO problem strictly according to its original formulation \citep{aglietti2020CBO}, which does not involve context variables. Context variables $C$ are introduced only in Section~\ref{subsec:intervention_developments} when discussing contextual extensions such as CoCaBO (Equation~\ref{eq:cocabo}). The objective in Definition~\ref{def:cbo_problem} is an interventional expectation: none of the methods surveyed in this section optimizes counterfactual quantities $Y_{\mathbf{x}}(u)$, which we treat only as a future direction (Section~2.1).

The key point is that the optimizer is free to choose the scope of the intervention, not only the
level. This yields a mixed discrete--continuous search problem and motivates two
core ideas in CBO: graph-based pruning of candidate scopes and observationally informed priors over
interventional effects.

\paragraph{Why causality changes the BO problem.}
A naive approach would flatten $(X_s,\mathbf{x}_s)$ into a single input vector and run the standard BO on the interventional objective.
This ignores the structural information encoded in the causal graph. Causal knowledge enables two advantages that standard BO cannot exploit: (i)~\emph{scope reduction}: identifying which variables need to be manipulated at all, thereby shrinking the combinatorial search; and (ii)~\emph{informative priors}: converting plentiful observational data into prior beliefs about interventional effects, which is possible only when identifiability holds (Section~\ref{subsec:causal_inference}).

\paragraph{Scope reduction: minimal intervention sets.}
A central insight of CBO is that the causal graph can prune intervention scopes that are structurally redundant. Two key notions formalize this:

\begin{Definition}[Minimal Intervention Set (MIS) \citep{lee2018structural,lee2019structural}]\label{def:mis}
An intervention scope $X_s \subseteq X$ is a \emph{minimal intervention set} for the target $Y$ if (i)~there exists a directed path from at least one variable in $X_s$ to $Y$ in the causal graph $\mathcal{G}$ that is not blocked by the intervention on the remaining variables in $X_s$, and (ii)~no proper subset $X_s' \subsetneq X_s$ satisfies condition~(i). Equivalently, $X_s$ is a minimal set of manipulable variables whose joint intervention can affect $Y$ through the graph.
\end{Definition}

\begin{Definition}[Possibly-Optimal MIS (POMIS) \citep{lee2018structural,lee2019structural}]\label{def:pomis}
A MIS $X_s$ is \emph{possibly optimal} if, given the observational distribution $P(V)$ and the causal graph $\mathcal{G}$, one cannot determine from observational evidence alone that $\max_{\mathbf{x}_s} \mathbb{E}[Y \mid do(X_s = \mathbf{x}_s)]$ is dominated by the optimal objective achievable under some other MIS $X_{s'}$. Formally, the set of POMIS is the subset of MISs that remains after eliminating scopes whose optimal interventional value is provably dominated under the available graph and distributional information.
\end{Definition}

By restricting the search to POMIS, CBO avoids wasting budget on scopes that are causally irrelevant (no path to $Y$) or observationally dominated (another scope provably achieves a better objective). This reduction is not merely computational; it changes the statistical problem by concentrating evaluations on scopes that the graph predicts can meaningfully move the outcome, and it is one of the main reasons CBO can be statistically more efficient than standard BO. It is also complementary to causal abstraction, which relates decisions across graphs at different granularities (Section~\ref{subsec:adjacent_fields}).

\paragraph{Surrogate modeling: effect-level GPs with causal priors.}
The CBO models the interventional objective for each candidate scope with an effect-level GP. The key innovation is the prior construction: when the interventional effect is identifiable, observational data are used to build an initial estimate of the interventional mean and its uncertainty:
\begin{align}
g_s(\mathbf{x}_s) &:= \mathbb{E}[Y \mid do(X_s=\mathbf{x}_s)],\\
g_s(\cdot) &\sim \mathcal{GP}\!\left(m_s(\cdot),\, k_s(\cdot,\cdot)\right), \label{eq:cbo_gp_prior}\\
m_s(\mathbf{x}_s) &\approx \widehat{\mathbb{E}}\!\left[Y \mid do(X_s=\mathbf{x}_s)\right],\\
k_s(\mathbf{x}_s,\mathbf{x}'_s)
&=
k_{\mathrm{RBF}}(\mathbf{x}_s,\mathbf{x}'_s)
+
\widehat{\sigma}_s(\mathbf{x}_s)\,\widehat{\sigma}_s(\mathbf{x}'_s),
\end{align}
where $k_{\mathrm{RBF}}(\mathbf{x},\mathbf{x}')=\exp\!\big(-\|\mathbf{x}-\mathbf{x}'\|^2/(2\ell^2)\big)$ and $\widehat{\sigma}_s(\mathbf{x}_s)$ is the uncertainty in the observationally-derived effect estimate. The observational data thus serve as prior information, not as additional labels, giving the surrogate a reasonable global shape before costly interventions begin.

\paragraph{Acquisition and the observation--intervention trade-off.}
CBO uses a cost-aware expected improvement to choose between pairs (scope, value):
\begin{equation}
\mathrm{CEI}_s(\mathbf{x}_s)
=
\frac{\mathbb{E}\big[(y^{+} - g_s(\mathbf{x}_s))_+\big]}
{\mathrm{Co}(X_s,\mathbf{x}_s)},
\label{eq:cei}
\end{equation}
where $y^{+}$ is the incumbent (the best value observed so far; we reserve $y^*$ for the reference optimum used in the evaluation metrics of Section~\ref{sec:experiments}) and $\mathrm{Co}(\cdot)$ is an intervention cost.
Beyond the standard exploration--exploitation trade-off, CBO introduces an \emph{observation--intervention trade-off}: the algorithm can decide whether to allocate budget to additional observational samples (cheap but potentially biased for interventional decisions) or to new interventions (expensive but causally informative). This framing is foundational: many later variants can be interpreted as changing \emph{what is uncertain} and therefore changing \emph{which data is worth acquiring}.

\begin{formalsummary}{CBO}{eq:cbo_summary}
\begin{align}
\fslot{assumptions}
  & \mathcal{G}\ \text{known DAG};\quad
    X_s\in\mathrm{POMIS}(\mathcal{G})\ \text{(Defs.~\ref{def:mis}--\ref{def:pomis})};\quad
    m_s\ \text{identifiable from } \mathcal{D}_{\mathrm{obs}}
    \label{eq:cbo_assum}\\
\fslot{objective}
  & \text{global causal optimum over hard interventions } (X_s,\mathbf{x}_s)\text{, Equation~\ref{eq:cgo}}
    \label{eq:cbo_objective}\\
\fslot{surrogate}
  & \text{effect-level GP with causal prior mean and inflated kernel, Equation~\ref{eq:cbo_gp_prior}}
    \label{eq:cbo_surrogate}\\
\fslot{acquisition}
  & \text{cost-aware causal EI with observe-or-intervene choice, Equation~\ref{eq:cei}}
\end{align}
\end{formalsummary}

\paragraph{The generic CBO loop.}
At each iteration, the CBO loop proceeds in three stages: (i) the surrogate predicts the causal effect for each candidate intervention (scope, value), (ii) the acquisition function scores candidates using the posterior predictions, and (iii) the highest-scoring intervention is selected and evaluated. Algorithm~\ref{alg:cbo_loop} summarizes the generic CBO procedure, highlighting where causal knowledge enters. Steps~1--2 exploit the graph \emph{before} the optimization loop; Steps~4--5 use a causal structure \emph{within} each iteration. This template applies, with variations, to all methods reviewed in the remainder of this section.

\begin{algorithm}[t]
\caption{Generic Causal Bayesian Optimization Loop}\label{alg:cbo_loop}
\begin{algorithmic}[1]
\Require Causal graph $\mathcal{G}$ (or prior over graphs), observational data $\mathcal{D}_{\mathrm{obs}}$, budget $T$
\Ensure Best intervention $(X_s^\star, \mathbf{x}_s^\star)$
\State \textbf{Scope reduction} {\small\color{gray}[causal]}: Use $\mathcal{G}$ to identify candidate scopes (MIS/POMIS)
\State \textbf{Prior construction} {\small\color{gray}[causal]}: Use $\mathcal{D}_{\mathrm{obs}}$ and identifiability to set GP prior means $\{m_s\}$
\For{$t = 1, \dots, T$}
    \State \textbf{Surrogate update}: Update GP posteriors $\{\mu_t^{(s)}, \sigma_t^{(s)}\}$ for each scope $s$
    \State \textbf{Acquisition} {\small\color{gray}[causal]}: Select $(X_s, \mathbf{x}_s) = \argmax_{s,\mathbf{x}} \alpha_s(\mathbf{x}; \mathcal{D}_t)$ across scopes
    \State \textbf{Evaluate}: Observe $y_t = f(X_s, \mathbf{x}_s) + \epsilon_t$ via intervention
    \State \textbf{Update}: Add $(X_s, \mathbf{x}_s, y_t)$ to interventional dataset $\mathcal{D}_t$
\EndFor
\State \Return $(X_s^\star, \mathbf{x}_s^\star)$ with best observed objective
\end{algorithmic}
\end{algorithm}

\paragraph{Limitations that motivate extensions.}
The modularity of CBO also exposes its bottlenecks: (i)~reliance on a correctly known graph and identifiable observational priors; (ii)~limited robustness to noise and prior misspecification; and (iii)~scaling challenges due to exponentially many candidate scopes and separate surrogates per scope. The remainder of this section can be read as a sequence of responses to these bottlenecks, organized by the design axis that each method primarily addresses.

\subsection{Modeling Extensions}
\label{subsec:modeling_extensions}

The original CBO formulation assumes a static causal system, a single objective, and an
intervention space that remains tractable after graph-based pruning. Modeling extensions relax these assumptions by changing what the optimizer must represent: time variation that renders the objective non-stationary, feasibility that restricts attention to a safe set, dimensionality that enlarges the scope of the space, or multiple competing goals that shift the target from a single optimum to a Pareto set, in each case keeping the same surrogate--acquisition loop while adding structure that decomposes the problem into smaller subproblems that share statistical strength.

\paragraph{Time-evolving causal systems (DCBO).}
Dynamic Causal Bayesian Optimization (DCBO) addresses environments where the best intervention is non-stationary, typically because mechanisms drift or the system state evolves \citep{aglietti2021}.
The main contribution is not a new acquisition rule, but a transfer mechanism. Under invariance assumptions such as a fixed graph and additive-noise structure, interventional knowledge from earlier time steps can be mapped into an informative prior for the current step. This treats time as a source of reusable evidence so that exploration can be amortized rather than restarted. The trade-off is the sensitivity of the assumption. If the graph or key mechanisms shift in ways not captured by the transfer recursion, the prior can become overconfident and bias the search, so DCBO is strongest when temporal variation is structured and not adversarial.

\begin{formalsummary}{DCBO}{eq:dcbo_summary}
\begin{align}
\fslot{assumptions}
  & \mathcal{G}_t=\mathcal{G}_0\ \ \forall t;\quad
    Y_t=f_Y^{\mathrm{p}}\big(\mathrm{Pa}^{\mathrm{T}}(Y_t)\big)
       +f_Y^{\mathrm{np}}\big(\mathrm{Pa}^{\mathrm{NT}}(Y_t)\big)+\epsilon_t;\quad
    \text{no hidden confounders in } \mathcal{G}_{0:T}
    \label{eq:dcbo_assum}\\
\fslot{objective}
  & (X_{s,t}^{\star},\mathbf{x}_{s,t}^{\star})\in
    \argmin_{X_{s,t}\subseteq X_t,\ \mathbf{x}\in\mathcal{D}(X_{s,t})}
    \mathbb{E}\big[Y_t\mid do(X_{s,t}{=}\mathbf{x}),\,I_{0:t-1}\big],
    \quad
    I_{0:t-1}=\textstyle\bigcup_{i=0}^{t-1}do\big(X_{s,i}^{\star}{=}\mathbf{x}_{s,i}^{\star}\big)
    \label{eq:dcbo_obj}\\
\fslot{surrogate}
  & g_{s,t}(\mathbf{x})=f_Y^{\mathrm{p}}\big(\mathbf{f}_{t-1}^{\star}\big)
    +\mathbb{E}_{p(\mathbf{w}\mid do(X_{s,t}=\mathbf{x}),\,I_{0:t-1})}
     \big[f_Y^{\mathrm{np}}(\mathbf{x},\mathbf{w})\big],
    \quad
    g_{s,t}\sim\mathcal{GP}\big(m_{s,t},\,k_{s,t}\big)
    \label{eq:dcbo_surr}\\
\fslot{acquisition}
  & \mathrm{CEI}_{s,t}(\mathbf{x})=
    \frac{\mathbb{E}\big[(y_t^{+}-g_{s,t}(\mathbf{x}))_{+}\big]}
         {\mathrm{Co}(X_{s,t},\mathbf{x})}
    \label{eq:dcbo_acq}
\end{align}
\end{formalsummary}
Here $\mathrm{Pa}^{\mathrm{T}}(Y_t)\subseteq\{Y_0,\dots,Y_{t-1}\}$ and $\mathrm{Pa}^{\mathrm{NT}}(Y_t)$ partition the parents of $Y_t$ into past targets and non-target variables, $\mathbf{f}_{t-1}^{\star}=\{\mathbb{E}[Y_i\mid do(X_{s,i}^{\star}{=}\mathbf{x}_{s,i}^{\star}),I_{0:i-1}]\}_{Y_i\in\mathrm{Pa}^{\mathrm{T}}(Y_t)}$ collects the optimal targets achieved at earlier steps, $\mathbf{w}$ denotes the remaining non-intervened variables, and $y_t^{+}$ is the best target value observed at time $t$ across scopes. The additive split in Equation~\ref{eq:dcbo_assum} is exactly what licenses the prior-transfer recursion in Equation~\ref{eq:dcbo_surr}: the prior mean $m_{s,t}$ is obtained by plugging $\mathbf{f}_{t-1}^{\star}$ and observational estimates into Equation~\ref{eq:dcbo_surr}, and the kernel $k_{s,t}$ mirrors the static causal prior of Equation~\ref{eq:cbo_gp_prior}.

\paragraph{Constraints and safe causal optimization (cCBO).}
Constrained Causal Bayesian Optimization (cCBO) targets settings where interventions must satisfy safety or feasibility constraints \citep{aglietti2023cCBO}. The key insight from the modeling is that constraints change the definition of a useful intervention scope. A scope that can move the objective but cannot maintain feasibility is effectively irrelevant; cCBO therefore extends graph-based pruning to constraint-relevant pathways through constrained minimal intervention sets, and it couples objective and constraints through multi-output surrogates so evidence about feasibility shapes where the objective surrogate should be trusted.
Formally, cCBO solves
\begin{equation}
(X_s^\star,\mathbf{x}_s^\star) \in \argmax_{X_s,\mathbf{x}_s} \; \mathbb{E}[Y \mid do(X_s{=}\mathbf{x}_s)] \quad \text{s.t.}\quad \Pr\!\big(c_j(X_s,\mathbf{x}_s)\le 0 \;\ \forall j\big)\ge 1{-}\delta,
\label{eq:ccbo}
\end{equation}
where each $c_j$ is a constraint function and $\delta$ controls the tolerated feasibility violation. In the original formulation, the constraints are stated as thresholds on interventional constraint means, $\mathbb{E}[C_j\mid do(X_s{=}\mathbf{x}_s)]\ge\lambda_j$; Equation~\ref{eq:ccbo} recovers this form with $c_j(X_s,\mathbf{x}_s):=\lambda_j-\mathbb{E}[C_j\mid do(X_s{=}\mathbf{x}_s)]$.
In practice, this often acts as a regularizer that prevents spending the budget in regions that look promising under the objective surrogate but are unlikely to be deployable. The main limitation is scaling with the number of constraints and candidate scopes, although structural pruning mitigates this in graphs with sparse causal connectivity.

\begin{formalsummary}{cCBO}{eq:ccbo_summary}
\begin{align}
\fslot{assumptions}
  & \mathcal{G}\ \text{known DAG};\quad
    \text{constraint variables } C_1,\dots,C_J \text{ embedded in } \mathcal{G};
    \notag\\
  & \quad \text{scopes pruned to constraint-aware minimal intervention sets}
    \\
\fslot{objective}
  & \text{constrained interventional optimum, Equation~\ref{eq:ccbo}}
    \\
\fslot{surrogate}
  & \big(g_s,\,c_{s,1},\dots,c_{s,J}\big)\sim\mathcal{GP}\big(\mathbf{m}_s,\,\mathbf{K}_s\big),
    \quad
    [\mathbf{K}_s]_{ab}=\operatorname{Cov}\!\big(h_a(\mathbf{x}_s),\,h_b(\mathbf{x}'_s)\big)
    \label{eq:ccbo_surrogate}\\
\fslot{acquisition}
  & \mathrm{cEI}_s(\mathbf{x}_s)=
    \frac{\mathbb{E}\Big[\big(g_s(\mathbf{x}_s)-y^{+}_{\mathrm{feas}}\big)_{+}\,
    \textstyle\prod_{j=1}^{J}\mathbb{I}\{c_{s,j}(\mathbf{x}_s)\le 0\}\Big]}
         {\mathrm{Co}(X_s,\mathbf{x}_s)}
    \label{eq:ccbo_cei}
\end{align}
\end{formalsummary}
In Equation~\ref{eq:ccbo_surrogate}, $h_0{=}g_s$ and $h_j{=}c_{s,j}$ index the outputs of the multi-output GP. The expectation in Equation~\ref{eq:ccbo_cei} is taken under this joint posterior, whose cross-output covariances are either learned (multi-task GP) or derived from the shared structural equations under intervention ($\mathcal{G}$-informed multi-task GP); $y^{+}_{\mathrm{feas}}$ is the best \emph{feasible} incumbent in $\mathcal{D}_{\mathrm{int}}$, and the cost normalization $\mathrm{Co}(\cdot)$ replaces the scope-size normalization $|X_s|$ used in the original paper, for uniformity with Equation~\ref{eq:cei}.

\paragraph{High-dimensional intervention spaces (HCBO).}
High-Dimensional Causal Bayesian Optimization (HCBO) addresses the combinatorial bottleneck of scope selection in large graphs \citep{Wu2024HighDimensionalCBO}. Its central move is to replace exact enumeration of minimal or possibly-optimal scopes with an efficient coverage-based approximation.
Specifically, HCBO selects a scope family $\mathcal{S}^\star \subseteq 2^X$ by solving a coverage problem that approximately maximizes the fraction of causal pathways to $Y$ that are ``covered'' by at least one scope in $\mathcal{S}^\star$. Let $\mathrm{Paths}(\mathcal{G}, Y)$ denote the set of directed paths from manipulable variables to $Y$. HCBO seeks
\begin{equation}
\mathcal{S}^\star \in \argmax_{\mathcal{S} \subseteq 2^X,\, |\mathcal{S}|\le K} \;\;\bigl|\{p \in \mathrm{Paths}(\mathcal{G},Y) : \exists\, X_s \in \mathcal{S},\; X_s \cap p \ne \varnothing\}\bigr|,
\label{eq:hcbo_coverage}
\end{equation}
where $K$ is a user-specified scope budget.
This reframes high-dimensional CBO as a hierarchical problem: select a promising scope family, then run continuous BO within the chosen scopes. HCBO also introduces normalized cross-scope scoring so that surrogates trained on different scopes can be compared in a meaningful way despite scale and data imbalance.
The cost of scalability is that any coverage proxy can miss narrow, high-impact pathways, so HCBO is most reliable when the system exhibits causal sparsity or a low intrinsic causal dimension.

\begin{formalsummary}{HCBO}{eq:hcbo_summary}
\begin{align}
\fslot{assumptions}
  & \mathcal{G}\ \text{known DAG};\quad
    \text{causal sparsity: the intrinsic causal dimension is small}
    \notag\\
  & \quad \text{(the minimal covering } K \text{ in Equation~\ref{eq:hcbo_coverage})}
   \\
\fslot{objective}
  & \text{Equation~\ref{eq:cgo} restricted to the scope family } \mathcal{S}^\star \text{ of Equation~\ref{eq:hcbo_coverage}}
    \\
\fslot{surrogate}
  & \text{per-scope effect-level GPs (Equation~\ref{eq:cbo_gp_prior}) with posteriors } \big(\mu^{(s)}_t,\sigma^{(s)}_t\big)
    \\
\fslot{acquisition}
  & (X_s,\mathbf{x}_s)_{t+1}\in
    \argmax_{X_s\in\mathcal{S}^\star,\ \mathbf{x}_s\in\mathcal{D}(X_s)}\;
    \frac{\mu^{(s)}_t(\mathbf{x}_s)+\beta_t^{1/2}\,\sigma^{(s)}_t(\mathbf{x}_s)-b^{(s)}_t}{r^{(s)}_t}
    \label{eq:hcbo_score}
\end{align}
\end{formalsummary}
Equation~\ref{eq:hcbo_score} states the cross-scope scoring rule in schematic form: $b^{(s)}_t$ and $r^{(s)}_t$ are per-scope location and scale statistics that place UCB scores from surrogates fitted on different scopes and data volumes on a common scale, following the normalized scope-comparison scheme of \citet{Wu2024HighDimensionalCBO}.

\paragraph{Multiple outcomes and Pareto structure (MO-CBO).}
Multi-Objective Causal Bayesian Optimization (MO-CBO) generalizes CBO to settings with multiple targets, where the goal is to approximate a Pareto front rather than a single optimum
\citep{MOCBOBhatija}. The key observation is that causal structure is especially valuable when
objectives compete. Intervening on everything can obscure trade-offs by coupling variables that need not be coupled. MO-CBO uses the graph to reduce the search to a minimal collection of local multi-objective subproblems (Equation~\ref{eq:mocbo_pareto}) and then allocates evaluation budget across them using a cross-scope criterion based on relative hypervolume improvement (Equation~\ref{eq:mocbo_acq}). Within each local problem, standard multi-objective BO components can be used, for example, DGEMO \citep{DGEMO:Lukovic2020}. The main challenge remaining is combinatorial. The number of local subproblems can still grow quickly with dense graphs or many objectives, motivating stronger cross-scope sharing and joint surrogates that exploit causal factorization across objectives.

\begin{formalsummary}{MO-CBO}{eq:mocbo_summary}
\begin{align}
\fslot{assumptions}
  & \mathcal{G}\ \text{known multi-target DAG};\quad
    \mathbf{Y}=(Y_1,\dots,Y_m);\quad
    \boldsymbol{\mu}(X_s,\mathbf{x}_s)=\big(\mathbb{E}[Y_i\mid do(X_s{=}\mathbf{x}_s)]\big)_{i=1}^{m}
    \label{eq:mocbo_assum}\\
\fslot{objective}
  & \mathcal{P}^{\mathrm{c}}(\mathcal{S})=\big\{\boldsymbol{\mu}(X_s,\mathbf{x}_s):
    \nexists\,(X_{s'},\mathbf{x}_{s'})\ \text{with}\
    \boldsymbol{\mu}(X_{s'},\mathbf{x}_{s'})\prec\boldsymbol{\mu}(X_s,\mathbf{x}_s)\big\},
    \quad
    \mathcal{P}^{\mathrm{c}}(\mathcal{S})\subseteq\textstyle\bigcup_{X_s\in\mathcal{S}}\mathcal{P}^{\ell}(X_s)
    \label{eq:mocbo_pareto}\\
\fslot{surrogate}
  & \hat{\mu}_i^{(s)}\sim\mathcal{GP}\big(m_i^{(s)},\,k_i^{(s)}\big)
    \quad\text{independently for each objective } i \text{ and each scope } X_s\in\mathcal{S}
    \label{eq:mocbo_surr}\\
\fslot{acquisition}
  & \mathrm{RHVI}(X_s,B_s)=
    \frac{\mathcal{H}\big(\mathcal{P}^{\ell}(X_s)\cup\boldsymbol{\mu}(X_s,B_s)\big)
         -\mathcal{H}\big(\mathcal{P}^{\ell}(X_s)\big)}
         {\mathcal{H}\big(\mathcal{P}^{\ell}(X_s)\big)},
    \quad
    B^{\star}\in\argmax_{X_s\in\mathcal{S},\,B_s}\mathrm{RHVI}(X_s,B_s)
    \label{eq:mocbo_acq}
\end{align}
\end{formalsummary}
Here $\prec$ denotes Pareto dominance under minimization (weakly smaller in every coordinate of $\boldsymbol{\mu}$, strictly smaller in at least one), $\mathcal{S}$ is the family of possibly-Pareto-optimal minimal intervention sets (the multi-target analogue of POMIS, characterized graphically in \citealp{MOCBOBhatija}), $\mathcal{P}^{\ell}(X_s)$ is the local Pareto front of the subproblem restricted to scope $X_s$, $B_s$ is a candidate evaluation batch, and $\mathcal{H}$ is the hypervolume indicator with respect to a fixed reference point.

\subsection{Intervention Strategy Developments}
\label{subsec:intervention_developments}

Beyond modeling extensions, another major line of work changes what an intervention looks like and how decisions are made during optimization. The common motivation is that treating each intervention as a single black-box query can waste information. These methods modify the BO loop so it can reuse causal structure inside the system, express richer actions such as policies, and stay meaningful under contextual variation or non-stationarity.

Two orthogonal design choices appear in this section and should not be conflated: expanding the \emph{intervention representation} (fCBO, CoCaBO) and changing the \emph{surrogate architecture} (MCBO, ACBO) are independent axes---mechanism-level surrogates can be paired with only hard interventions, and effect-level surrogates with soft interventions. Relatedly, which uncertainty dominates a method is not a design choice but a property \emph{induced} by the system-knowledge assumptions (known versus unknown graph, causal sufficiency, known feasible set); the taxonomy therefore lists system-knowledge assumptions as the primary axis, and the discussion below identifies the induced uncertainty for each method.

Two further clarifications: CoCaBO and ACBO also differ on the \emph{environmental} axis (contextual versus adversarial or non-stationary), so grouping them here reflects shared machinery, not identical assumptions. And a closely related formulation outside the SCM tradition is Bayesian optimization of function networks \citep{astudillo2021functionnetworks}, which optimizes over a known computational DAG with observed intermediate values; it anticipates the mechanism-level, soft-intervention view of MCBO and ACBO, its benchmarks are inherited by our soft-intervention benchmark (Section~\ref{subsec:soft_benchmark}), and its key difference from SCM-based CBO is that function-network inputs are causally independent decision variables rather than variables embedded in a causal system.

\paragraph{Mechanism-level modeling with guarantees (MCBO).}
Model-based Causal Bayesian Optimization (MCBO) models the causal system at the level of structural mechanisms rather than only the intervention-to-outcome effect \citep{Sussex2022MCBO}. This changes data reuse: observations of intermediate variables update upstream mechanisms and therefore tighten uncertainty about downstream outcomes, which can reduce the number of costly interventions needed to identify good actions. The main limitations are practical. Mechanism posteriors must be maintained for multiple equations, uncertainty propagation increases computational cost, and acquisition optimization can be challenging. MCBO also assumes a known graph and a mechanism model class that is expressive enough for the system at hand.

Formally, each mechanism receives an independent GP prior, $\hat{f}_j \sim \mathcal{GP}(m_j, k_j)$, and the outcome of an intervention $\boldsymbol{\theta}$ (a hard clamp or soft mechanism shift) is obtained by propagating through the graph in topological order:
\begin{equation}
V_j = \hat{f}_j\big(\mathrm{Pa}_j(\boldsymbol{\theta}),\, U_j\big) \;\;\text{for each non-intervened node,}
\qquad
\hat{Y}(\boldsymbol{\theta}) = \hat{f}_Y\big(\mathrm{Pa}_Y(\boldsymbol{\theta}),\, U_Y\big),
\label{eq:mcbo_propagation}
\end{equation}
where each parent value $\mathrm{Pa}_j(\boldsymbol{\theta})$ is itself generated by upstream mechanism models under the intervention. The decision rule is optimism-based over a calibrated set of plausible mechanism vectors:
\begin{equation}
\mathcal{F}_t=\Big\{\tilde F=(\tilde f_j)_j \;:\; \tilde f_j\in\mathcal{H}_{k_j},\ \
\|\tilde f_j\|_{k_j}\le B_j,\ \
\big|\tilde f_j(\cdot)-\mu_{j,t-1}(\cdot)\big|\le \beta_t\,\sigma_{j,t-1}(\cdot)\ \ \forall j\Big\},
\label{eq:mcbo_modelset}
\end{equation}
\begin{equation}
\boldsymbol{\theta}_t\in\argmax_{\boldsymbol{\theta}\in\Theta}\;\mathrm{ucb}_t(\boldsymbol{\theta}),
\qquad
\mathrm{ucb}_t(\boldsymbol{\theta})=\max_{\tilde F\in\mathcal{F}_t}\;
\mathbb{E}_{U}\big[\hat{Y}(\boldsymbol{\theta})\mid \tilde F\big],
\label{eq:mcbo_ucb}
\end{equation}
where $(\mu_{j,t-1},\sigma_{j,t-1})$ are the mechanism-GP posteriors, $\beta_t$ is calibrated so that the true mechanism vector lies in $\mathcal{F}_t$ with probability at least $1-\delta$, and the inner maximization is made tractable by the reparameterization $\tilde f_j=\mu_{j,t-1}+\beta_t\,\sigma_{j,t-1}\,\eta_j$ with $\eta_j(\cdot)\in[-1,1]$. This optimism-based rule, applied to the propagated outcome of Equation~\ref{eq:mcbo_propagation}, yields the cumulative-regret guarantees established in \citet{Sussex2022MCBO}.

Compared to effect-level CBO, MCBO has two additional trade-offs that deserve explicit discussion. First, maintaining $|V|$ separate mechanism-level GPs and propagating posterior uncertainty through the full graph at every acquisition step is computationally far more expensive than fitting a single effect-level GP, limiting current implementations to graphs with tens of nodes (see also Section~\ref{subsec:scalability}). Second, mechanism-level models are highly vulnerable to unobserved confounders between intermediate nodes: if a hidden confounder links two internal variables, the GP for the downstream mechanism will be fitted on systematically biased inputs, and this bias propagates through the graph. Effect-level CBO can mitigate the same issue more simply by discarding the biased observational prior and proceeding with an uninformative GP prior.

A related nuance concerns intermediate observations under effect-level models. The claim that effect-level models ``cannot reuse intermediate observations'' is only strictly true for the GP posterior itself: a direct $X_s\to Y$ surrogate does not condition on intermediate node values. However, intermediate observations collected during interventional trials can be appended to the observational dataset and used to re-estimate or refine the observationally-derived prior mean $m_s(\cdot)$ at each step. This indirect reuse does not provide the same per-mechanism uncertainty reduction as MCBO, but it is a practical pathway that existing effect-level implementations could exploit.

\begin{formalsummary}{MCBO}{eq:mcbo_summary}
\begin{align}
\fslot{assumptions}
  & \mathcal{G}\ \text{known DAG; causal sufficiency (independent noises } U_j\text{)};
    \notag\\
  & \quad \text{mechanisms } f_j\in\mathcal{H}_{k_j}\ \text{in the plausible set } \mathcal{F}_t\ \text{of Equation~\ref{eq:mcbo_modelset}}
    \\
\fslot{objective}
  & \max_{\boldsymbol{\theta}\in\Theta}\ \mathbb{E}\big[Y\mid\boldsymbol{\theta}\big]
    \quad\text{(hard clamps or soft mechanism shifts; static environment)}
    \label{eq:mcbo_obj}\\
\fslot{surrogate}
  & \text{mechanism-level GPs propagated through } \mathcal{G}\text{ in topological order, Equation~\ref{eq:mcbo_propagation}}
    \\
\fslot{acquisition}
  & \text{optimistic causal UCB over } \mathcal{F}_t\text{ with regret guarantees, Equation~\ref{eq:mcbo_ucb}}
\end{align}
\end{formalsummary}

\paragraph{Policies as interventions (fCBO).}
Functional Causal Bayesian Optimization (fCBO) expands the action space from selecting a value to selecting a policy that maps the observed context to an action \citep{gultchin2023FCBO}. This is motivated by domains such as personalized treatment, where a single fixed intervention is rarely optimal for all individuals. fCBO makes policy search tractable by defining a kernel over policies using RKHS tools and running Bayesian optimization in this function space with a functional expected improvement rule. The key contribution is that it unifies causal optimization and policy learning within a single sequential procedure, where the surrogate models policy-to-outcome effects and the acquisition balances exploration across qualitatively different decision rules. The main
challenge is representation. Because the policy space is large, empirical performance depends
strongly on the chosen parameterization and on whether the kernel captures a meaningful similarity between policies.

\begin{formalsummary}{fCBO}{eq:fcbo_summary}
\begin{align}
\fslot{assumptions}
  & \mathcal{G}\ \text{known DAG};\quad
    \text{functional interventions } do(X_s{=}\pi(Z)) \text{ replace the mechanism of } X_s;
    \notag\\
  & \quad \pi \text{ in a bounded ball } \Pi_s\subset\mathcal{H}_{\kappa_s}
    \\
\fslot{objective}
  & \pi^{\star}\in\argmin_{\pi\in\Pi_s}\;
    \mathbb{E}\big[\,Y\mid do(X_s=\pi(Z))\,\big]
    \quad\text{jointly with } X_s \text{ over non-redundant scopes}
    \label{eq:fcbo_objective}\\
\fslot{surrogate}
  & g_s(\pi)\sim\mathcal{GP}\big(m_s(\pi),\,k_s(\pi,\pi')\big),
    \qquad
    k_s(\pi,\pi')=k_{\mathrm{RBF}}\big(\|\pi-\pi'\|_{\mathcal{H}_{\kappa_s}}\big)
    \label{eq:fcbo_surrogate}\\
\fslot{acquisition}
  & \mathrm{fEI}_s(\pi)=
    \frac{\mathbb{E}\big[(y^{+}-g_s(\pi))_{+}\big]}{\mathrm{Co}(X_s,\pi)},
    \qquad
    (X_s,\pi)_{t+1}\in\argmax_{X_s,\ \pi\in\Pi_s}\;\mathrm{fEI}_s(\pi)
    \label{eq:fcbo_fei}
\end{align}
\end{formalsummary}
The original formulation optimizes jointly over scope and policy and allows mixed scopes that combine hard components with functional components, $\Pi_s=\mathbb{R}^{|s_{\mathrm{hard}}|}\times\mathcal{B}(\mathcal{H}_{\kappa_s})$; correspondingly, the policy kernel in Equation~\ref{eq:fcbo_surrogate} is in general a composite of a Euclidean distance on hard components and the RKHS distance on functional components \citep{gultchin2023FCBO}.

\paragraph{Context-dependent scope selection (CoCaBO).}
Contextual Causal Bayesian Optimization (CoCaBO) targets settings where the exogenous context affects outcomes and where the best intervention depends on the realized context \citep{ContextualCBO}. The additional difficulty is that one must decide not only which action to take, but also which contextual variables should be used to condition decisions. CoCaBO separates these levels: an outer bandit allocates budget across candidate \emph{mixed policy scopes} $\mathcal{M}_k = (X_{s_k}, C_{s_k})$, each pairing an intervention scope with a context subset, while an inner mixed-variate Bayesian optimization routine such as HEBO \citep{HEBO:2022} optimizes within the selected scope. CoCaBO optimizes a context-dependent objective of the form
\begin{equation}
\max_{\pi: \mathcal{C} \to \mathcal{X}} \;\mathbb{E}_{C}\!\left[\mathbb{E}\!\left[Y \mid do(X_{s_k}=\pi(C_{s_k})),\, C\right]\right],
\label{eq:cocabo}
\end{equation}
where $\pi$ maps observed context values to intervention values within the chosen scope, and the outer expectation is over the context distribution.
This architecture makes explicit that context introduces a discrete model selection problem on top of continuous optimization. The main limitations are derived from the hierarchy. The outer allocation must spend trials to distinguish scopes, and improvements learned within one scope do not necessarily transfer to others.

\begin{formalsummary}{CoCaBO}{eq:cocabo_summary}
\begin{align}
\fslot{assumptions}
  & \mathcal{G}\ \text{known DAG; observed exogenous context } C;
    \notag\\
  & \quad \text{actions are mixed policy scopes } \mathcal{M}_k=(X_{s_k},C_{s_k}) \text{ pruned to possibly-optimal scopes}
    \\
\fslot{objective}
  & \text{contextual policy objective, Equation~\ref{eq:cocabo}}
    \\
\fslot{surrogate}
  & g_k(\mathbf{x},\mathbf{c})\sim
    \mathcal{GP}\big(0,\ k_k\big((\mathbf{x},\mathbf{c}),(\mathbf{x}',\mathbf{c}')\big)\big)
    \quad\text{for each candidate scope } \mathcal{M}_k
    \label{eq:cocabo_gp}\\
\fslot{acquisition}
  & k_t\in\argmax_{k}\;
    \Big[\,\bar{U}_k(t)+\rho_k\big(n_k(t)\big)\big/\sqrt{n_k(t)}\,\Big],
    \quad\text{then mixed-variate BO within } \mathcal{M}_{k_t}
    \label{eq:cocabo_ucb}
\end{align}
\end{formalsummary}
In Equation~\ref{eq:cocabo_ucb}, $n_k(t)$ is the number of rounds allocated to $\mathcal{M}_k$, $\bar{U}_k(t)$ is the running average of the inner optimizer's UCB values under $\mathcal{M}_k$ (the arm statistic averages optimistic scores rather than raw rewards), and $\rho_k$ is an exploration width chosen to obtain $\tilde{\mathcal{O}}(\sqrt{mT})$ cumulative regret over $m$ candidate scopes \citep{ContextualCBO}.

\paragraph{Non-stationarity and adversarial environments (ACBO).}
Adversarial Causal Bayesian Optimization (ACBO) addresses cases where the system response changes over time or is influenced by an adversary that can also intervene in the system \citep{ACBOSussex}.
ACBO combines multiplicative weight updates from adversarial online learning \citep{MWLITTLESTONE:1994,MWFREUND:1997} with causal modeling to construct calibrated optimistic reward estimates via a causal UCB oracle. A central benefit of using the causal graph is that it can reduce regret dependence on the nominal action dimension when rewards factor through causal compositions, improving over non-causal analogs such as GP-MW \citep{Sessa:2019}. The main challenges are computational and statistical. The oracle can be expensive to optimize, and in non-stationary regimes, the method relies on uncertainty sets that must track drift accurately to avoid misleading optimism.

Formally, writing $\boldsymbol{\theta}'_t$ for the adversary's action at round $t$ and $r(\boldsymbol{\theta},\boldsymbol{\theta}')=\mathbb{E}[Y\mid\boldsymbol{\theta},\boldsymbol{\theta}']$ for the expected reward, ACBO targets the adversarial regret
\begin{equation}
R(T)=\max_{\boldsymbol{\theta}\in\Theta}\sum_{t=1}^{T} r(\boldsymbol{\theta},\boldsymbol{\theta}'_t)
\;-\;\sum_{t=1}^{T} r(\boldsymbol{\theta}_t,\boldsymbol{\theta}'_t),
\label{eq:acbo_regret}
\end{equation}
and selects actions from a multiplicative-weights distribution over a discretized action set $\Theta$, fed by optimistic causal-UCB reward estimates:
\begin{equation}
\boldsymbol{\theta}_t\sim w_t,
\qquad
w_{t+1}(\boldsymbol{\theta})\;\propto\;w_t(\boldsymbol{\theta})\,
\exp\!\Big(\tau\,\mathrm{UCB}^{\mathcal{G}}_t(\boldsymbol{\theta},\boldsymbol{\theta}'_t)\Big),
\qquad
\mathrm{UCB}^{\mathcal{G}}_t(\boldsymbol{\theta},\boldsymbol{\theta}')
=\max_{\tilde F\in\mathcal{F}_t}\;\mathbb{E}_{U}\big[Y\mid \tilde F,\boldsymbol{\theta},\boldsymbol{\theta}'\big],
\label{eq:acbo_mw}
\end{equation}
with learning rate $\tau=\sqrt{8\log|\Theta|/T}$ and the plausible mechanism set $\mathcal{F}_t$ of Equation~\ref{eq:mcbo_modelset}, whose mechanisms now take the adversary's action as an additional input; the inner maximization uses the same mechanism-confidence reparameterization as MCBO \citep{ACBOSussex}. No-regret means $R(T)/T\to 0$ for any adversary sequence.

\begin{formalsummary}{ACBO}{eq:acbo_summary}
\begin{align}
\fslot{assumptions}
  & \mathcal{G}\ \text{known DAG; mechanism-level model class and plausible set } \mathcal{F}_t\ \text{(Equation~\ref{eq:mcbo_modelset}),}
    \notag\\
  & \quad \text{extended by the adversary input } \boldsymbol{\theta}'
\\
\fslot{objective}
  & \text{no-regret against the best fixed intervention, Equation~\ref{eq:acbo_regret}}
    \\
\fslot{surrogate}
  & \text{mechanism-level propagation, Equation~\ref{eq:mcbo_propagation}, inputs extended by } (\boldsymbol{\theta},\boldsymbol{\theta}')
    \\
\fslot{acquisition}
  & \text{multiplicative weights with optimistic causal-UCB rewards, Equation~\ref{eq:acbo_mw}}
\end{align}
\end{formalsummary}

\subsection{Unknown causal structure}
\label{subsec:unknown_structure}

A major practical barrier for Causal Bayesian Optimization is that many applications do not provide a trusted causal graph in advance. In this regime, the optimization loop must allocate a limited intervention budget across two coupled goals: improving the outcome and reducing the ambiguity about which causal explanations remain consistent with the data. This introduces an additional uncertainty source, structure uncertainty, on top of uncertainty about causal mechanisms and observation noise.

To keep terminology consistent throughout the paper, we use \emph{unknown graph} for the assumption class in which the learner is not given the causal graph; \emph{graph uncertainty} (used interchangeably with \emph{structure uncertainty}) for the learner's remaining uncertainty over graph structure, however it is represented (a posterior, a confidence set, or a candidate list); and \emph{graph misspecification} for the distinct situation in which the learner is given, and trusts, a single graph that differs from the true one. The methods in this subsection address the first two; graph misspecification is instead probed by the stress tests in Appendix~\ref{appendix:robustness_stress_tests}.
Existing approaches mainly differ in how they represent uncertainty over candidate graphs and in whether they explicitly target the decision-relevant structure, that is, the parts of the graph that materially change which intervention should be selected.

\paragraph{Information-seeking joint learning and optimization (CEO).}
Causal Entropy Optimization, CEO, couples structure learning and optimization using an explicitly information-seeking decision rule \citep{branchini2023CEO}. The method maintains a posterior over candidate graphs and averages interventional predictions across it (Equation~\ref{eq:ceo_surr}). The next intervention is then chosen to maximize the expected information gain about the joint of the optimal outcome value and the graph (Equation~\ref{eq:ceo_acq}), rather than to maximize the predicted reward alone. This focus has an important practical consequence. CEO is not optimized to recover a globally accurate graph. It spends budget on experiments that resolve the structural ambiguities that matter to decide how to act so that it can focus on a decision-relevant subgraph for the target $Y$.

\begin{formalsummary}{CEO}{eq:ceo_summary}
\begin{align}
\fslot{assumptions}
  & \mathcal{G}\ \text{unknown};\quad
    G\in\{\mathcal{G}^{(1)},\dots,\mathcal{G}^{(R)}\},\ \text{prior } P(G);
    \notag\\
  & \quad V_j=f_j\big(\mathrm{Pa}_j^{G}\big)+\epsilon_j,\ \
    f_j\sim\mathcal{GP}(0,k_j),\ \epsilon_j\sim\mathcal{N}(0,\sigma_j^2)
    \label{eq:ceo_assum}\\
\fslot{objective}
  & \text{Equation~\ref{eq:cgo} in the (unknown) true graph;}
    \quad
    P(G\mid\mathcal{D})\,\propto\,p(\mathcal{D}\mid G)\,P(G)
    \label{eq:ceo_posterior}\\
\fslot{surrogate}
  & m_s(\mathbf{x})=\mathbb{E}_{P(G\mid\mathcal{D})}\!\big[\widehat{\mathbb{E}}[Y\mid do(X_s{=}\mathbf{x}),G]\big],
    \notag\\
  & \quad
    \widehat{\mathbb{V}}(\mathbf{x})
    =\mathbb{E}_{P(G\mid\mathcal{D})}\!\big[\widehat{\mathbb{V}}[Y\mid do(X_s{=}\mathbf{x}),G]\big]
    +\mathbb{V}_{P(G\mid\mathcal{D})}\!\big[\widehat{\mathbb{E}}[Y\mid do(X_s{=}\mathbf{x}),G]\big]
    \label{eq:ceo_surr}\\
\fslot{acquisition}
  & \alpha_{\mathrm{CES}}(X_s,\mathbf{x})
    =\tfrac{1}{\mathrm{Co}(X_s,\mathbf{x})}\,
    \mathbb{E}_{p(\mathbf{v}_Y,\,y\,\mid\,do(X_s=\mathbf{x}),\,\mathcal{D})}
    \Big[\mathbb{H}\big[p(y^{\star}\!,G\mid\mathcal{D})\big]
    \notag\\
  & \hspace{11.2em}
        -\mathbb{H}\big[p(y^{\star}\!,G\mid\mathcal{D}\cup\{(\mathbf{x},\mathbf{v}_Y,y)\})\big]\Big]
    \label{eq:ceo_acq}
\end{align}
\end{formalsummary}
The graph posterior in Equation~\ref{eq:ceo_posterior} is available in closed form because the GP marginal likelihoods are tractable. The entropy $\mathbb{H}$ in Equation~\ref{eq:ceo_acq} is over the \emph{joint} of the global optimal value $y^{\star}$ and the graph $G$, so a single experiment is credited both for shrinking structure uncertainty and for localizing the optimum; $\mathbf{v}_Y$ denotes the observed non-target variables of the interventional sample. Because $p(y^{\star}\mid\mathcal{D})$ has no closed form, CEO approximates it by a mixture over which scope hosts the optimum,
\begin{equation}
p(y^{\star}\mid\mathcal{D})\;\approx\;
\sum_{X_s\in\mathrm{ES}} P\big(X_s=X^{\star}\big)\;p\big(y_s^{\star}\mid\mathcal{D}\big),
\label{eq:ceo_mixture}
\end{equation}
with set-level weights maintained by a UCB-style bandit rule over the exploration set $\mathrm{ES}$ of candidate scopes.

The main limitation is computational overhead. Approximating a posterior over graphs typically
requires sampling in a large discrete space, and estimating information gain must be repeated across many candidate interventions (the outer expectation in Equation~\ref{eq:ceo_acq}). As a result, CEO is most practical when the graph is small to medium or when strong structural priors substantially narrow the set of plausible graphs.

\paragraph{Optimism over plausible models (GACBO).}
Graph-Agnostic Causal Bayesian Optimization, GACBO, follows a more algorithmic route that emphasizes scalability \citep{mukherjee2024}. Rather than choosing interventions by entropy reduction, it maintains a confidence set of causal models that remain statistically plausible given the data and selects interventions that could be optimal under at least one model in this set.
Formally, let $\mathcal{C}_t$ denote the confidence set of plausible causal models (graph--mechanism pairs) at round $t$. GACBO selects the next intervention by solving
\begin{equation}
(X_s, \mathbf{x}_s) \in \argmax_{X_s,\mathbf{x}_s}\; \max_{M \in \mathcal{C}_t}\; \hat{f}_M(X_s,\mathbf{x}_s),
\label{eq:gacbo}
\end{equation}
where $\hat{f}_M$ is the predicted interventional objective under model $M$, i.e., $\hat f_M(X_s,\mathbf{x}_s)=\mathbb{E}[Y\mid \tilde F^{\mathcal{G}'},\,do(X_s{=}\mathbf{x}_s)]$ for a graph--mechanism pair $M=(\mathcal{G}',\tilde F^{\mathcal{G}'})$. This ``optimism in the face of uncertainty'' principle selects interventions that look promising under \emph{some} plausible causal explanation, which drives exploration toward experiments that can eliminate incorrect models.
The confidence set itself is constructed from per-node GP posteriors under each candidate graph:
\begin{equation}
\begin{aligned}
\mathcal{C}_t&=\Big\{(\mathcal{G}',\tilde F)\;:\;\mathcal{G}'\in G_t,\ \
\tilde f_j\in\mathcal{H}_{k_j},\ \
\big|\tilde f_j(\cdot)-\mu^{\mathcal{G}'}_{j,t}(\cdot)\big|\le\beta_t\,\sigma^{\mathcal{G}'}_{j,t}(\cdot)\ \ \forall j\Big\},\\
G_t&=\big\{\mathcal{G}'\in G_{t-1}\;:\;\exists\,\tilde F\ \text{satisfying the above bounds under }\mathcal{G}'\big\},
\end{aligned}
\label{eq:gacbo_confset}
\end{equation}
where $(\mu^{\mathcal{G}'}_{j,t},\sigma^{\mathcal{G}'}_{j,t})$ are per-node GP posteriors fitted under the parent sets of $\mathcal{G}'$ and $\beta_t$ is chosen so that the true graph--mechanism pair remains in $\mathcal{C}_t$ with probability at least $1-\delta$; candidate graphs are eliminated once no mechanism vector remains plausible under them \citep{mukherjee2024}.
Compared with CEO, GACBO replaces posterior averaging with best-case evaluation over models consistent with the observations, avoiding explicit entropy computations; this aligns closely with optimistic bandit and Bayesian optimization principles and can make it easier to implement and scale.

The main risk is over-optimism early in the learning process. When the confidence set is wide, the method may conduct trials on interventions that look good only under structural hypotheses that have not yet been ruled out. Performance, therefore, depends on the calibration of the confidence set and on how quickly interventional evidence eliminates incorrect graphs and mechanisms. Recent work extends this direction by exploring alternative uncertainty decompositions, including approaches that emphasize learning exogenous distributions and broader formulations of unknown-graph problems
\citep{ren2025,durand2025}.

\begin{formalsummary}{GACBO}{eq:gacbo_summary}
\begin{align}
\fslot{assumptions}
  & \mathcal{G}\ \text{unknown; causal sufficiency and acyclicity;}
    \notag\\
  & \quad \text{mechanisms in RKHS balls } \mathcal{H}_{k_j}\text{; all nodes observed after each intervention}
\\
\fslot{objective}
  & \text{Equation~\ref{eq:cgo} with the graph unknown}
   \\
\fslot{surrogate}
  & \text{graph--mechanism confidence set } \mathcal{C}_t \text{ and surviving graphs } G_t\text{, Equation~\ref{eq:gacbo_confset}}
  \\
\fslot{acquisition}
  & \text{optimism over plausible models, Equation~\ref{eq:gacbo}}
\end{align}
\end{formalsummary}

\paragraph{Summary.}
Unknown-graph CBO turns intervention design into an adaptive experimental design problem in which structure learning is not a preprocessing step but part of the optimization loop. The CEO and GACBO represent two complementary ways to couple these goals. The CEO prioritizes information gain about the best intervention under a Bayesian posterior, while GACBO prioritizes optimistic improvement under a frequentist-style set of plausible models. The choice between them largely reflects an information-versus-optimism trade-off and the computational budget available for representing and updating uncertainty over graphs.

\begin{table}[t]
\caption{Classification of major CBO methods by key assumptions and modeling choices. Each row lists the graph/system-knowledge assumption, surrogate modeling target, decision strategy, and intervention or environmental feature introduced by the method. Superscripts mark causal-sufficiency assumptions: $^{\ddagger}$ indicates that the method's formulation can accommodate hidden confounders in the graph (e.g., via identification-based adjustment or averaging over structures), whereas $^{\dagger}$ indicates that the method models individual mechanisms or rewards directly and therefore assumes causal sufficiency (fully observed parent--child relations). Unmarked methods follow the common known-graph setting in which causal sufficiency is typically assumed but is not central to the method's design.}
\label{tab:cbo-variants}
\small
\setlength{\tabcolsep}{4pt}
\renewcommand{\arraystretch}{1.15}
\begin{center}

\begin{tabular}{@{} M{0.15\textwidth} | M{0.15\textwidth} M{0.2\textwidth} M{0.2\textwidth} M{0.22\textwidth} @{}}
\toprule
Method & Graph/system assumptions & Surrogate target & Acquisition / strategy & Environment / intervention feature \\
\midrule
CBO (2020)    & Known DAG$^{\ddagger}$ & Effect-level GP for $Y$ with causal prior & Causal EI; cost-aware; observe vs.\ intervene & Scope + value, hard interventions \\
\midrule
DCBO (2021)   & Known DAG$^{\ddagger}$ & Dynamic effect-level GP & Time-aware causal EI & Intervention sequences over time \\
\midrule
MCBO (2023)   & Known DAG$^{\dagger}$ & Mechanism-level GPs, full SCM & UCB / optimism, regret bounds & Hard/soft interventions; mechanism modeling \\
\midrule
cCBO (2023)   & Known DAG$^{\ddagger}$ & Multi-output GP, objective and constraints & Constrained EI, feasibility-weighted & Safety and feasibility constraints \\
\midrule
fCBO (2023)   & Known DAG & GP over policies, RKHS function space & Functional EI & Functional and policy interventions \\
\midrule
CoCaBO (2026) & Known DAG & BO within scope plus bandit over scopes & Two-layer, MAB-UCB then BO within scope & Contextual mixed interventions \\
\midrule
ACBO (2024)   & Known DAG$^{\dagger}$ & Mechanism-level modeling, full SCM & MW-style online learning plus causal UCB oracle & Adversarial and non-stationary environments \\
\midrule
MO-CBO (2025) & Known DAG & GP per objective, local MOBO subproblems & Relative hypervolume improvement & Multi-objective Pareto optimization \\
\midrule
HCBO (2024)   & Known DAG & Multiple effect-level GPs, selected scopes & Normalized UCB-style scope scoring & High-dimensional scope selection \\
\midrule
CEO (2023)    & Unknown DAG$^{\ddagger}$ & Bayesian model average over graphs, mixture & Entropy and information gain & Learns graph structure during optimization \\
\midrule
GACBO (2024)  & Unknown DAG$^{\dagger}$ & Confidence set over graphs and mechanisms & Optimism / UCB over plausible models & Structure discovery via optimistic exploration \\
\midrule
CNEI (2023)   & Known DAG & GP for $Y$ with learned supervised prior & Counter-noise EI & Noise-robust acquisition; learned priors \\
\bottomrule
\end{tabular}
\end{center}
\end{table}

\begin{table}[t]
\setlength{\belowcaptionskip}{6pt}
\caption{Datasets used in empirical evaluations across CBO variants (as reported in the corresponding papers). We add a category column to make the dataset's family structure explicit and use light grid lines to improve readability. The dataset-usage table is restricted to single-objective CBO evaluations.}
\label{tab:dataset_usage}
\centering
\small
\setlength{\tabcolsep}{4pt}
\renewcommand{\arraystretch}{1.05}
\resizebox{\textwidth}{!}{%
\begin{tabular}{c|c|c|c|c|c|c|c|c|c|c|c}
\toprule
Category &  Dataset &
CBO & cCBO & fCBO & MCBO &
HCBO & CEO & GACBO & CNEI &
ACBO & CoCaBO \\
\cline{1-12}

\multirow{5}{*}{\parbox{2.5cm}{\centering Synthetic\\(hard)}} 
& ToyGraph      & $\checkmark$ & $\checkmark$ & $\checkmark$ & $\checkmark$ &  & $\checkmark$ & $\checkmark$ &  &  &  \\
\cline{2-12}
& Synthetic     & $\checkmark$ &  &  &  &  &  &  & $\checkmark$ &  &  \\
\cline{2-12}
& Synthetic-2   &  & $\checkmark$ &  &  &  &  &  &  &  &  \\
\cline{2-12}
& Chain-hard         &  &  & $\checkmark$ &  &  &  &  &  &  &  \\
\cline{1-12}

\multirow{5}{*}{\parbox{2.5cm}{\centering Synthetic\\(soft)}} 
& Ackley        &  &  &  & $\checkmark$ &  &  & $\checkmark$ &  & $\checkmark$ &  \\
\cline{2-12}
& Rosenbrock    &  &  &  & $\checkmark$ &  &  & $\checkmark$ &  & $\checkmark$ &  \\
\cline{2-12}
& Dropwave      &  &  &  & $\checkmark$ &  &  & $\checkmark$ &  & $\checkmark$ &  \\
\cline{2-12}
& Alpine2       &  &  &  & $\checkmark$ &  &  & $\checkmark$ &  & $\checkmark$ &  \\
\cline{2-12}
& Chain-soft         &  &  & $\checkmark$ &  &  &  &  &  &  &  \\
\cline{1-12}

\multirow{5}{*}{\parbox{2.5cm}{\centering Real / fitted\\SCMs\\(hard)}} 
& Ecology       & $\checkmark$ &  &  &  & $\checkmark$ &  &  &  &  &  \\
\cline{2-12}
& Healthcare    & $\checkmark$ & $\checkmark$ & $\checkmark$ & $\checkmark$ & $\checkmark$ & $\checkmark$ &  &  &  & $\checkmark$ \\
\cline{2-12}
& Protein-reconstructed       &  & $\checkmark$ &  &  &  &  &  &  &  &  \\
\cline{2-12}
& Epidemiology  &  &  &  &  &  & $\checkmark$ & $\checkmark$ &  &  &  \\
\bottomrule
\end{tabular}%
}
\end{table}
\subsection{Robustness to noise and priors}
\label{subsec:robustness_noise_priors}

Even with a known graph, practical CBO pipelines face two recurring robustness issues. First, interventional outcomes can be noisy, heteroscedastic, or affected by unmodeled disturbances, which can destabilize improvement-based acquisitions. Second, observationally derived priors may be misspecified, either because identifiability assumptions fail, or because the estimand is hard to estimate accurately from finite data, or because observational correlations leak confounding into a prior mean.

Counter-noise Expected Improvement (CNEI) \citep{Li2023} targets both issues through a noise-aware acquisition and a learned prior construction. CNEI modifies EI-style selection to account for the noise level (Equation~\ref{eq:cnei_acq}) so that the acquisition does not reward an apparent improvement dominated by measurement variance. In parallel, it estimates a surrogate prior mean by fitting predictive models to observational data and using the resulting predictor as a warm-start mean for the GP (Equation~\ref{eq:cnei_prior}). This can improve the early-stage search by giving the surrogate a reasonable global shape before sufficient interventional data accumulate.

\begin{formalsummary}{CNEI}{eq:cnei_summary}
\begin{align}
\fslot{assumptions}
  & \mathcal{G}\ \text{known DAG};\quad
    y_s=g_s(\mathbf{x}_s)+\epsilon,\ \ \epsilon\sim\mathcal{N}(0,\delta^2);\quad
    X_s\in\mathrm{ES}\ \text{(MIS or POMIS of } \mathcal{G}\text{)}
    \label{eq:cnei_assum}\\
\fslot{objective}
  & \text{Equation~\ref{eq:cgo} (baseline CBO problem, unchanged)}
  \\
\fslot{surrogate}
  & g_s\sim\mathcal{GP}(m_s,k_s),\quad
    m_s(\mathbf{x}_s)=\textstyle\sum_{\mathbf{z}}\hat{p}(\mathbf{z})\,\hat{r}(\mathbf{x}_s,\mathbf{z}),\quad
    \hat{r}(\mathbf{x}_s,\mathbf{z})\approx\mathbb{E}[Y\mid X_s{=}\mathbf{x}_s,\,Z{=}\mathbf{z}]
    \ \text{fit on } \mathcal{D}_{\mathrm{obs}}
    \label{eq:cnei_prior}\\
\fslot{acquisition}
  & \alpha_{\mathrm{CNEI}}^{(s)}(\mathbf{x}_s)
    = y_s^{+}-\mathbb{E}\big[\big(g_s(\mathbf{x}_s^{+})-g_s(\mathbf{x}_s)\big)_{+}\,\big|\,\mathcal{D}\big],
    \qquad
    (X_s,\mathbf{x}_s)_{t+1}\in\argmin_{X_s\in\mathrm{ES},\ \mathbf{x}_s}\;
    \alpha_{\mathrm{CNEI}}^{(s)}(\mathbf{x}_s)
    \label{eq:cnei_acq}
\end{align}
\end{formalsummary}
Here $\hat{r}$ is a supervised regressor (support vector regression, ridge regression, or a random forest) plugged into the adjustment formula of Equation~\ref{eq:cnei_prior}, $\mathbf{x}_s^{+}$ is the within-scope incumbent, and $y_s^{+}$ its observed value, which places per-scope improvements on a common scale for cross-scope comparison. The ``counter-noise'' element is that the improvement in Equation~\ref{eq:cnei_acq} is defined on the \emph{latent} objective $g_s$ under the joint posterior of $\big(g_s(\mathbf{x}_s),g_s(\mathbf{x}_s^{+})\big)$ rather than on the noisy observation $y_s^{+}=g_s(\mathbf{x}_s^{+})+\epsilon$; the variance $\mathbb{V}[g_s(\mathbf{x}_s)-g_s(\mathbf{x}_s^{+})]=k_s(\mathbf{x}_s,\mathbf{x}_s)+k_s(\mathbf{x}_s^{+},\mathbf{x}_s^{+})-2k_s(\mathbf{x}_s,\mathbf{x}_s^{+})$ retains the covariance with the incumbent, so apparent improvements driven by measurement variance are discounted.\footnote{We state the noise-robust acquisition in the survey's minimization convention as the expected best value after the candidate intervention; the source alternates sign conventions, and we normalize the improvement variance to the standard variance-of-difference form. The generalization from the source's single adjustment variable to an adjustment set $Z$ in Equation~\ref{eq:cnei_prior} is a survey-level paraphrase.}

A critical distinction concerns the causal status of this prior. CNEI's learned prior is an adjustment-type plug-in (Equation~\ref{eq:cnei_prior}) whose inner regression $\hat{r}$ is a purely supervised, correlational fit; its causal validity therefore rests entirely on the assumed adjustment set. When that set is invalid or hidden confounding is present, the prior degrades to a correlational predictor $\hat{\mathbb{E}}[Y \mid X]$ and can encode spurious associations that a valid causal adjustment would have removed. CNEI therefore trades causal soundness for practical robustness, relying on the noise-aware acquisition to prevent the potentially biased prior from dominating the search as interventional evidence accumulates.

More broadly, observationally learned priors should serve as weak guidance with appropriately inflated uncertainty rather than as a substitute for interventional evidence, motivating robust CBO procedures that explicitly control how strongly observational information can influence acquisition decisions under noise, misspecification, and possible hidden confounding.

\subsection{Comparison tables and unifying observations}
\label{subsec:tables_observations}

This subsection consolidates the cross-method synthesis in one place: rather than closing each preceding subsection with its own summary, we compare the method families once, here, along the design axes introduced at the start of Section~\ref{Methods}. Tables~\ref{tab:cbo-variants} and~\ref{tab:dataset_usage} summarize the field from two
complementary angles. Table~\ref{tab:cbo-variants} organizes methods by key design dimensions:
assumptions about the causal graph, the surrogate modeling target (effect-level, mechanism-level, or graph-uncertainty) and the acquisition or decision strategy. Table~\ref{tab:dataset_usage} maps empirical coverage and reveals substantial fragmentation: beyond a small core of shared benchmarks (ToyGraph, Healthcare), many datasets appear in only one or two papers, which complicates cross-paper performance comparison.

We additionally include an identification-strategy audit table.
\begin{table}[ht]
\caption{Identification-strategy audit for benchmarked pipelines. ``Method-level identification'' describes the causal-effect identification strategy assumed or implemented by the released method. ``Benchmark effect source'' describes how the benchmark wrapper supplies or approximates effects when re-running the released pipeline.}
\label{tab:ident-audit}
\centering
\scriptsize
\setlength{\tabcolsep}{4pt}
\renewcommand{\arraystretch}{1.25}
\begin{tabular}{l p{2.1cm} p{5.4cm} p{4.6cm}}
\toprule
Method & Surrogate level & Method-level identification & Benchmark effect source \\
\midrule
BO     & Target-level GP & None; non-causal baseline & Direct black-box evaluations \\
CoCaBO & Mixed-input BO  & No explicit ID estimand in our wrapper; graph/scope-based policy selection & Benchmark SCM / task oracle evaluations \\
CBO    & Effect-level GP & Observational adjustment when identifiable; not full ID in released code & Benchmark wrapper uses SCM/oracle effect estimates where available \\
cCBO   & Effect-level / constrained GP & Observational adjustment with constraints; not full ID in released code & Benchmark wrapper uses SCM/oracle effect estimates where available \\
DCBO   & Dynamic effect-level / temporal GP & Temporal adjustment under known dynamic-graph assumptions; not full ID & Benchmark dynamic SCM / oracle evaluations \\
HCBO   & Scope-level / high-dimensional CBO & Graph-based scope selection; not full ID & Benchmark SCM / SEM evaluations \\
CEO    & Graph-uncertain / oracle SEM search & Structure-learning over candidate graphs; not full ID over arbitrary latent structures & Candidate graph / SEM evaluations \\
MCBO   & Mechanism-level GP & None required for effect identification; mechanisms are modeled directly & Mechanism-level simulation / forward propagation \\
\bottomrule
\end{tabular}
\end{table}
In particular, none of the benchmarked released pipelines exposes a general front-door or full-ID implementation in the benchmark path used here; where observational adjustment is used, it is method-specific and should not be interpreted as a complete ID-algorithm implementation.

Together, these tables highlight two orthogonal axes of variation. The \emph{modeling axis} ranges from effect-level surrogates (CBO, DCBO, cCBO, HCBO, CNEI) through mechanism-level models with uncertainty propagation (MCBO, ACBO) to explicit graph-uncertainty methods (CEO, GACBO). The \emph{action axis} ranges from scope-and-value selection (CBO, DCBO, HCBO) through policy- or context-dependent decisions (fCBO, CoCaBO) to adversarial interaction (ACBO). The identification-strategy audit (Table~\ref{tab:ident-audit}) additionally clarifies that the benchmark compares released pipelines as complete systems, not isolated acquisition functions. Recognizing these interactions is important for practitioners choosing a method and for researchers identifying gaps in the design space.

\subsection{Connections to Adjacent Fields}
\label{subsec:adjacent_fields}

CBO does not exist in isolation. Several established research areas share overlapping goals, tools, or assumptions. Understanding these connections clarifies what CBO inherits, where it innovates, and what cross-pollination opportunities remain.

\paragraph{Causal abstractions and multi-scale decision making.}
Causal abstraction studies how high-level causal models relate to lower-level mechanistic models through abstraction maps. This perspective is directly relevant to CBO whenever interventions can be chosen at multiple resolutions. Causally Abstracted Multi-Armed Bandits (CAMAB) \citep{zennaro2024camab} formalizes decision making across causally linked abstraction levels, while AT-UCB \citep{dyer2025atucb} shows that exploiting abstraction hierarchies can reduce cumulative regret in complex simulators, including epidemiological models---precisely the kind of complex simulation setting where CBO's aspiration for real-world deployment meets computational tractability limits. For CBO, this suggests a multi-scale analogue of scope reduction: rather than only pruning intervention scopes within one graph, the optimizer may transfer or prune decisions across graphs at different granularities. A first proposal in this direction for CBO itself is MFACBO \citep{zeitler2025mfacbo}, which connects causal abstraction levels to multi-fidelity Bayesian optimization; we note that this proposal is a workshop contribution and not yet peer-reviewed. We therefore treat causal abstraction learning as distinct from causal representation learning from raw observations, and as a promising route for CBO over latent, coarse-grained, or hierarchical causal variables.

\paragraph{Causal bandits.}
The causal bandit literature \citep{lattimore2016causal,lu2021causal,nair2021budgeted} also selects interventions using causal structure, but typically assumes discrete action sets, parametric reward models, and focuses on cumulative regret in stationary environments. CBO extends this setting in three directions: (i)~continuous intervention domains requiring nonparametric surrogates, (ii)~mixed discrete--continuous action spaces (scope value $+$ and (iii)~more robust uncertainty modeling through GP. Conversely, causal bandits offer tighter finite-sample regret bounds and principled treatments of partial graph knowledge that CBO methods have only begun to adopt (e.g., MCBO's regret analysis draws on GP-bandit theory). A promising direction is to unify the two frameworks by developing CBO methods with bandit-style theoretical guaranties for continuous domains.

\paragraph{Bayesian experimental design.}
The Bayesian optimal experimental design (BOED) \citep{chaloner1995bayesian,foster2021deep} selects experiments to maximize information about the model parameters. CEO \citep{branchini2023CEO} can be viewed as a BOED method applied to intervention selection: choose experiments to maximize the information gained about the identity of the best intervention. The key distinction is that standard BOED targets parameter estimation without an optimization objective, whereas CEO jointly optimizes information and reward. This connection suggests that advances in amortized BOED \citep{foster2021deep} could accelerate information-seeking CBO methods.

\paragraph{Active causal discovery.}
Active structure learning \citep{spirtes2000causation,chickering2002optimal,Eberhardt_Scheines_2007} selects interventions to learn the causal graph itself, while CBO selects interventions to optimize an outcome. Unknown-graph CBO methods (CEO, GACBO) bridge these goals by learning graph structure as a means to better optimization. The distinction is that active discovery aims for global graph accuracy, whereas CBO needs only \emph{decision-relevant} structure, the parts of the graph that change which intervention is optimal. This focus on decision-relevant learning is a truly novel aspect of unknown-graph CBO.

\paragraph{Safe and constrained optimization.}
Safe Bayesian optimization \citep{sui2015safe,berkenkamp2019bayesian} maintains feasibility during optimization, which is closely related to cCBO's \citep{aglietti2023cCBO} constrained causal optimization. The causal contribution is that safety constraints can be analyzed through the graph: a scope that cannot affect a constraint variable need not model constraint feasibility, reducing the multi-output modeling burden. Future work could combine safe BO's theoretical safety guaranties with CBO's causal constraint analysis.

\paragraph{Policy search and reinforcement learning.}
Functional CBO (fCBO) \citep{gultchin2023FCBO} and contextual CBO (CoCaBO) \citep{ContextualCBO} can be viewed as policy search methods operating within an explicit causal model. Unlike model-free reinforcement learning, these methods exploit the known graph to decompose the policy-to-outcome mapping and to construct informative priors from observational data. The main limitation relative to RL is the assumption of a single-step (or few-step) decision problem; extending CBO to sequential multi-stage intervention policies remains an open challenge.

\section{Experiments}
\label{sec:experiments}

Although this paper is primarily a survey, we include a benchmark study to support two of the
survey's main claims. First, existing CBO papers are often evaluated in heterogeneous codebases and on only partially overlapping datasets, making cross-paper comparison difficult. Second, commonly reported efficiency metrics can be sensitive to how optimization trajectories are summarized, especially when improvements occur late in the evaluation budget. Our benchmark is therefore designed to separate algorithm execution from evaluation: methods can run in their native implementations, but their outputs are exported into a common best-so-far trajectory format and re-scored with identical metric code.

\subsection{CBO benchmark and evaluation protocol}
\label{subsec:benchmark_release}

We release a CBO benchmark at
\url{https://github.com/chenfeng-huang/CBO-Benchmark-TMLR-2026}. The benchmark is a standardized repackaging and extension of inherited CBO/function-network tasks, rather than an entirely new collection of datasets. It includes eight curated hard-intervention scenarios and a separate set of soft-intervention benchmarks, with dataset provenance and code-authoritative domains documented in Appendix~\ref{appendix:dataset_summary}. It provides standardized implementations of GAP and PA-GAP. Because existing CBO methods often rely on incompatible software environments, the benchmark does not force all algorithms into a single execution stack. Instead, each method is run in its native or vendored implementation, and its trajectory is exported and re-scored using the same evaluation pipeline. This design enables reproducible comparisons across heterogeneous implementations and makes it easy to add new methods, datasets, or metrics.

Because the implemented methods target different problem formulations, we do not pool them into a single cross-method leaderboard: competitive statements are restricted to methods solving the same formulation, and cross-method aggregation appears only as descriptive rank-reliability tables in Appendix~\ref{appendix:ranking_statistical_testing}. The benchmark uses native or vendored implementations with released defaults, common trajectory export, and common scoring; it is therefore a reproducibility harness rather than a fully controlled reimplementation study. Appendix~\ref{appendix:protocol} documents method settings and wrapper choices.

To make the formulation labels of Table~\ref{tab:formulation-labels} usable as fixed reference points rather than user-configured settings, each task in the repository ships with a frozen configuration: the intervention domains, optimization direction, observational sample sizes, constraint functions and thresholds for the constrained tasks, the policy space for Chain-soft, and the reference optimum used for scoring. A new method can therefore be evaluated on a given formulation without any task-design decisions being left to the user.

\paragraph{Execution protocol.}
For each data set, method, and trial limit, we run 20 random seeds. A seed fixes the observational sample, the initial interventional design, and the stochastic components of the optimizer. All methods are evaluated under the same trial limits, intervention domains, optimization direction, and scoring code. An interventional evaluation counts as one trial. Observational data used to construct priors, learn graphs, or initialize surrogates are fixed before the optimization loop and are not counted as interventional trials unless otherwise stated. Each method is run with its released default hyperparameters unless explicitly stated in Appendix~\ref{appendix:protocol}; we do not tune hyperparameters separately for each data set.

\paragraph{Trajectory export and scoring.}
The output of each method is converted to a common trajectory format with two required fields: trial number and best-so-far objective value in the natural optimization direction of the task. For minimization tasks, this is the minimum value observed up to the trial $t$; for maximization tasks, it is the maximum value observed up to the trial $t$. The trajectory includes an initial value followed by interventional trials. Thus, if the budget is $B$ interventions, the exported trajectory contains $B+1$ entries: one initial best value and $B$ post-intervention best-so-far values. GAP and PA-GAP are then computed from these exported trajectories using the same evaluation script for all methods.

\paragraph{Reference optimum.}
Both GAP and PA-GAP require a reference optimum $y^*$. In our benchmark, $y^*$ is computed offline with oracle access to the benchmark SCM, function generator, or precomputed interventional data.
This value is used only for scoring and is never provided to any optimizer. For low-dimensional or discrete intervention domains, we enumerate or densely evaluate admissible intervention scopes and values. For continuous domains, we use high-budget offline search with multi-start optimization or stored oracle intervention sets. The exact offline budgets, grids, restarts, and dataset-specific procedures are reported in the Appendix~\ref{appendix:reference_optima}.

\paragraph{Failure handling.}
When a method proposes an infeasible or out-of-domain intervention, we follow the behavior of the released implementation. If the implementation projects the proposals back to the admissible domain, the projected intervention is evaluated. Otherwise, the trial is marked as a failed proposal, and the trajectory records the current best-so-far value. Runs with numerical errors, missing outputs, or non-finite objective values are logged and are not silently removed. Runtime is not used as a primary metric because implementations differ substantially in language, hardware support, and dependency stacks.

\paragraph{Cost model.}
Unless otherwise stated, each intervention has a unit cost. This choice isolates the efficiency of the sample but does not evaluate the full cost-aware behavior of the CBO acquisition functions. In real applications, the cost of the intervention can depend on the target variable, the number of variables manipulated, the magnitude of the intervention, safety constraints, or whether the action is observational or experimental. We therefore report the current results as unit-cost benchmarks and treat realistic intervention-cost modeling as a separate evaluation dimension.

\subsection{Performance metrics}
\label{subsec:metrics}

We evaluated each run using two numerical efficiency metrics, GAP and PA-GAP, and one visualization metric based on the best-so-far objective trajectory. All scalar metrics are computed from the same exported trajectories described above.

We emphasize that GAP and PA-GAP are auxiliary normalized efficiency summaries that complement, rather than replace, standard BO metrics. Best-so-far trajectories are simple-regret-equivalent (see the trajectory-visualization paragraph below), and normalized average-reward trajectories, the analogue of cumulative-regret reporting, are provided in Appendix~\ref{appendix:average_reward_visualization}. The reason for additionally reporting GAP and PA-GAP is aggregation: raw simple regret is not comparable across datasets with different objective scales, initializations, and optimization directions, whereas GAP and PA-GAP normalize improvement relative to the initial value and reference optimum, enabling the cross-dataset rank summaries used in this benchmark. Readers interested only in per-dataset behavior can rely on the regret-equivalent trajectory plots.

\paragraph{GAP.}
GAP was introduced in DCBO to quantify the efficiency of optimization by combining the final improvement with the speed with which the best value is discovered \citep{aglietti2021}. Let $R_t$ denote the best improvement ratio so far in the trial $t$, computed in the natural optimization direction of the task. For maximization tasks,
\begin{equation}
R_t
=
\min\left\{
\frac{y(\mathbf{x}^*_t)-y(\mathbf{x}_{\mathrm{init}})}
     {y^*-y(\mathbf{x}_{\mathrm{init}})},
1
\right\},
\end{equation}
and for minimization tasks,
\begin{equation}
R_t
=
\min\left\{
\frac{y(\mathbf{x}_{\mathrm{init}})-y(\mathbf{x}^*_t)}
     {y(\mathbf{x}_{\mathrm{init}})-y^*},
1
\right\}.
\end{equation}
Here, $\mathbf{x}^*_t$ denotes the best point so far in the trial $t$. The clipping prevents numerical artifacts when an observed value exceeds the offline reference optimum. Let $R_T$ be the final improvement ratio and let $t^*$ be the first trial in which $R_T$ is achieved. If no improvement occurs, we set $t^*=T$. GAP is then
\begin{equation}
\text{GAP}
=
\left[
\underbrace{R_T}_{\text{Final improvement}}
+
\underbrace{\frac{T-t^*}{T}}_{\text{Discovery efficiency}}
\right]
\Bigg/
\underbrace{\left(1+\frac{T-1}{T}\right)}_{\text{Normalization}}.
\label{eq:gap}
\end{equation}
Under this convention, GAP lies in $[0,1]$, with larger values indicating larger final improvement and earlier discovery.

\paragraph{Path-Aware GAP (PA-GAP).}
GAP depends only on the final best value and the first trial at which it is reached. This can
under-value runs that improve late or make steady progress over the full budget. To better reflect the full optimization trajectory, we use Path-Aware GAP:
\begin{equation}
\text{PA-GAP}
=
\frac{1}{T}\sum_{t=1}^{T}
\left(
\underbrace{R_t}_{\text{Improvement ratio}}
\cdot
\underbrace{\frac{T-(t-1)}{T}}_{\text{Efficiency weight}}
\right).
\label{eq:pa_gap}
\end{equation}
PA-GAP multiplies the best-so-far improvement by a time-dependent efficiency weight, so later
improvements are down-weighted but not discarded, and averages these weighted improvements over the full trajectory. Since $R_t\in[0,1]$, raw PA-GAP lies in
\[
\left[0,\frac{T+1}{2T}\right].
\]
The upper bound is attained when the reference optimum is found in the first trial and remains the best value so far thereafter. Thus, PA-GAP is a raw trajectory-weighted efficiency score rather than a unit-normalized metric. Larger values indicate faster and stronger trajectory-level improvement, and raw PA-GAP values should be compared directly only under the same trial budget.
Appendix~\ref{appendix:pagap_motivation} gives a counterexample where GAP ranks a worse final trajectory above a better late-improving one, while PA-GAP resolves the ordering.

\paragraph{Trajectory visualization.}
In addition to scalar metrics, we plot best-so-far objective trajectories over trials. For a fixed dataset and optimization direction, the best-so-far trajectory is equivalent to a simple-regret trajectory up to an additive constant and, for maximization tasks, a sign reversal. We therefore use best-so-far plots as simple-regret-equivalent visualizations in the original objective scale. We also report normalized average-reward trajectories in Appendix~\ref{appendix:average_reward_visualization} as a diagnostic complement to GAP and PA-GAP.

\subsection{Datasets and task conventions}
\label{subsec:datasets}

Table~\ref{tab:dataset_usage} summarizes the datasets used in the CBO literature. Most evaluations rely on SCM-generated data, where observational samples are drawn from the unintervened SCM and interventional samples are generated by applying hard or soft interventions to the SCM mechanisms.
The optimization direction is data-dependent. For hard-intervention SCM benchmarks, we follow the standard CGO/CBO convention and treat the target as a minimization objective unless explicitly stated otherwise. ToyGraph, Synthetic, Synthetic-2, Chain-hard, Healthcare, Protein-reconstructed, and Epidemiology are minimization tasks, while Ecology is a maximization task. For the soft-intervention benchmarks for the function-network, Ackley, Rosenbrock, Dropwave, and Alpine2 are treated as reward-maximization tasks following the convention used in the BO, MCBO, and ACBO function-network. Chain-soft is a minimization task because the objective is to reduce the target variable $Y$.

\subsubsection{Hard-intervention SCM benchmarks}

Hard-intervention benchmarks evaluate optimization when intervened variables are clamped to fixed values. They include small synthetic SCMs, a reconstructed protein-signaling SCM, and
real-world-inspired SCMs from ecology, healthcare, and epidemiology. The benchmark also provides optional high-dimensional generators, but the main cross-method comparison focuses on the eight curated scenarios for which multiple implementations can be evaluated under the same protocol.

The eight scenarios are as follows; per-dataset structural equations, causal graphs, intervention domains, and provenance are documented in Appendix~\ref{appendix:dataset_summary} (Tables~\ref{tab:dataset_provenance} and~\ref{tab:dataset_consolidated}) and the per-dataset appendix subsections. \textbf{ToyGraph} \citep{aglietti2020CBO} is a minimal nonlinear three-node chain $X\to Z\to Y$ with manipulable $X,Z$, serving as a sanity check (Appendix~\ref{appendix:ToyGraph}). \textbf{Synthetic} \citep{aglietti2020CBO} has seven observed variables plus two latent confounders, with manipulable $B,D,E$, combining nonlinearity, latent structure, and multiple intervention options (Appendix~\ref{appendix:Synthetic}). \textbf{Synthetic-2} \citep{aglietti2023cCBO} is a lightweight nonlinear three-node chain with Gaussian exogenous noise (Appendix~\ref{appendix:Synthetic-2}). \textbf{Chain-hard} \citep{gultchin2023FCBO} has four observed variables with manipulable $W,Z$ clamped in $[-1,1]$ and a non-manipulable context $X$ whose interaction with $Z$ drives the target (Appendix~\ref{appendix:Chain_hard}). \textbf{Ecology} \citep{aglietti2020CBO,courtney2017environmental} is a Bermuda-reef ecosystem model in which net ecosystem calcification is maximized under environmental interventions, using the SCM and data released in the public CBO repository (Appendix~\ref{appendix:Ecology}). \textbf{Protein-reconstructed} is based on the protein-signaling data of \citet{Sachs2005} and the cCBO graph \citep{aglietti2023cCBO}: Erk is minimized by perturbing Mek, PKC, PKA, and Akt under biological constraints on PKC and PKA; because the fitted SCM used in the original cCBO experiments is unreleased, this task is a transparent reconstruction fitted on 852 observational samples rather than an exact reproduction (Appendix~\ref{appendix:Protein}). \textbf{Healthcare} \citep{aglietti2020CBO,b17,ferro2015use} minimizes PSA through statin and aspirin interventions, with the common continuous relaxation of the binary treatment variables. \textbf{Epidemiology} \citep{branchini2023CEO,havercroft2012simulating} minimizes HIV viral load in a modified epidemiology SCM with code-authoritative intervention domains on $L$ and $B$ (Appendices~\ref{appendix:Healthcare} and~\ref{appendix:Epidemiology}).

\subsubsection{Soft-intervention function-network benchmarks}

Soft-intervention benchmarks evaluate settings where interventions modify structural mechanisms rather than simply clamping variables to fixed values. Ackley, Rosenbrock, Dropwave, and Alpine2 are classical global-optimization functions represented as function-network BO benchmarks introduced by \citet{astudillo2021functionnetworks} and adopted in MCBO and ACBO \citep{Sussex2022MCBO,ACBOSussex}. We preserve the inherited task domains after any internal normalization used by wrappers: Ackley and Rosenbrock use $[-2,2]$, Dropwave uses $[-5.12,5.12]$, and Alpine2 uses $[0,10]$. Thus, the optimizer may operate on normalized coordinates internally, but the benchmark task domains are not changed. In these tasks, the output node is treated as a reward to be maximized, even when this corresponds to a sign-flipped version of a standard minimization benchmark.

We explicitly distinguish these \emph{computational-graph benchmarks} from \emph{SCM benchmarks}. In the function networks, the manipulable input variables are mutually independent, orthogonal search dimensions: there are no causal relationships among the inputs themselves, so an intervention on one input cannot sever incoming edges of, or cascade into, another manipulable variable. These tasks therefore probe optimization over a known computational composition, but they do not test confounding or cascading effects among manipulable variables, which are core challenges of causal optimization tested by SCM benchmarks such as Healthcare or Ecology. Chain-soft is the only soft-intervention SCM task in the current repository, and results on the two families are labeled and interpreted separately (Table~\ref{tab:formulation-labels}).

The five scenarios are as follows; equations, graphs, and visualizations are given in Appendices~\ref{appendix:Ackley}--\ref{appendix:Chain_soft}. \textbf{Ackley} ($D=6$) decomposes the highly multimodal Ackley objective into a squared-magnitude component and a cosine component combined at the output node. \textbf{Rosenbrock} ($D\in\{3,5,7\}$) is a chain-structured accumulation of adjacent-variable-pair terms with a narrow curved valley. \textbf{Dropwave} ($D=2$) computes the radial quantity $\sqrt{x_1^2+x_2^2}$ at an intermediate node before the oscillatory output transformation. \textbf{Alpine2} ($D=6$) is a multiplicative chain of per-variable Alpine2 terms, separable and multimodal. \textbf{Chain-soft} \citep{gultchin2023FCBO} extends Chain-hard by letting the mechanism of $Z$ be modified through a context-dependent policy $\pi_0(X)$ while $W$ remains under hard interventions in $[-1,1]$; because the target depends on the interaction between $Z$ and $X$, the optimal intervention on $Z$ varies with $X$, making it a benchmark for context-adaptive intervention strategies (Appendix~\ref{appendix:Chain_soft}).

\subsection{Hard-intervention benchmark}
\label{subsec:hard_benchmark}

To contextualize the benchmark results, Table~\ref{tab:formulation-labels} defines the formulation labels used throughout this section. These labels clarify which methods are solving matched optimization problems and which are included as stress tests or descriptive comparisons.

\begin{table}[t]
\centering
\small
\caption{Formulation labels used to contextualize benchmark results. Several labels are not independently rankable because they contain only one method or one dataset; therefore the statistical rank analysis is reported by budget rather than as separate formulation-track leaderboards.}
\label{tab:formulation-labels}
\begin{tabular}{p{0.20\linewidth}p{0.34\linewidth}p{0.36\linewidth}}
\toprule
Label & What it identifies & How it is used \\
\midrule
Unconstrained hard SCM & Static hard-intervention SCM tasks without explicit feasibility constraints & Descriptive rank summaries by budget (Appendix~\ref{appendix:ranking_statistical_testing}); comparisons are qualified because some methods are stress-test or graph-uncertain variants \\
Constrained hard SCM & Tasks where cCBO-style feasibility constraints are part of the intended formulation & Descriptive only: no fully matched constraint-adapted baseline is available across all datasets \\
Contextual / policy & Settings where scope or intervention choice is policy/context dependent & Descriptive only: CoCaBO is the only currently integrated method in this formulation \\
Soft computational graph & Ackley, Rosenbrock, Dropwave, and Alpine2 function-network tasks & Descriptive rank summaries by budget (Appendix~\ref{appendix:ranking_statistical_testing}); explicitly not treated as SCM benchmarks with confounding or cascading manipulable variables \\
Soft SCM / policy & Chain-soft & Descriptive only: single soft-SCM dataset in the current repository \\
\bottomrule
\end{tabular}
\end{table}

\begin{figure}[t]
    \centering
    \includegraphics[width=\linewidth]{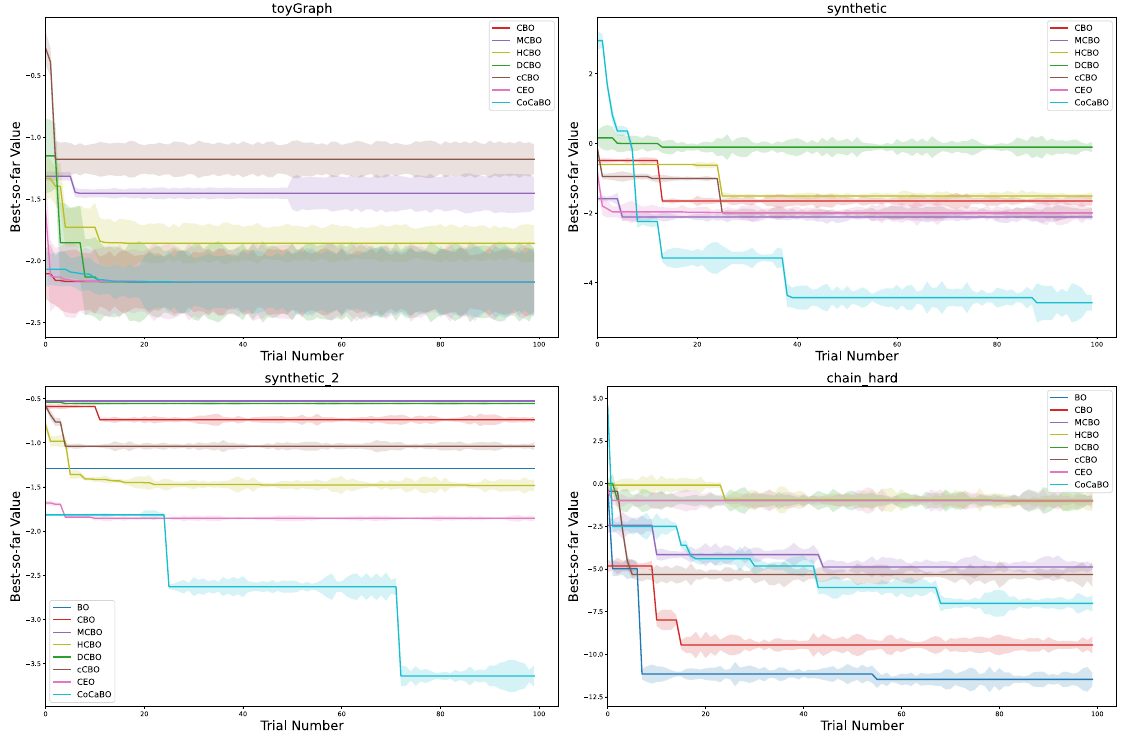}
    \caption{Best-so-far trajectories on four synthetic hard-intervention datasets: ToyGraph, Synthetic, Synthetic-2, and Chain-hard. Solid lines show the mean across 20 seeds and shaded regions show the standard deviation. Lower values indicate better objective values for these minimization tasks.}
    \label{fig:synthetic_group}
\end{figure}

\begin{figure}[t]
    \centering
    \includegraphics[width=\linewidth]{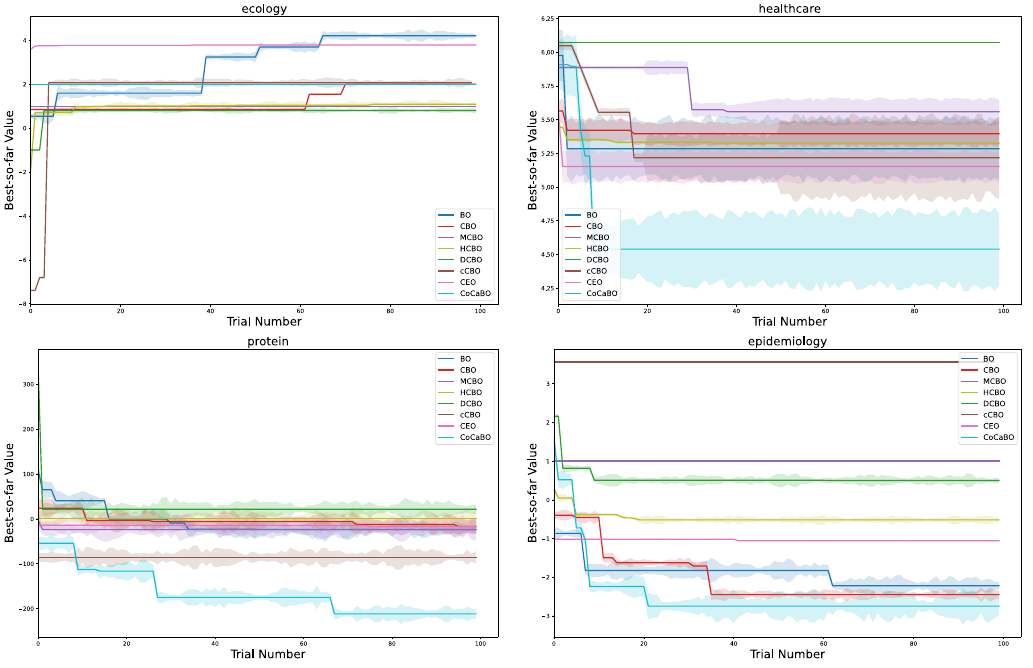}
    \caption{Best-so-far trajectories on four real or fitted-SCM hard-intervention datasets: Ecology, Protein-reconstructed, Healthcare, and Epidemiology. Solid lines show the mean across 20 seeds and shaded regions show the standard deviation. Ecology is a maximization task, while Protein-reconstructed, Healthcare, and Epidemiology are minimization tasks.}
    \label{fig:real_group}
\end{figure}
\begin{table}[t]
\setlength{\belowcaptionskip}{6pt}
\caption{Hard-intervention performance across different trial limits and datasets. Each value is the average $\pm$ standard error across 20 random seeds and different initializations of the observational and interventional data. Higher is better for both GAP and PA-GAP. Best mean values in each row are in \textbf{bold}.}
\label{tab:trials_metrics_models}
\centering
\scriptsize
\setlength{\tabcolsep}{2pt}
\renewcommand{\arraystretch}{1.0}
\resizebox{\textwidth}{!}{%
\begin{tabular}{c|c|c|cccccccc}
\toprule
Trial limit & Metric & Dataset & BO & CBO & cCBO & DCBO & MCBO & HCBO & CoCaBO & CEO \\
\midrule
\multirow{16}{*}{100} 
& \multirow{8}{*}{GAP} 
& ToyGraph & 0.346$\pm$.046 & 0.810$\pm$.044 & 0.730$\pm$.023 & \textbf{0.949$\pm$.048} & 0.547$\pm$.032 & 0.727$\pm$.030 & 0.625$\pm$.054 & 0.654$\pm$.040 \\
& & Synthetic & 0.401$\pm$.038 & 0.658$\pm$.023 & 0.691$\pm$.028 & 0.482$\pm$.007 & 0.652$\pm$.030 & 0.559$\pm$.024 & \textbf{0.757$\pm$.049} & 0.565$\pm$.038 \\
& & Synthetic-2 & 0.000$\pm$.000 & 0.544$\pm$.021 & 0.549$\pm$.017 & 0.485$\pm$.003 & 0.000$\pm$.000 & 0.174$\pm$.016 & \textbf{0.562$\pm$.045} & 0.378$\pm$.038 \\
& & Chain-hard & 0.721$\pm$.032 & 0.688$\pm$.014 & \textbf{0.734$\pm$.025} & 0.542$\pm$.027 & 0.502$\pm$.046 & 0.135$\pm$.024 & 0.205$\pm$.018 & 0.502$\pm$.034 \\
& & Ecology & \textbf{0.799$\pm$.041} & 0.511$\pm$.017 & 0.697$\pm$.027 & 0.637$\pm$.026 & 0.000$\pm$.000 & 0.334$\pm$.030 & 0.000$\pm$.000 & 0.336$\pm$.032 \\
& & Protein-reconstructed & 0.751$\pm$.030 & 0.449$\pm$.043 & 0.000$\pm$.000 & 0.000$\pm$.000 & \textbf{0.878$\pm$.064} & 0.412$\pm$.032 & 0.664$\pm$.052 & 0.275$\pm$.021 \\
& & Healthcare & 0.310$\pm$.020 & 0.577$\pm$.024 & 0.842$\pm$.030 & 0.487$\pm$.001 & 0.510$\pm$.023 & 0.335$\pm$.007 & \textbf{0.964$\pm$.044} & 0.696$\pm$.066 \\
& & Epidemiology & 0.634$\pm$.049 & 0.844$\pm$.029 & 0.000$\pm$.000 & 0.375$\pm$.028 & 0.000$\pm$.000 & 0.557$\pm$.024 & \textbf{0.898$\pm$.059} & 0.308$\pm$.021 \\
\cline{2-11}
& \multirow{8}{*}{PA-GAP} 
& ToyGraph & 0.224$\pm$.031 & 0.069$\pm$.009 & 0.235$\pm$.030 & \textbf{0.337$\pm$.043} & 0.071$\pm$.010 & 0.285$\pm$.036 & 0.087$\pm$.012 & 0.295$\pm$.037 \\
& & Synthetic & 0.198$\pm$.035 & 0.172$\pm$.028 & 0.184$\pm$.027 & 0.036$\pm$.004 & 0.162$\pm$.023 & 0.107$\pm$.020 & \textbf{0.411$\pm$.003} & 0.358$\pm$.045 \\
& & Synthetic-2 & 0.000$\pm$.000 & 0.055$\pm$.008 & 0.065$\pm$.008 & 0.002$\pm$.000 & 0.000$\pm$.000 & 0.099$\pm$.012 & \textbf{0.129$\pm$.021} & 0.035$\pm$.005 \\
& & Chain-hard & \textbf{0.495$\pm$.002} & 0.280$\pm$.042 & 0.244$\pm$.032 & 0.049$\pm$.006 & 0.111$\pm$.015 & 0.026$\pm$.005 & 0.038$\pm$.004 & 0.002$\pm$.000 \\
& & Ecology & \textbf{0.303$\pm$.039} & 0.117$\pm$.006 & 0.204$\pm$.027 & 0.144$\pm$.019 & 0.000$\pm$.000 & 0.210$\pm$.026 & 0.000$\pm$.000 & 0.016$\pm$.002 \\
& & Protein-reconstructed & \textbf{0.407$\pm$.044} & 0.254$\pm$.036 & 0.000$\pm$.000 & 0.000$\pm$.000 & 0.045$\pm$.006 & 0.000$\pm$.000 & 0.240$\pm$.030 & 0.014$\pm$.004 \\
& & Healthcare & 0.125$\pm$.014 & 0.175$\pm$.020 & 0.170$\pm$.022 & 0.000$\pm$.000 & 0.100$\pm$.021 & 0.010$\pm$.001 & \textbf{0.329$\pm$.045} & 0.200$\pm$.025 \\
& & Epidemiology & 0.299$\pm$.038 & 0.346$\pm$.021 & 0.000$\pm$.000 & 0.175$\pm$.022 & 0.000$\pm$.000 & 0.138$\pm$.017 & \textbf{0.392$\pm$.047} & 0.005$\pm$.001 \\
\midrule
\multirow{16}{*}{50} 
& \multirow{8}{*}{GAP} 
& ToyGraph & 0.296$\pm$.043 & 0.676$\pm$.045 & 0.723$\pm$.022 & \textbf{0.896$\pm$.041} & 0.513$\pm$.024 & 0.643$\pm$.030 & 0.629$\pm$.048 & 0.690$\pm$.036 \\
& & Synthetic & 0.531$\pm$.018 & 0.597$\pm$.024 & 0.565$\pm$.023 & 0.422$\pm$.003 & 0.630$\pm$.022 & 0.432$\pm$.019 & \textbf{0.793$\pm$.056} & 0.372$\pm$.034 \\
& & Synthetic-2 & 0.000$\pm$.000 & 0.459$\pm$.021 & \textbf{0.531$\pm$.014} & 0.467$\pm$.007 & 0.000$\pm$.000 & 0.160$\pm$.020 & 0.438$\pm$.023 & 0.213$\pm$.043 \\
& & Chain-hard & \textbf{0.919$\pm$.018} & 0.462$\pm$.009 & 0.712$\pm$.023 & 0.535$\pm$.024 & 0.215$\pm$.027 & 0.302$\pm$.016 & 0.106$\pm$.016 & 0.505$\pm$.032 \\
& & Ecology & \textbf{0.798$\pm$.037} & 0.363$\pm$.015 & 0.680$\pm$.023 & 0.624$\pm$.021 & 0.000$\pm$.000 & 0.246$\pm$.016 & 0.000$\pm$.000 & 0.151$\pm$.018 \\
& & Protein-reconstructed & 0.656$\pm$.027 & 0.553$\pm$.038 & 0.000$\pm$.000 & 0.000$\pm$.000 & \textbf{0.752$\pm$.041} & 0.320$\pm$.017 & 0.731$\pm$.041 & 0.377$\pm$.025 \\
& & Healthcare & 0.366$\pm$.019 & 0.598$\pm$.019 & 0.758$\pm$.027 & 0.474$\pm$.000 & 0.315$\pm$.019 & 0.155$\pm$.002 & \textbf{0.927$\pm$.036} & 0.694$\pm$.046 \\
& & Epidemiology & 0.750$\pm$.043 & 0.644$\pm$.024 & 0.000$\pm$.000 & 0.596$\pm$.021 & 0.000$\pm$.000 & 0.461$\pm$.018 & \textbf{0.793$\pm$.041} & 0.096$\pm$.001 \\
\cline{2-11}
& \multirow{8}{*}{PA-GAP} 
& ToyGraph & 0.177$\pm$.029 & 0.068$\pm$.008 & 0.233$\pm$.030 & 0.316$\pm$.040 & 0.063$\pm$.010 & 0.262$\pm$.033 & \textbf{0.357$\pm$.050} & 0.296$\pm$.037 \\
& & Synthetic & 0.130$\pm$.021 & 0.131$\pm$.025 & 0.191$\pm$.017 & 0.033$\pm$.004 & 0.150$\pm$.022 & 0.050$\pm$.013 & 0.349$\pm$.040 & \textbf{0.357$\pm$.032} \\
& & Synthetic-2 & 0.000$\pm$.000 & 0.043$\pm$.008 & 0.063$\pm$.008 & 0.002$\pm$.000 & 0.000$\pm$.000 & \textbf{0.091$\pm$.011} & 0.051$\pm$.014 & 0.033$\pm$.004 \\
& & Chain-hard & \textbf{0.500$\pm$.002} & 0.178$\pm$.030 & 0.233$\pm$.031 & 0.049$\pm$.006 & 0.078$\pm$.013 & 0.012$\pm$.003 & 0.034$\pm$.004 & 0.002$\pm$.000 \\
& & Ecology & \textbf{0.306$\pm$.039} & 0.085$\pm$.013 & 0.194$\pm$.027 & 0.140$\pm$.019 & 0.000$\pm$.000 & 0.208$\pm$.025 & 0.000$\pm$.000 & 0.015$\pm$.002 \\
& & Protein-reconstructed & \textbf{0.339$\pm$.031} & 0.192$\pm$.034 & 0.000$\pm$.000 & 0.000$\pm$.000 & 0.045$\pm$.006 & 0.000$\pm$.000 & 0.157$\pm$.022 & 0.003$\pm$.001 \\
& & Healthcare & 0.120$\pm$.014 & 0.131$\pm$.018 & 0.263$\pm$.038 & 0.000$\pm$.000 & 0.034$\pm$.011 & 0.009$\pm$.001 & \textbf{0.295$\pm$.042} & 0.202$\pm$.025 \\
& & Epidemiology & 0.247$\pm$.039 & 0.222$\pm$.020 & 0.000$\pm$.000 & 0.169$\pm$.021 & 0.000$\pm$.000 & 0.127$\pm$.015 & \textbf{0.354$\pm$.041} & 0.000$\pm$.000 \\
\midrule
\multirow{16}{*}{20} 
& \multirow{8}{*}{GAP} 
& ToyGraph & 0.383$\pm$.022 & 0.510$\pm$.044 & 0.702$\pm$.023 & \textbf{0.729$\pm$.038} & 0.406$\pm$.020 & 0.372$\pm$.022 & 0.539$\pm$.043 & 0.508$\pm$.028 \\
& & Synthetic & 0.239$\pm$.014 & 0.392$\pm$.020 & 0.367$\pm$.022 & 0.231$\pm$.002 & 0.558$\pm$.016 & 0.166$\pm$.002 & \textbf{0.756$\pm$.052} & 0.390$\pm$.028 \\
& & Synthetic-2 & 0.000$\pm$.000 & 0.285$\pm$.017 & \textbf{0.474$\pm$.018} & 0.408$\pm$.002 & 0.000$\pm$.000 & 0.186$\pm$.017 & 0.000$\pm$.000 & 0.282$\pm$.023 \\
& & Chain-hard & \textbf{0.819$\pm$.014} & 0.136$\pm$.003 & 0.641$\pm$.022 & 0.510$\pm$.018 & 0.360$\pm$.023 & 0.000$\pm$.000 & 0.065$\pm$.011 & 0.513$\pm$.028 \\
& & Ecology & \textbf{0.795$\pm$.039} & 0.208$\pm$.003 & 0.625$\pm$.020 & 0.585$\pm$.016 & 0.000$\pm$.000 & 0.271$\pm$.012 & 0.000$\pm$.000 & 0.340$\pm$.043 \\
& & Protein-reconstructed & 0.503$\pm$.022 & 0.508$\pm$.034 & 0.000$\pm$.000 & 0.000$\pm$.000 & 0.540$\pm$.036 & 0.028$\pm$.016 & \textbf{0.648$\pm$.041} & 0.168$\pm$.016 \\
& & Healthcare & 0.604$\pm$.015 & 0.392$\pm$.021 & 0.490$\pm$.019 & 0.432$\pm$.000 & 0.000$\pm$.000 & 0.172$\pm$.001 & \textbf{0.810$\pm$.017} & 0.689$\pm$.036 \\
& & Epidemiology & 0.646$\pm$.030 & 0.452$\pm$.022 & 0.000$\pm$.000 & 0.457$\pm$.017 & 0.000$\pm$.000 & 0.154$\pm$.015 & \textbf{0.791$\pm$.028} & 0.000$\pm$.000 \\
\cline{2-11}
& \multirow{8}{*}{PA-GAP} 
& ToyGraph & 0.088$\pm$.021 & \textbf{0.392$\pm$.051} & 0.227$\pm$.029 & 0.260$\pm$.034 & 0.040$\pm$.008 & 0.200$\pm$.040 & 0.190$\pm$.029 & 0.300$\pm$.036 \\
& & Synthetic & 0.026$\pm$.008 & 0.034$\pm$.012 & 0.147$\pm$.017 & 0.024$\pm$.003 & 0.116$\pm$.021 & 0.000$\pm$.000 & 0.334$\pm$.035 & \textbf{0.361$\pm$.042} \\
& & Synthetic-2 & 0.000$\pm$.000 & 0.016$\pm$.005 & 0.057$\pm$.007 & 0.002$\pm$.000 & 0.000$\pm$.000 & \textbf{0.074$\pm$.008} & 0.000$\pm$.000 & 0.027$\pm$.004 \\
& & Chain-hard & \textbf{0.357$\pm$.035} & 0.032$\pm$.007 & 0.200$\pm$.029 & 0.047$\pm$.006 & 0.034$\pm$.009 & 0.000$\pm$.000 & 0.031$\pm$.004 & 0.002$\pm$.000 \\
& & Ecology & \textbf{0.316$\pm$.039} & 0.030$\pm$.008 & 0.166$\pm$.025 & 0.126$\pm$.019 & 0.000$\pm$.000 & 0.205$\pm$.024 & 0.000$\pm$.000 & 0.015$\pm$.002 \\
& & Protein-reconstructed & 0.250$\pm$.024 & 0.070$\pm$.020 & 0.000$\pm$.000 & \textbf{0.290$\pm$.036} & 0.047$\pm$.006 & 0.000$\pm$.000 & 0.182$\pm$.045 & 0.001$\pm$.000 \\
& & Healthcare & 0.121$\pm$.014 & 0.034$\pm$.018 & 0.143$\pm$.007 & 0.000$\pm$.000 & 0.000$\pm$.000 & 0.008$\pm$.001 & \textbf{0.268$\pm$.044} & 0.208$\pm$.025 \\
& & Epidemiology & 0.158$\pm$.033 & 0.078$\pm$.021 & 0.000$\pm$.000 & 0.152$\pm$.019 & 0.000$\pm$.000 & 0.102$\pm$.011 & \textbf{0.331$\pm$.034} & 0.000$\pm$.000 \\
\bottomrule
\end{tabular}%
}
\end{table}

\paragraph{Compared methods.}
We compare a non-causal baseline, BO, with seven CBO-family methods: CBO, cCBO, DCBO, MCBO,
HCBO, CoCaBO, and CEO
\citep{aglietti2020CBO,aglietti2023cCBO,aglietti2021,Sussex2022MCBO,
Wu2024HighDimensionalCBO,ContextualCBO,branchini2023CEO}. We label comparisons by how closely the benchmark matches the method's intended setting. CBO and CoCaBO are matched known-graph hard-intervention methods for the static SCM tasks. cCBO is included as a controlled constrained-CBO variant; on unconstrained tasks it should be interpreted as a modeling comparison rather than a pure constraint comparison. In particular, cCBO does not reduce to CBO when constraints are inactive: the released cCBO pipeline uses a different surrogate construction and exploration-set handling than the released CBO pipeline, so cCBO-versus-CBO differences on unconstrained tasks reflect these modeling and implementation differences rather than an effect of constraint handling. DCBO is designed for time-evolving SCMs, so its use on static hard-intervention tasks is a transfer/stress-test comparison. MCBO is a mechanism-level method and is most directly matched to function-network/mechanism-modeling settings. CEO is an unknown-graph reference method and spends part of its budget resolving structure uncertainty, whereas known-graph methods receive the benchmark graph. BO is a graph-free GP-EI baseline and is used as a robustness reference rather than a causal competitor. We exclude fCBO, CNEI, and GACBO because we did not obtain compatible public implementations of the benchmark protocol. We exclude ACBO and MO-CBO from this hard-intervention benchmark because their released implementations or task settings are not aligned with the single-objective hard-intervention protocol considered here.

\paragraph{Selected datasets.}
We evaluated eight hard-intervention datasets: ToyGraph, Synthetic, Synthetic-2, Chain-hard, Ecology, Protein-reconstructed, Healthcare, and Epidemiology. These datasets cover small synthetic SCMs, real or fitted-SCM benchmarks, and settings that are unevenly shared across prior CBO papers. Protein-reconstructed is included as a transparent reconstruction of the cCBO Protein benchmark, rather than as an exact reproduction of the unreleased fitted SCM used in the original cCBO experiments.

\paragraph{Results summary.}
The hard-intervention benchmark shows that no method dominates uniformly across datasets, budgets, and metrics within the benchmark-specified regime. In the main tables, the environment SCM, intervention domains, and reference optima are fixed by the benchmark, and graph-using methods are evaluated with the benchmark graph unless they are explicitly unknown-graph methods. Under this protocol, the descriptive rankings vary across datasets, budgets, and metrics, and the rank-reliability analysis in Appendix~\ref{appendix:ranking_statistical_testing} identifies no statistically separable uniformly best method: CoCaBO attains the lowest descriptive average rank at $T=100$ and $T=50$ and BO at $T=20$, but the Nemenyi critical difference exceeds the rank spread at every budget. We interpret CoCaBO's strong cells (Synthetic variants, Healthcare, Epidemiology) as conditional patterns on datasets where mixed scope selection and contextualized search align with the benchmark structure, rather than as evidence that CoCaBO is intrinsically strongest across CBO settings; DCBO's strength on ToyGraph and MCBO's on Protein-reconstructed are likewise dataset-conditional. BO remains competitive on Chain-hard, Ecology, and Healthcare, where early improvements earn favorable trajectory-level (PA-GAP) scores even when BO is not the best final (GAP) performer. These results show that causal structure can improve optimization, but its benefit depends on how the method's assumptions match the dataset and on whether performance is measured by final discovery or by the full optimization path.

\paragraph{Trajectory-level behavior.}
Figures~\ref{fig:synthetic_group} and~\ref{fig:real_group} complement the scalar table by showing how the methods reach their final scores. In synthetic datasets, many methods improve through a few large drops followed by long plateaus, suggesting that much of the gain comes from identifying a useful intervention scope rather than from gradual local refinement; this plateau structure is one source of GAP/PA-GAP disagreement (Section~\ref{subsec:metrics}). In Synthetic, CoCaBO shows strong path-level progress, consistent with its high scores under both metrics. On ToyGraph, DCBO is strong under GAP, while PA-GAP shows that early progress with CBO and HCBO-style can also matter. In Chain-hard, BO is highly competitive, indicating that a flexible black-box optimizer can perform well when the effective intervention landscape is simple or when the causal prior does not provide a decisive advantage.

The real or fitted-SCM datasets show even stronger metric dependence. Some methods remain nearly flat on particular datasets, which explains the zero or near-zero values in Table~\ref{tab:trials_metrics_models}; Appendix~\ref{appendix:protocol} defines these scores and reports the per-case diagnosis of all zero-score entries, including the cCBO Protein-reconstructed and MCBO small-budget cases. Ecology and Healthcare also show that strong performance can arise through different paths: some methods improve early and then plateau, while others discover a better intervention only after many trials.

\paragraph{Comparability across methods.}
The comparisons above are reported at the per-dataset level because the compared pipelines do not all solve the same optimization problem: cCBO solves a strictly harder constrained formulation, CoCaBO conditions its search on observed contextual information, DCBO runs here as a static-task stress test outside its temporal design setting, CEO spends part of its budget resolving graph uncertainty, and BO is a graph-free reference baseline. A single pooled ranking over these formulations would penalize methods for solving harder problems and reward methods that exploit additional information, so we restrict statements of algorithmic superiority to methods solving the same formulation (Table~\ref{tab:formulation-labels}). Readers who want a cross-method overview can consult Appendix~\ref{appendix:ranking_statistical_testing_hard}, which reports budget-stratified average ranks with bootstrap confidence intervals and Friedman/Nemenyi tests as purely \emph{descriptive} summaries: no pair of methods is statistically separable by average rank at any budget (the rank spread stays below the Nemenyi critical difference throughout, and the conservative per-dataset Friedman test never rejects), subject to the non-independence caveats stated there.

\subsection{Soft-intervention benchmark}
\label{subsec:soft_benchmark}

For soft interventions, we compare BO, MCBO, and ACBO in Ackley, Rosenbrock, Dropwave, Alpine2 and Chain-soft \citep{Sussex2022MCBO,ACBOSussex}. BO serves as a non-causal baseline, while MCBO and ACBO represent mechanism-level and adversarial CBO-style approaches. Because ACBO was designed for adversarial or non-stationary settings, its use on these stationary function-network tasks is interpreted as a stress-test/reference result rather than a claim of intrinsic dominance in its native setting. fCBO also supports policy interventions, but it is excluded from the comparison because we did not obtain a compatible reference implementation \citep{gultchin2023FCBO}.

\paragraph{Results summary.}
Because the soft benchmark combines two formulations: four computational-graph/function-network tasks and one SCM-based policy task, we report per-dataset comparisons rather than a pooled ordering. ACBO is especially strong on Ackley and Chain-soft, where it obtains the best GAP across budgets and often achieves strong PA-GAP values, indicating fast and sustained progress. A plausible explanation for why an adversarially designed method performs well on these stationary tasks is that, when the environment does not change, ACBO's CBO-MW procedure effectively reduces to optimistic causal exploration: the causal UCB oracle exploits the known function-network structure much as MCBO does, while the multiplicative-weights layer over a discretized action set concentrates quickly on high-reward actions once rewards are stable, since no adversary forces re-exploration. The discretized action set may additionally act as a coarse global search grid that is well suited to highly multimodal landscapes such as Ackley, where continuous acquisition optimization can stall in local optima. We present this as a mechanism-level hypothesis consistent with the trajectories in Figure~\ref{fig:soft_group}, not as a measured causal explanation. BO remains a highly competitive baseline on Rosenbrock and Alpine2. In these landscapes, standard black-box optimization often reaches strong final values and also receives favorable PA-GAP scores when it improves early and then maintains its best-so-far value. MCBO is most competitive on Dropwave, where it achieves the best GAP and PA-GAP at 50 and 20 trials; this pattern again reflects the metric distinction of Section~\ref{subsec:metrics}. Overall, the soft-intervention results reinforce the main conclusion from the hard-intervention benchmark: causal modeling can improve optimization, but its advantage depends on how well the method's assumptions match the intervention type, objective landscape, and evaluation metric.

\begin{figure}[t]
\centering
\includegraphics[width=\linewidth]{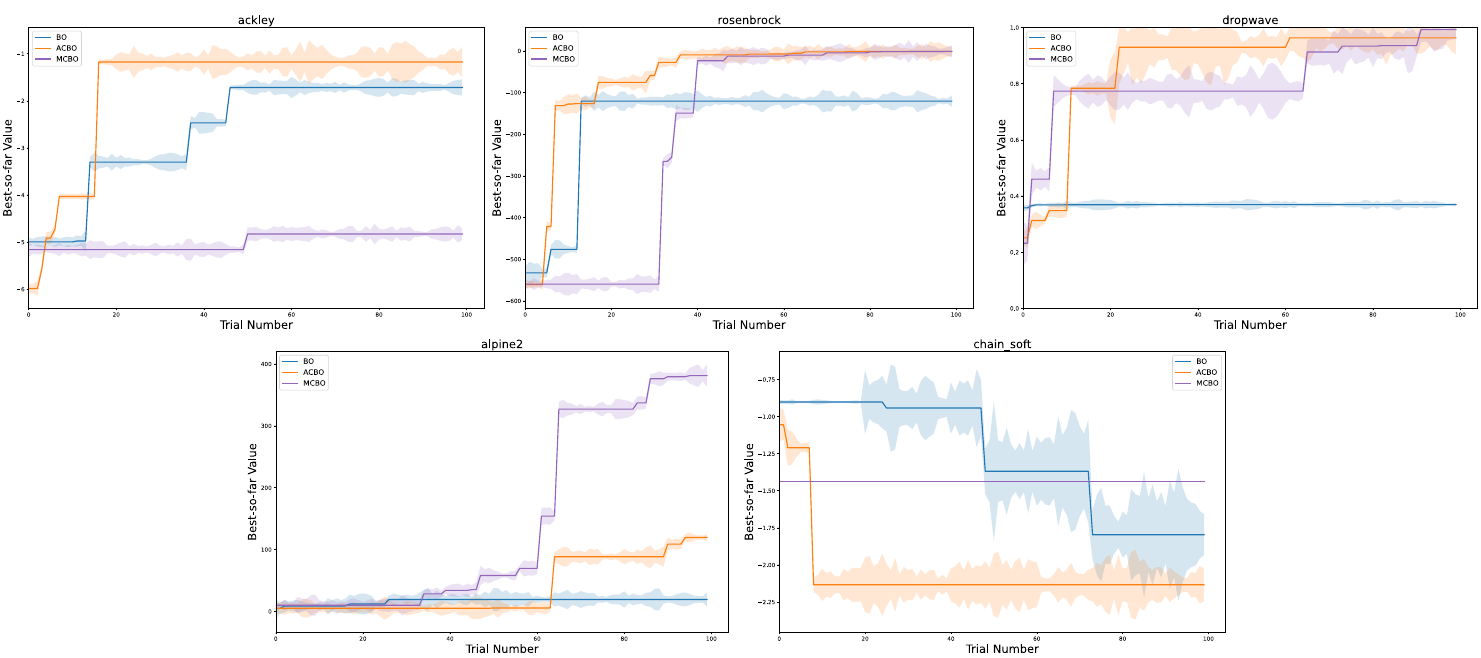} 
\caption{Best-so-far trajectories on five soft-intervention datasets: Ackley, Rosenbrock, Dropwave, Alpine2, and Chain-soft. Solid lines show the mean over 20 random seeds and shaded regions denote the standard deviation across runs.}
\label{fig:soft_group}
\end{figure}

\begin{table}[t]
\setlength{\belowcaptionskip}{6pt}
\caption{Soft-intervention performance across different trial limits and datasets. Each value is the average $\pm$ standard error across 20 random seeds and initializations of the observational and interventional data. Higher is better; best mean values per row are in \textbf{bold}.}
\label{tab:trials_metrics_mcbo_acbo}
\centering
\scriptsize
\setlength{\tabcolsep}{2pt}
\renewcommand{\arraystretch}{1.0}
\begin{tabular}{c|c|c|ccc}
\toprule
Trial limit & Metric & Dataset & BO & MCBO & ACBO \\
\midrule
\multirow{10}{*}{100}
& \multirow{5}{*}{GAP}
& Ackley      & 0.328$\pm$.022 & 0.310$\pm$.003 & \textbf{0.922$\pm$.082} \\
& & Rosenbrock & \textbf{0.825$\pm$.055} & 0.507$\pm$.041 & 0.623$\pm$.047 \\
& & Dropwave   & 0.024$\pm$.005 & 0.558$\pm$.065 & \textbf{0.690$\pm$.094} \\
& & Alpine2    & \textbf{0.536$\pm$.046} & 0.522$\pm$.052 & 0.178$\pm$.014 \\
& & Chain-soft & 0.634$\pm$.028 & 0.512$\pm$.022 & \textbf{0.693$\pm$.052} \\
\cline{2-6}
& \multirow{5}{*}{PA-GAP}
& Ackley      & 0.033$\pm$.004 & 0.012$\pm$.003 & \textbf{0.403$\pm$.006} \\
& & Rosenbrock & 0.308$\pm$.046 & 0.217$\pm$.004 & \textbf{0.412$\pm$.007} \\
& & Dropwave   & 0.008$\pm$.001 & 0.355$\pm$.041 & \textbf{0.379$\pm$.005} \\
& & Alpine2    & \textbf{0.120$\pm$.014} & 0.076$\pm$.019 & 0.015$\pm$.005 \\
& & Chain-soft & \textbf{0.290$\pm$.011} & 0.000$\pm$.001 & 0.205$\pm$.028 \\
\midrule
\multirow{10}{*}{50}
& \multirow{5}{*}{GAP}
& Ackley      & 0.090$\pm$.005 & 0.072$\pm$.002 & \textbf{0.843$\pm$.071} \\
& & Rosenbrock & \textbf{0.761$\pm$.042} & 0.515$\pm$.049 & 0.631$\pm$.052 \\
& & Dropwave   & 0.235$\pm$.024 & \textbf{0.803$\pm$.085} & 0.318$\pm$.040 \\
& & Alpine2    & \textbf{0.403$\pm$.034} & 0.085$\pm$.007 & 0.175$\pm$.009 \\
& & Chain-soft & 0.515$\pm$.023 & 0.000$\pm$.000 & \textbf{0.655$\pm$.033} \\
\cline{2-6}
& \multirow{5}{*}{PA-GAP}
& Ackley      & 0.018$\pm$.003 & 0.000$\pm$.000 & \textbf{0.320$\pm$.042} \\
& & Rosenbrock & 0.239$\pm$.041 & 0.053$\pm$.017 & \textbf{0.344$\pm$.047} \\
& & Dropwave   & 0.008$\pm$.001 & \textbf{0.318$\pm$.040} & 0.171$\pm$.024 \\
& & Alpine2    & \textbf{0.085$\pm$.008} & 0.003$\pm$.001 & 0.000$\pm$.000 \\
& & Chain-soft & 0.135$\pm$.038 & 0.000$\pm$.000 & \textbf{0.181$\pm$.026} \\
\midrule
\multirow{10}{*}{20}
& \multirow{5}{*}{GAP}
& Ackley      & 0.166$\pm$.013 & 0.000$\pm$.000 & \textbf{0.593$\pm$.049} \\
& & Rosenbrock & \textbf{0.559$\pm$.035} & 0.000$\pm$.000 & 0.499$\pm$.047 \\
& & Dropwave   & 0.252$\pm$.023 & \textbf{0.701$\pm$.067} & 0.293$\pm$.041 \\
& & Alpine2    & \textbf{0.142$\pm$.013} & 0.000$\pm$.000 & 0.000$\pm$.000 \\
& & Chain-soft & 0.000$\pm$.000 & 0.000$\pm$.000 & \textbf{0.534$\pm$.024} \\
\cline{2-6}
& \multirow{5}{*}{PA-GAP}
& Ackley      & 0.003$\pm$.001 & 0.000$\pm$.000 & \textbf{0.151$\pm$.017} \\
& & Rosenbrock & 0.082$\pm$.018 & 0.000$\pm$.000 & \textbf{0.216$\pm$.038} \\
& & Dropwave   & 0.007$\pm$.001 & \textbf{0.253$\pm$.030} & 0.130$\pm$.021 \\
& & Alpine2    & \textbf{0.049$\pm$.006} & 0.000$\pm$.000 & 0.000$\pm$.000 \\
& & Chain-soft & 0.000$\pm$.000 & 0.000$\pm$.000 & \textbf{0.116$\pm$.019} \\
\bottomrule
\end{tabular}
\end{table}

\paragraph{Trajectory-level behavior.}
Figure~\ref{fig:soft_group} shows that the three methods behave differently in objective
landscapes. ACBO improves quickly on Ackley and Chain-soft, which explains its strong scalar scores on these datasets. BO remains highly competitive in Rosenbrock and Alpine2, where its best-so-far trajectory often improves early enough to receive strong PA-GAP values as well as strong GAP values.
Dropwave shows a complementary pattern: MCBO and ACBO both improve substantially, but MCBO is
stronger under tighter budgets after applying the same PA-GAP clipping rule used throughout the benchmark. This correction caps the improvement ratio at one when a noisy observed best value exceeds the recorded expected-value reference optimum, ensuring that GAP and PA-GAP remain within their stated ranges. These trajectory patterns show why per-dataset conclusions differ across the two metrics: PA-GAP gives additional credit to methods that improve early and maintain progress, which is why BO remains competitive with ACBO on several tasks despite being a non-causal baseline.

\paragraph{Comparability across methods.}
As in the hard-intervention benchmark, statements of algorithmic superiority are restricted to matched formulations: Chain-soft supports only within-dataset ordering, and ACBO runs outside its adversarial design setting. The per-dataset results in Table~\ref{tab:trials_metrics_mcbo_acbo} and Figure~\ref{fig:soft_group} are therefore the primary evidence, while the budget-stratified descriptive rank-reliability tables in Appendix~\ref{appendix:ranking_statistical_testing_soft} provide a cross-method overview in which no pair of methods is statistically separable at any budget (here even the omnibus Friedman test does not reject).

\subsection{Cross-cutting analysis}
\label{subsec:cross_cutting}

The hard and soft-intervention benchmarks together span 78 dataset--budget-metric settings. Rather than summarizing each setting individually, we distill the main patterns into four cross-sectional observations.

\paragraph{When does causal structure help?}
Causal methods provide the clearest advantage when the graph enables effective scope reduction or when observational priors are well-calibrated. In the hard-intervention setting, CoCaBO and cCBO benefit from structured scope selection on datasets with multiple intervenable variables and clear causal pathways (Synthetic, Healthcare, Epidemiology). In the soft-intervention setting, ACBO benefits from mechanism-level modeling on datasets with complex intermediate structure (Ackley, Chain-soft). Conversely, causal structure provides little or no advantage on datasets where the effective intervention landscape is simple (Chain-hard) or where causal priors are poorly calibrated (Ecology under some methods). BO remains competitive precisely in these settings, suggesting that the value of causal modeling is conditional on the alignment between the method's structural assumptions and the problem's actual causal complexity.

\paragraph{Budget sensitivity.}
Rankings are moderately sensitive to budget. At tight budgets ($T=20$), methods that rely on observational priors or scope reduction (CBO, cCBO, DCBO) tend to perform relatively better because their prior information is most valuable when interventional data are scarce. At larger budgets ($T=100$), methods with richer exploration mechanisms (CoCaBO, ACBO) can overtake prior-driven methods by accumulating sufficient interventional evidence to overcome initial advantages from prior information.

\paragraph{Metric sensitivity.}
GAP and PA-GAP can produce different rankings for the same dataset and budget. GAP rewards the final best value and its discovery time, so methods that make a single large improvement early are favored. PA-GAP rewards steady trajectory-level progress, so methods that gradually improve throughout the budget can rank well even without the best final value. This discrepancy is most pronounced when comparing methods that plateau early (e.g., DCBO on Synthetic-2) versus methods that improve late (e.g., HCBO on Synthetic-2). We recommend that future CBO evaluations report both metrics, as well as trajectory plots, to avoid misleading conclusions from any single scalar summary.

\paragraph{Threats to validity.}

Several factors limit the generalizability of our benchmark findings. First, the benchmark is a native-wrapper reproducibility harness rather than a fully controlled reimplementation of every method; implementation details, released defaults, and dependency choices may affect performance. Second, the benchmark uses unit-cost interventions, which does not capture realistic cost heterogeneity. Third, the main hard-intervention tables assume benchmark-specified graphs and causal sufficiency for most graph-using methods. To probe this limitation rather than only state it, Appendix~\ref{appendix:robustness_stress_tests} reports controlled learner-side graph-misspecification stress tests under edge addition, edge deletion, and edge reversal, and a companion omitted-variable stress test for hidden confounding. In all of these diagnostics, the true environment SCM and reference optimum remain fixed, while only the learner's causal information is perturbed. Fourth, we use released default hyperparameters for all methods; per-dataset tuning could change relative rankings, as illustrated by the MCBO $\beta$ sensitivity analysis. Relatedly, the benchmark does not control for differences in identification strategy across pipelines (Table~\ref{tab:ident-audit}): the released methods differ in whether and how they construct observational priors, none exposes a general front-door or full-ID implementation in the benchmark path used here, and Monte Carlo approximation settings for identified estimands are not uniformly exposed. On datasets with latent variables (most notably Synthetic, whose confounders $U_1,U_2$ make the backdoor criterion inapplicable for some effects), performance differences therefore cannot be cleanly attributed to acquisition or decision design alone; they may partly reflect differences in identification completeness and prior-construction quality. Fifth, GAP and PA-GAP are useful efficiency summaries but should be interpreted alongside best-so-far trajectories, simple-regret-equivalent summaries, average-reward diagnostics, and rank-reliability tests. Finally, the set of compared methods is limited by the availability of compatible public implementations; fCBO, CNEI, GACBO, and ACBO for hard interventions are not included in the hard-intervention comparison.

\section{Open Problems and Future Directions}
\label{sec:open_problems}

The benchmark results and methodological analysis presented in this survey reveal that CBO is a maturing but incomplete field: no single method dominates across settings (Section~\ref{subsec:cross_cutting}). To make the deployment gap concrete, consider using CBO to tune a treatment policy, industrial controller, or ecological intervention from observational data and a partially trusted causal graph. In such a setting, hidden confounding, graph errors, measurement noise, heterogeneous intervention costs, and safety constraints are not secondary details; any one of them can make an apparently promising intervention unreliable or harmful. In the following, we organize the most important open challenges into six concrete research directions. Each direction is stated relative to this deployment scenario: robustness to causal assumptions addresses the partially trusted graph and possible hidden confounding; scalability addresses the fact that realistic systems have far more variables and candidate scopes than current benchmarks; richer intervention models address mixed hard/soft/stochastic actions and heterogeneous intervention costs; realistic evaluation addresses the gap between benchmark-specified conditions and deployment conditions such as distribution shift; theoretical foundations address the guarantees a practitioner would need before acting on a recommended intervention; and integration with representation learning addresses settings where the causal variables themselves are not directly measured.

\subsection{Robustness to causal assumptions}

\paragraph{Hidden confounding.}
When unobserved confounders are present, observational priors constructed via backdoor adjustment or the $g$-formula can be systematically biased, leading the acquisition function to favor interventions that appear promising under confounded associations but fail under true causal effects. Developing CBO methods that explicitly model sensitivity to hidden confounding, for example, by propagating bounds on the causal effect under varying degrees of unmeasured confounding \citep{pearl2009causality}, is an important open direction.

\paragraph{Graph misspecification.}
Even when a graph is provided, edges may be missing, spurious, or incorrectly oriented. The impact of such errors on CBO performance has been largely unexplored. Future work should characterize how graph misspecification degrades scope reduction and prior quality and develop methods that are robust to bounded graph errors.

\paragraph{Prior misspecification.}
CNEI \citep{Li2023} addresses noise robustness, but the broader problem of controlling how strongly observational information influences acquisition decisions under potential misspecification remains open. Methods that adaptively downweight observational priors as interventional data accumulate, analogous to Bayesian robustness techniques in other domains, would be valuable.

\subsection{Scalability}
\label{subsec:scalability}

\paragraph{High-dimensional graphs.}
HCBO \citep{Wu2024HighDimensionalCBO} addresses the selection of the scope in larger graphs, but the combinatorial explosion of candidate scopes remains a fundamental bottleneck. Future methods should take advantage of causal sparsity, hierarchical decomposition, or learned embeddings of the intervention scopes to avoid exhaustive enumeration. Notably, hierarchical decomposition across levels of causal granularity has already been formalized in bandit settings by the causal abstraction literature (Section~\ref{subsec:adjacent_fields}); transferring these abstraction-based pruning and transfer mechanisms to GP-based CBO with continuous intervention domains is a concrete and promising path toward scalability.

\paragraph{Unknown-graph scalability.}
CEO \citep{branchini2023CEO} and GACBO \citep{mukherjee2024} demonstrate unknown-graph CBO on small graphs, but maintaining a posterior or confidence set over graphs becomes intractable as the number of variables grows. Scalable alternatives might include local structure learning focused on the decision-relevant subgraph, amortized inference over graph posteriors, or graph neural network-based surrogates that learn causal structure implicitly.

\paragraph{Computational cost.}
Mechanism-level methods (MCBO, ACBO) require maintaining and updating multiple GPs and propagating uncertainty through the graph at each iteration. The per-iteration cost scales with the number of mechanism models and the depth of the graph. Sparse GP approximations, variational inference, or neural process surrogates could reduce this computational burden.

\subsection{Richer intervention models}

\paragraph{Mixed intervention types.}
Many practical systems involve a mix of hard interventions (e.g. turning a device on or off), soft interventions (e.g., adjusting a controller's parameters) and stochastic interventions (e.g., randomizing a treatment assignment probability). No existing CBO method handles arbitrary mixtures of these types within a single optimization loop.

\paragraph{Multi-agent and sequential interventions.}
ACBO \citep{ACBOSussex} considers adversarial environments, but the more general setting of multi-agent intervention, where multiple decision-makers intervene in the same system, possibly with conflicting objectives, remains unexplored. Similarly, extending CBO to sequential multi-stage intervention policies, where the intervention at each stage depends on the outcomes of previous stages, would bridge CBO and causal reinforcement learning.

\paragraph{Intervention cost heterogeneity.}
Our benchmark uses unit-cost interventions to isolate sample efficiency, but real interventions differ substantially in cost. The cost may depend on the target variable, the magnitude of the intervention, the safety requirements, or the logistics of implementation. Future CBO benchmarks should incorporate explicit cost models and future methods should optimize cost-adjusted efficiency rather than pure sample efficiency.

\subsection{Realistic evaluation}

\paragraph{Standardized benchmarks.}
The fragmentation of the data set documented in Table~\ref{tab:dataset_usage} makes the comparison between papers unreliable. The community would benefit from a shared and versioned benchmark suite with standardized SCMs, intervention domains, observational datasets, and evaluation code, analogous to established benchmarks in reinforcement learning or supervised learning. In this spirit, the benchmark released with this survey should be read as a unified repository of existing environments together with an execution and scoring harness, not as a definitive benchmark suite. Establishing the latter would additionally require high-dimensional, complex, nonlinear simulators with rich causal structure among manipulable variables (e.g., realistic epidemiological or climate models) to stress-test scalability, combinatorial scope reduction, and robustness; building such simulators is an important direction for future work.

\paragraph{Evaluation under distribution shift.}
Most benchmarks assume that the test-time SCM matches the training-time SCM. In practice, the system can change between the observational data collection phase and the intervention deployment phase. Evaluating CBO under a controlled distribution shift would test the robustness of observational priors and structure assumptions.

\paragraph{Trajectory-aware metrics.}
Our analysis shows that GAP and PA-GAP can produce different rankings, highlighting the need for trajectory-aware evaluation. Future work should develop principled metrics that capture the full optimization trajectory, potentially incorporating cost, risk, and constraint satisfaction alongside objective improvement.

\subsection{Theoretical foundations}

\paragraph{Finite-sample regret bounds.}
MCBO \citep{Sussex2022MCBO} provides regret bounds under mechanism-level GP models, and ACBO \citep{ACBOSussex} extends these to adversarial settings. However, regret bounds for effect-level CBO with observational priors, for unknown-graph CBO, and for constrained CBO are largely missing. Developing such bounds would clarify when causal structure provably improves over black-box BO and under what conditions.

\paragraph{Identifiability-aware acquisition.}
Current methods either assume full identifiability or ignore identifiability entirely. An intermediate approach would design acquisition functions that explicitly account for the degree of identifiability: using observational information aggressively when identification is strong and falling back to interventional exploration when it is weak. This would connect CBO to the literature on partial identification and sensitivity analysis in causal inference.

\paragraph{Sample complexity of scope reduction.}
The POMIS-based scope reduction in CBO eliminates dominated scopes using observational evidence, but the sample complexity of this elimination, i.e., how many observational samples are needed to reliably identify the correct POMIS, has not been analyzed. Such an analysis would clarify when scope reduction is reliable and when it may prematurely discard useful scopes.

\subsection{Integration with modern representation learning}

\paragraph{Causal representation learning.}
Methods for learning causal variables and structure from high-dimensional observations \citep{scholkopf2021toward} could enable CBO in settings where the causal graph operates over latent variables rather than directly observed quantities. This would extend CBO to image- or text-based intervention settings. A complementary and arguably more direct route is causal abstraction learning, which relates a fine-grained causal model to a coarser one through abstraction maps; integrating learned abstraction maps into the CBO loop (Section~\ref{subsec:adjacent_fields}) is a concrete instance of the latent-variable optimization advocated here.

\paragraph{Large language models for causal reasoning.}
Large language models (LLMs) have shown preliminary ability to reason about causal relationships from text descriptions. An intriguing direction is to use LLMs to elicit or refine causal graphs from domain experts, providing CBO with better structural priors without requiring formal causal discovery from data.

\paragraph{Neural surrogates.}
Replacing GP-based surrogates with neural network-based models (e.g., neural processes, transformers) could improve scalability and flexibility, particularly for high-dimensional mechanism functions or non-stationary environments. The challenge is maintaining the well-calibrated uncertainty quantification that makes GP-based CBO effective.

\section{Conclusion}
\label{sec:conclusion}

This survey has provided a comprehensive review of Causal Bayesian Optimization, organizing the growing literature through a unified design-space perspective that separates graph and system-knowledge assumptions, environmental assumptions, intervention representation, surrogate architecture, and decision rules. By analyzing each major CBO variant through this lens, we have shown that the field's diversity reflects systematic responses to specific bottlenecks of the original CBO framework (temporal dynamics, safety constraints, high dimensionality, unknown graphs, soft interventions, context dependence, and adversarial non-stationarity) rather than ad hoc extensions.

Our benchmark study reinforces a central finding: the value of causal structure in optimization is conditional. Causal methods provide clear advantages when their structural assumptions align with the problem, when scope reduction eliminates irrelevant interventions, when observational priors are well-calibrated, or when mechanism-level modeling captures the system's compositional structure. However, when these conditions are not met, strong non-causal BO baselines remain competitive, and method rankings depend substantially on the choice of dataset, budget, and evaluation metric. The introduction of PA-GAP as a trajectory-aware complement to GAP highlights that even the definition of ``good performance'' in CBO is not straightforward: final-value metrics and trajectory-level metrics can favor different methods.

Looking ahead, the most pressing challenges are not purely algorithmic. Our benchmark does not establish that CBO methods are robust in high-stakes deployment settings; rather, it shows that causal methods can be useful under benchmark-specified assumptions and that performance can change substantially when learner-side causal assumptions are perturbed. Robustness to graph misspecification, hidden confounding, imperfect interventions, measurement error, heterogeneous intervention costs, and safety constraints remains a prerequisite for CBO to move from controlled benchmarks to real-world deployment. We hope that the unified perspective, benchmark, stress-test protocols, and structured open problems presented in this survey will help researchers navigate the CBO landscape and focus their efforts on directions with practical impact.

\paragraph{Broader impact and deployment caution.}
CBO is increasingly positioned for deployment in sensitive domains such as healthcare dosing, resource allocation, and public policy. Deploying CBO with misspecified graphs, hidden confounding, or miscalibrated causal priors can select harmful physical interventions in real-world systems. A practitioner reading this survey should not conclude that CBO methods offer reliable advantages in such settings; the evidence applies only to the correctly specified, benchmark-controlled regime. Causal assumptions and failure logging should be treated as part of deployment risk, not merely as technical details. The graph-misspecification and omitted-variable stress tests in Appendix~\ref{appendix:robustness_stress_tests} are a first step toward the robustness evaluation needed for deployment, but comprehensive robustness across the dimensions listed above remains an open evaluation dimension.
\bibliographystyle{tmlr}
\bibliography{main}

\clearpage
\appendix
\startcontents[appendix]

\printappendixtoc
\section{Technical Appendix}
\label{appendix:main}

\subsection{Metrics analysis}
\label{appendix:metrics}

DCBO introduced the GAP metric as a scalar that combines how much an optimizer improves on its initial value with how early it reaches its best value \citep{aglietti2021}. For a minimization problem, let
$y(\mathbf{x})=\mathbb{E}[Y\mid do(X=\mathbf{x})]$, let $y^*$ denote the reference optimum (see Appendix~\ref{appendix:reference_optima}), let
$\mathbf{x}_{\text{init}}$ be the initial best point, and let $\mathbf{x}^*_t$ be the best point
found up to trial $t$. Let $T$ be the total number of trials, and let $t^*$ be the first trial at
which the final best value $y(\mathbf{x}^*_T)$ is reached. GAP is
\begin{equation}
\mathrm{GAP}
=
\left[
\underbrace{\frac{y(\mathbf{x}^*_T)-y(\mathbf{x}_{\text{init}})}
{y^*-y(\mathbf{x}_{\text{init}})}}_{\text{Improvement term}}
+
\underbrace{\frac{T-t^*}{T}}_{\text{Efficiency term}}
\right]
\Bigg/
\underbrace{\left(1+\frac{T-1}{T}\right)}_{\text{Normalization}}.
\label{eq:gap_app}
\end{equation}

\subsubsection{GAP value range}
\label{appendix:gap_range}

When the global optimum is found in the first trial ($t^*=1$), the efficiency term is
equal to $\tfrac{T-1}{T}$, and normalization makes the maximum GAP equal to~$1$. However, DCBO
sets $t^*=0$ when no improvement is observed. In that case, the improvement term is $0$, but the
efficiency term becomes $1$, which produces a nonzero GAP even without improvement. Specifically, GAP is then lower-bounded by
\begin{equation}
\frac{1}{1+\frac{T-1}{T}}=\frac{T}{2T-1}.
\end{equation}
To correct for this, we set $t^*=T$ when no improvement occurs. Then both the improvement term and the efficiency term are $0$, so the GAP ranges over $[0,1]$ as intended.

\subsubsection{Why PA-GAP can rank trajectories more consistently}
\label{appendix:pagap_motivation}

GAP depends only on the final best value and the trial in which that final best value is first
reached. This can bias the rankings when two optimization runs share the same early progress but differ
in later improvements. Figure~\ref{fig:demo_comp} illustrates this with a two-model example: both models reach the same improved value at trial $10$, but only Model~2 later reaches the global optimum (at trial $30$ of $T=40$), and GAP nonetheless scores Model~1 higher because it evaluates only the final best value and its discovery time rather than the full optimization path.

To better reflect the full optimization process, PA-GAP computes a weighted improvement value at
each trial and then averages these values over the trajectory. For minimization tasks, the
best-so-far improvement ratio in the trial $t$ is
\begin{equation}
R_t
=
\min\left\{
\frac{y(\mathbf{x}_{\text{init}})-y(\mathbf{x}^*_t)}
     {y(\mathbf{x}_{\text{init}})-y^*},
1
\right\},
\end{equation}
and for maximization tasks, the numerator and denominator are reversed:
\begin{equation}
R_t
=
\min\left\{
\frac{y(\mathbf{x}^*_t)-y(\mathbf{x}_{\text{init}})}
     {y^*-y(\mathbf{x}_{\text{init}})},
1
\right\}.
\end{equation}
The clipping prevents numerical artifacts when an observed best value exceeds the offline reference
optimum. The PA-GAP score used in our benchmark is then
\begin{equation}
\mathrm{PA\mbox{-}GAP}
=
\frac{1}{T}\sum_{t=1}^{T}
\left(
\underbrace{R_t}_{\text{Improvement ratio}}
\cdot
\underbrace{\frac{T-(t-1)}{T}}_{\text{Efficiency weight}}
\right).
\label{eq:pagap_app}
\end{equation}
Here $T$ denotes the number of interventional trials, excluding the initial value in the
best-so far trajectory.

\begin{figure}[t]
    \centering
    \includegraphics[width=.82\linewidth]{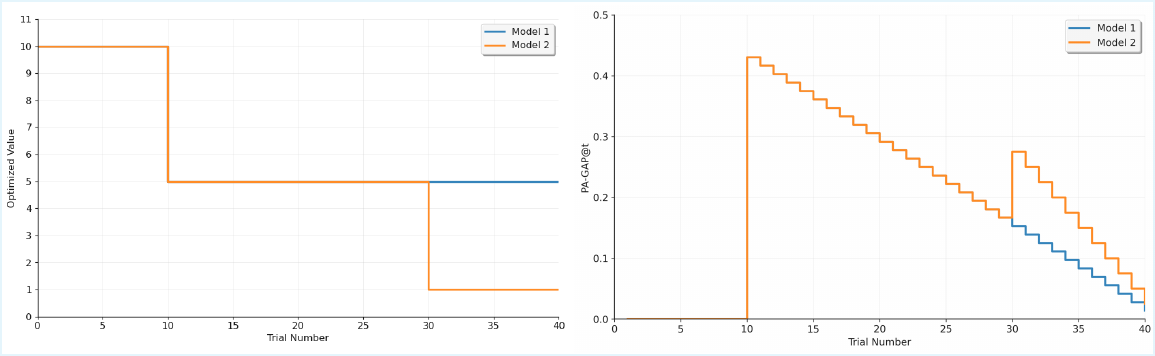}
    \caption{Demo comparison illustrating the ranking bias of GAP. The left panel shows best-so-far trajectories, while the
    right panel shows the corresponding per-trial PA-GAP (PA-GAP@t) contributions. The two models share the same
    early improvement, but Model 2 later reaches the global optimum. GAP can favor the earlier but
    worse final trajectory, while PA-GAP assigns credit to later improvement by averaging weighted
    best-so-far progress across the full path.}
    \label{fig:demo_comp}
\end{figure}
\begin{figure}[t]
    \centering
    \includegraphics[width=.82\linewidth]{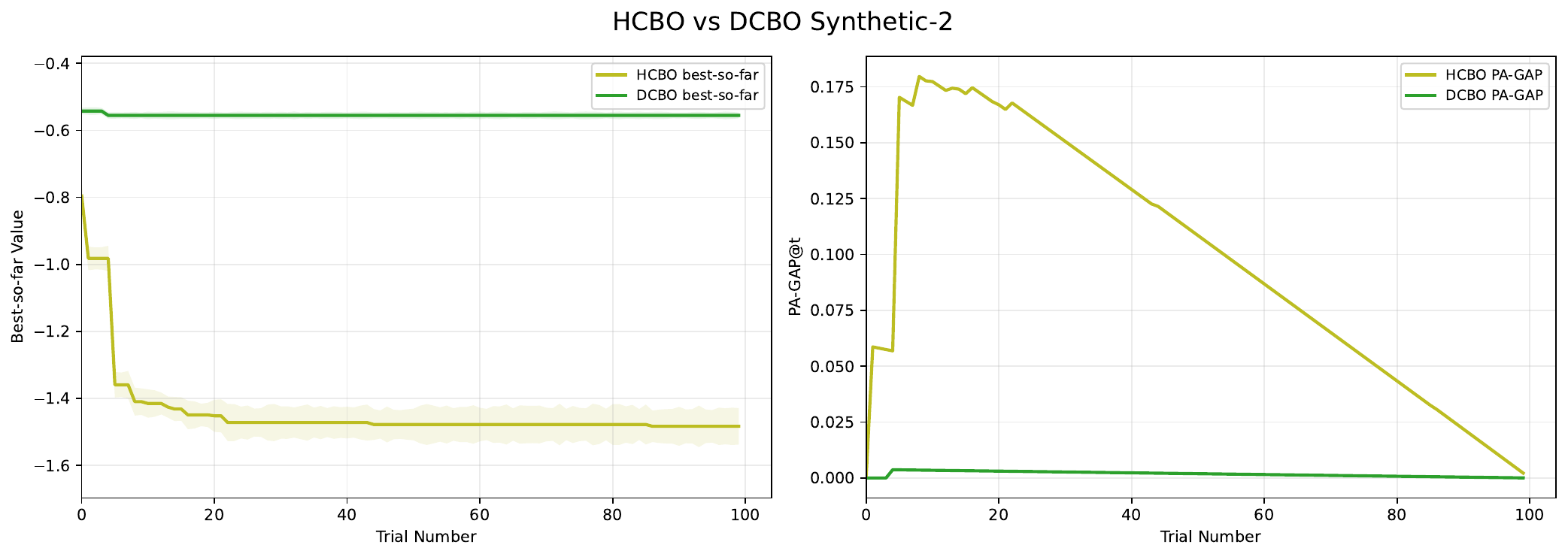}
    \caption{HCBO and DCBO on Synthetic-2. The left panel shows best-so-far trajectories, while the
    right panel shows the corresponding per-trial PA-GAP (PA-GAP@t) contributions. DCBO reaches its own
    best-so-far value early, which gives it a higher GAP score, but HCBO reaches better objective
    values over the trajectory and receives the higher PA-GAP score.}
    \label{fig:hcbo_dcbo_synthetic2_pagap}
\end{figure}
This definition is not normalized to have a maximum value $1$. Since $R_t\in[0,1]$ and the efficiency
weight decreases linearly from $1$ at the first trial to $1/T$ at the last trial, the raw PA-GAP
range is
\begin{equation}
0
\leq
\mathrm{PA\mbox{-}GAP}
\leq
\frac{1}{T}\sum_{t=1}^{T}\frac{T-(t-1)}{T}
=
\frac{T+1}{2T}.
\end{equation}
The upper bound is attained when the global optimum is found in the first trial and remains the
best value so far for the rest of the budget. Thus, PA-GAP should be interpreted as a raw
trajectory-weighted efficiency score, not as a unit-normalized score. Larger values indicate faster
and stronger best-so-far progress, but raw PA-GAP values should be compared directly only within
the same trial budget.

Figure~\ref{fig:hcbo_dcbo_synthetic2_pagap} shows the same issue on the Synthetic-2 benchmark: DCBO quickly reaches a stable best-so-far value, so GAP rewards its early discovery ($0.485\pm.012$ versus $0.174\pm.071$ for HCBO), but that early value is much worse than the values HCBO eventually reaches, and PA-GAP reverses the ranking ($0.002\pm.001$ for DCBO versus $0.099\pm.054$ for HCBO), better reflecting HCBO's sustained progress toward the reference optimum.

\paragraph{Summary and formal properties.}
GAP is simple and interpretable, but can produce counterintuitive rankings when methods differ primarily in their late-stage behavior or trajectory shape. PA-GAP complements GAP by rewarding sustained progress throughout the budget. Table~\ref{tab:gap_vs_pagap_properties} contrasts the two metrics along key evaluation dimensions.

\begin{table}[t]
\centering
\small
\caption{Formal comparison of GAP and PA-GAP evaluation properties.}
\label{tab:gap_vs_pagap_properties}
\setlength{\tabcolsep}{4pt}
\renewcommand{\arraystretch}{1.15}
\begin{tabular}{@{}p{0.28\textwidth}p{0.32\textwidth}p{0.32\textwidth}@{}}
\toprule
Property & GAP & PA-GAP \\
\midrule
\textbf{What is evaluated} & Final best value $+$ discovery time & Full best-so-far trajectory \\
\textbf{Value range} & $[0,1]$ (after correction) & $[0,\frac{T+1}{2T}]$ \\
\textbf{Normalization} & Unit-normalized & Raw; budget-dependent upper bound \\
\textbf{Sensitivity to late improvement} & Low: only the time of the \emph{final} best matters & High: all improvements contribute \\
\textbf{Sensitivity to budget $T$} & Moderate: efficiency term scales with $T$ & Strong: values not comparable across different $T$ \\
\textbf{Plateau handling} & Rewards early plateau at good value & Penalizes early plateau if further improvement is possible \\
\textbf{Computational cost} & $\mathcal{O}(T)$ & $\mathcal{O}(T)$ \\
\bottomrule
\end{tabular}
\end{table}

We state two key properties of PA-GAP that justify its use as a trajectory-aware complement to GAP.

\begin{lemma}[Trajectory monotonicity of PA-GAP]
\label{lem:pagap_monotone}
Let $\{R_t\}_{t=1}^T$ and $\{\tilde{R}_t\}_{t=1}^T$ be two sequences of the best improvement ratios so far satisfying $R_t \geq \tilde{R}_t$ for all $t \in \{1,\dots,T\}$. Then $\mathrm{PA\mbox{-}GAP}(\{R_t\}) \geq \mathrm{PA\mbox{-}GAP}(\{\tilde{R}_t\})$, with equality if and only if $R_t = \tilde{R}_t$ for all $t$.
\end{lemma}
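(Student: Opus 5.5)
The plan is to write PA-GAP as a linear functional of the trajectory with strictly positive weights, after which both claims are immediate. By the definition in Equation~\ref{eq:pagap_app},
\[
\mathrm{PA\mbox{-}GAP}(\{R_t\}) - \mathrm{PA\mbox{-}GAP}(\{\tilde R_t\})
= \sum_{t=1}^{T} w_t\,(R_t-\tilde R_t),
\qquad
w_t := \frac{T-(t-1)}{T^2}.
\]
Since PA-GAP is computed from the same budget $T$ for both sequences, the weights $w_t$ depend only on $t$ and $T$, and not on the sequences themselves.

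First, I will check that every weight is strictly positive. For $t\in\{1,\dots,T\}$ we have $T-(t-1)\ge 1$, so $w_t\ge 1/T^2>0$. This is the one point needing care: the last trial must still carry positive weight. It does, because the efficiency weight decreases only to $1/T$, not to $0$. Together with the hypothesis $R_t-\tilde R_t\ge 0$, every summand is nonnegative, and summing gives the inequality.

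For the equality characterization, the backward direction is trivial: if $R_t=\tilde R_t$ for all $t$, the difference is zero. For the forward direction, suppose the difference vanishes. Then the sum of nonnegative terms $w_t(R_t-\tilde R_t)$ is zero, so each term is zero. Since each $w_t>0$, this forces $R_t=\tilde R_t$ for every $t$.

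I will also note that the argument uses neither the clipping to $[0,1]$ nor the monotonicity of best-so-far sequences. The lemma therefore holds for arbitrary real sequences compared pointwise, and no case split over minimization versus maximization tasks is needed, since the direction is already absorbed into the definition of $R_t$. I do not expect any real obstacle beyond confirming the strict positivity of the smallest weight $w_T=1/T^2$.
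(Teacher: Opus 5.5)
Your proposal is correct and follows the same route as the paper: PA-GAP is a linear functional of $(R_t)$ with strictly positive weights $\frac{T-(t-1)}{T^2}\ge \frac{1}{T^2}$, so pointwise dominance gives the inequality, and strict positivity of the weights forces termwise equality in the equality case. The difference is only in detail: you spell out the equality direction (a zero sum of nonnegative terms forces every term to vanish), which the paper leaves implicit.
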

\begin{proof}
Since the efficiency weights $w_t = \tfrac{T-(t-1)}{T} > 0$ for all $t$, the PA-GAP functional $\tfrac{1}{T}\sum_{t=1}^T R_t \cdot w_t$ is a positively weighted average of the improvement ratios. The conclusion follows from the strict positivity of the weights.
\end{proof}

This property ensures that if one trajectory Pareto-dominates another at every trial, PA-GAP reflects this dominance. GAP does not satisfy this property in general: a trajectory with uniformly higher $R_t$ can receive a lower GAP score if its final best is reached later.

\begin{lemma}[Budget dependence of PA-GAP]
\label{lem:pagap_budget}
The maximum attainable PA-GAP value is $\tfrac{T+1}{2T}$, which is strictly decreasing in $T$ and approaches $\tfrac{1}{2}$ as $T \to \infty$. Consequently, raw PA-GAP values are comparable only within the same budget $T$.
\end{lemma}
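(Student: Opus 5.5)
The proof is a direct computation in three steps. First, show that the supremum of PA-GAP is $\tfrac{T+1}{2T}$ and that it is attained. Second, show that this bound is strictly decreasing in $T$ with limit $\tfrac12$. Third, derive the comparability consequence.

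\emph{The maximum.} Since $R_t\le 1$ by the clipping in the definition of $R_t$, and the weights $w_t=\tfrac{T-(t-1)}{T}$ are positive, Lemma~\ref{lem:pagap_monotone} (applied with $R_t\equiv 1$ as the dominating sequence) gives $\mathrm{PA\mbox{-}GAP}\le \tfrac1T\sum_{t=1}^T w_t$. The identity $\sum_{t=1}^T (T-t+1)=\sum_{k=1}^T k=\tfrac{T(T+1)}{2}$ then yields
\begin{equation*}
\tfrac{1}{T}\sum_{t=1}^T w_t=\tfrac{1}{T^2}\cdot\tfrac{T(T+1)}{2}=\tfrac{T+1}{2T}.
\end{equation*}
The bound is attained by the trajectory that reaches the reference optimum at trial $1$. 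Because the trajectory is best-so-far, $R_t$ is nondecreasing, so $R_t=1$ for every $t$ and equality holds.

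\emph{Monotonicity and limit.} Write $\tfrac{T+1}{2T}=\tfrac12+\tfrac{1}{2T}$. This is strictly decreasing in $T\ge 1$ and tends to $\tfrac12$ as $T\to\infty$. Equivalently, $g(T+1)-g(T)=-\tfrac{1}{2T(T+1)}<0$ for $g(T)=\tfrac{T+1}{2T}$.

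\emph{Comparability.} This is an interpretive claim, so I would make it precise with a counterexample. The same ideal behavior, finding the optimum immediately, scores $\tfrac{T+1}{2T}$, a value that changes with the budget. For instance, it scores $0.525$ at $T=20$ but $0.505$ at $T=100$. Raw values at different budgets therefore reflect the budget as well as the quality of the trajectory, and direct comparison is meaningful only at a fixed $T$.

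No step is a real obstacle. The only point needing care is how to formalize ``comparable only within the same $T$.'' I would state it as: the range $[0,\tfrac{T+1}{2T}]$ depends on $T$, so identical trajectory quality maps to different scores across budgets.
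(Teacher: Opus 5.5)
Your proposal is correct and takes essentially the same route as the paper. Both set $R_t=1$ for all $t$, sum the linear weights to obtain $\tfrac{T+1}{2T}$, and then show strict monotonicity; you use the forward difference $-\tfrac{1}{2T(T+1)}$ where the paper differentiates to get $-\tfrac{1}{2T^2}$, and you add the explicit upper-bound argument from $R_t\le 1$ plus a numerical illustration of the comparability claim, which the paper leaves implicit.
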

\begin{proof}
The maximum is achieved when $R_t = 1$ for all $t$, giving $\mathrm{PA\mbox{-}GAP} = \tfrac{1}{T}\sum_{t=1}^{T}\tfrac{T-(t-1)}{T} = \tfrac{T+1}{2T}$. Since $\tfrac{d}{dT}\!\left(\tfrac{T+1}{2T}\right) = -\tfrac{1}{2T^2} < 0$, the upper bound is strictly decreasing in $T$.
\end{proof}

\paragraph{Practical recommendation.}
We recommend reporting both metrics alongside trajectory plots: GAP identifies methods that quickly find a strong final solution, while PA-GAP evaluates whether progress is sustained throughout the budget.

\begin{table*}[t]
\centering
\small
\caption{Implementation settings used in the benchmark. Method-specific defaults are preserved unless a wrapper override is needed for shared evaluation.}
\label{tab:baseline_hyperparameters}
\setlength{\tabcolsep}{5pt}
\renewcommand{\arraystretch}{1.12}
\begin{tabular}{p{0.13\textwidth}p{0.37\textwidth}p{0.42\textwidth}}
\toprule
Method & Benchmark-controlled settings & Method-specific notes \\
\midrule
BO &
Trial budget, seed, initial design, objective direction, and intervention domain fixed by wrapper; no graph input used. &
Graph-free GP-EI baseline on the joint intervention vector: GPy GPRegression with RBF kernel (lengthscale $=1$, variance $=1$, ARD disabled), fixed observation noise $10^{-10}$, emukit Expected Improvement, and GradientAcquisitionOptimizer. All methods, including BO, use the same fixed initial interventional design per seed and the same intervention parameterization and optimization direction. \\
\cline{1-3}
MCBO &
Trial budget and seed fixed by wrapper; noise scale set to $0$; $\beta=10$. &
Uses GP networks with MC acquisition. Acquisition optimization follows the released implementation defaults, including restart and raw-sample rules. \\
\cline{1-3}
ACBO &
Trial budget and seed fixed by wrapper; noise scale set to $0$; $\beta=10$; default discrete action cardinality $4$. &
Used only for soft-intervention function-network tasks. Runs CBO-MW with multiplicative-weights updates. \\
\cline{1-3}
cCBO &
Trial budget fixed by wrapper; dataset configs determine observational samples, intervention sets, and constraints. &
Uses the single-task cCBO setting in our benchmark. Protein uses an internal offset so that the exported trajectory contains the intended number of evaluated trials. \\
\cline{1-3}
DCBO &
Trial budget and seed fixed by wrapper; dataset YAML files specify the SCM and intervention domain. &
Benchmark uses a single-time setting to make DCBO comparable with static hard-intervention tasks. \\
\cline{1-3}
CEO &
Trial budget and seed fixed by wrapper; small candidate graph and anchor-point settings follow the benchmark runner. &
Uses the benchmark-native CEO runner by default. The original vendored CEO implementation can be enabled separately. \\
\cline{1-3}
CoCaBO &
Trial budget, seed, and objective direction fixed by wrapper. &
Uses UCB over active policy scopes and HEBO-style inner optimization. Benchmark configs do not include additional contextual variables unless specified. \\
\cline{1-3}
HCBO &
Seed and trial settings fixed by wrapper when supported. &
Used for compatible hard-intervention settings. Acquisition and initialization constants follow the released HCBO-style implementation. \\
\bottomrule
\end{tabular}
\end{table*}
\subsection{Implementation protocol}
\label{appendix:protocol}

\paragraph{Reproducibility philosophy.}
All methods run with released default hyperparameters unless noted in Table~\ref{tab:baseline_hyperparameters}. These are wrapper-default reproducibility results, not per-dataset tuned head-to-head comparisons. This choice prioritizes reproducibility and isolates the behavior of public implementations, but it can understate what a carefully tuned method might achieve. Appendix~\ref{appendix:mcbo_beta_sensitivity} illustrates this sensitivity for MCBO: the best UCB $\beta$ varies across datasets, budgets, and metrics, so the main table keeps the wrapper setting $\beta=10$ and reports the sweep separately.

\paragraph{Software environment.}
All experiments are conducted in Python~3.10 using PyTorch~2.0 and GPyTorch~1.11 as the primary GP backend for methods that depend on Gaussian process surrogates. Each CBO method is run in its own isolated virtual environment to prevent dependency conflicts. The benchmark evaluation pipeline--including trajectory export, GAP/PA-GAP computation, and figure generation--is implemented in a shared codebase that is independent of any individual method's software stack. This separation ensures that scoring differences reflect algorithmic behavior rather than library version effects. The exact package versions for each method are documented in the benchmark repository.

\paragraph{Computational resources.}
All experiments were conducted on a single-GPU CUDA workstation equipped with one NVIDIA A100 GPU
with 40 GiB of VRAM, Intel Xeon Platinum 8481C CPUs, and 754 GiB of system memory. Most methods
complete 100 trials on a single dataset within 5--30 minutes. The main exceptions are CEO, which
incurs additional overhead from graph posterior sampling, and MCBO, which propagates uncertainty
through multiple mechanism-level GPs. GPU acceleration is used when supported by the corresponding
implementation; otherwise, experiments are executed on the CPU.

\paragraph{Wrapper design.}
Wrapper-level overrides are limited to shared experimental controls: trial budget, random seed, dataset
mapping, objective direction (minimization or maximization), and trajectory export format. Method-internal choices, including the selection of the GP kernel, the acquisition optimization strategy, the restart schedules, the scope enumeration, and the internal noise models, are left at their released defaults. Each wrapper converts the method's internal output into the common trajectory format described below. Wrappers do not modify the optimization loop itself; they only control initialization and output extraction.

\paragraph{Trajectory normalization.}
To make results comparable across heterogeneous codebases, every method's output is converted to a
common trajectory format with two fields: trial number and best-so-far objective value. For
minimization tasks, this value is the running minimum observed up to the trial $t$; for maximization tasks, the running maximum. The trajectory includes an initial best-of-so-far value (before any interventional trial), followed by one entry per interventional trial. Thus, a budget of $B$ interventions produces a trajectory of $B+1$ entries. GAP and PA-GAP are then computed from these exported
trajectories using a single evaluation script shared by all methods, ensuring that scoring differences reflect algorithmic behavior rather than implementation artifacts. This design also makes it easy to add new methods or metrics without modifying existing implementations.

\paragraph{Zero-score diagnosis.}
A zero or near-zero GAP or PA-GAP value in the main tables means that the method did not improve over initialization under the shared scoring protocol; it does not mean that the run was silently removed (genuine failures such as missing outputs or non-finite trajectories are logged separately under the failure-handling protocol). In principle, a zero score could arise from (a)~a poorly calibrated observational prior, (b)~a loose or unreliable reference optimum, or (c)~graph-induced scope selection that excludes useful interventions. In the current logs, the dominant cause is failure to improve over the initial design; in a smaller number of cases, the zero score reflects method--task incompatibility, unsupported or empty candidate scopes (cause~c), or poor calibration of the observational prior (cause~a). We found no case attributable to an unreliable reference optimum (cause~b): the protocol in Appendix~\ref{appendix:reference_optima} computes references offline on the ground-truth SCM, and reference error would rescale all methods' scores on a dataset rather than zero out a single method. Two named cases illustrate the distinction. cCBO on Protein-reconstructed completes all 20 seeds and returns valid, finite trajectories, but its best-so-far value remains flat over the entire budget under the reconstructed SCM and its wide Sachs-derived intervention domains, so GAP and PA-GAP are zero by definition; the runs do not crash, and the discrepancy with the original cCBO Protein results is consistent with our task being a transparent reconstruction rather than the unreleased fitted SCM used in the original experiments (Appendix~\ref{appendix:Protein}). MCBO's zero entries at $T=20$ and $T=50$ on several datasets reflect delayed improvement: the same runs reach non-zero scores at $T=100$, consistent with a method that requires many trials before improving on the initial design.

\subsection{Reference optima}
\label{appendix:reference_optima}

The reference optimum $y^*$ used by GAP and PA-GAP is computed offline from the SCM benchmark, function generator or oracle interventional data, as detailed in Table~\ref{tab:reference_optima}. It is used \emph{only} for scoring and is never provided to any optimizer during a run. At evaluation time, all methods on a given data set are compared against the same fixed reference value using the same task direction.

\paragraph{Reliability categories.}
The reference optima fall into two reliability classes. For datasets with analytically known optima (Dropwave, Alpine2, Ackley, Rosenbrock), the reference is exact and introduces no scoring uncertainty. For SCM-based datasets, the reference is obtained by dense grid search or large-sample uniform random search over the admissible intervention domain. In these cases, $y^*$ represents the \emph{best known achievable outcome}, not necessarily the true global optimum. This distinction is important for datasets with complex, potentially multimodal objective landscapes: if the reference underestimates the true optimum, both GAP and PA-GAP will overestimate the improvement ratios $R_t$, potentially making all methods appear closer to optimal than they actually are. In the opposite direction, an overestimated reference (which cannot occur by construction in our protocol, since we use oracle SCM access) would inflate the denominator and deflate all scores. To mitigate this risk, we use high-budget offline search (10{,}000 samples for Ecology and Epidemiology, $200 \times 200$ grids for Protein-reconstructed, and 1{,}000-point grids for Synthetic-2) and cross-validate against known interventional datasets where available.

\paragraph{Impact on metric comparisons.}
Because both GAP and PA-GAP normalize the improvement relative to $y^* - y(\mathbf{x}_{\mathrm{init}})$, the quality of $y^*$ affects the absolute metric values, but does not affect \emph{the relative rankings} between methods in the same dataset, provided that all methods are scored against the same reference. Cross data set comparisons of absolute GAP or PA-GAP values should therefore be made with caution, as differences in reference quality across datasets can introduce systematic biases.

\begin{table*}[h]
\centering
\scriptsize
\renewcommand{\arraystretch}{0.5}
\setlength{\belowcaptionskip}{6pt}
\caption{Reference optima used for GAP and PA-GAP scoring. Analytic optima are exact; grid/random-search references represent the best known achievable value.}
\label{tab:reference_optima}

\newlength{\RefRowH}
\setlength{\RefRowH}{0.82cm}

\newcommand{\centercell}[2]{\parbox[c][\RefRowH][c]{#1}{\centering\arraybackslash #2}}

\begin{tabular}{@{}cccc@{}}
\toprule
\centercell{0.18\textwidth}{Dataset} &
\centercell{0.10\textwidth}{Task} &
\centercell{0.14\textwidth}{$y^*$} &
\centercell{0.48\textwidth}{Offline computation protocol} \\
\midrule
\centercell{0.18\textwidth}{Dropwave} & \centercell{0.10\textwidth}{Max} & \centercell{0.14\textwidth}{$1.0000$} & \centercell{0.48\textwidth}{Analytic optimum from the known function generator.} \\
\cline{1-4}
\centercell{0.18\textwidth}{Alpine2} & \centercell{0.10\textwidth}{Max} & \centercell{0.14\textwidth}{$400.0000$} & \centercell{0.48\textwidth}{Analytic optimum from the known function generator.} \\
\cline{1-4}
\centercell{0.18\textwidth}{Ackley} & \centercell{0.10\textwidth}{Max} & \centercell{0.14\textwidth}{$0.0000$} & \centercell{0.48\textwidth}{Analytic optimum of the negated Ackley objective.} \\
\cline{1-4}
\centercell{0.18\textwidth}{Rosenbrock} & \centercell{0.10\textwidth}{Max} & \centercell{0.14\textwidth}{$0.0000$} & \centercell{0.48\textwidth}{Analytic optimum of the negated Rosenbrock objective.} \\
\cline{1-4}
\centercell{0.18\textwidth}{Chain-soft} & \centercell{0.10\textwidth}{Min} & \centercell{0.14\textwidth}{$-3.3936$} & \centercell{0.48\textwidth}{Reference value from the Chain-soft function generator.} \\
\cline{1-4}
\centercell{0.18\textwidth}{Protein-recon.} & \centercell{0.10\textwidth}{Min} & \centercell{0.14\textwidth}{$-240.0234$} & \centercell{0.48\textwidth}{Reconstructed linear SEM; $200\times200$ grid over PKC and PKA; minimum Erk.} \\
\cline{1-4}
\centercell{0.18\textwidth}{Synthetic-2} & \centercell{0.10\textwidth}{Min} & \centercell{0.14\textwidth}{$-3.9690$} & \centercell{0.48\textwidth}{Best offline oracle outcome; 1000-point grids on $X$ and $Z$.} \\
\cline{1-4}
\centercell{0.18\textwidth}{Synthetic} & \centercell{0.10\textwidth}{Min} & \centercell{0.14\textwidth}{$-4.4474$} & \centercell{0.48\textwidth}{Best value across BO and CBO oracle interventional datasets.} \\
\cline{1-4}
\centercell{0.18\textwidth}{ToyGraph} & \centercell{0.10\textwidth}{Min} & \centercell{0.14\textwidth}{$-2.5718$} & \centercell{0.48\textwidth}{Best value across BO and CBO oracle interventional datasets.} \\
\cline{1-4}
\centercell{0.18\textwidth}{Chain-hard} & \centercell{0.10\textwidth}{Min} & \centercell{0.14\textwidth}{$-15.8060$} & \centercell{0.48\textwidth}{Best value across BO and CBO oracle interventional datasets.} \\
\cline{1-4}
\centercell{0.18\textwidth}{Ecology} & \centercell{0.10\textwidth}{Max} & \centercell{0.14\textwidth}{$9.2652$} & \centercell{0.48\textwidth}{10{,}000-sample uniform search over the manipulative-variable domain.} \\
\cline{1-4}
\centercell{0.18\textwidth}{Epidemiology} & \centercell{0.10\textwidth}{Min} & \centercell{0.14\textwidth}{$-3.3906$} & \centercell{0.48\textwidth}{10{,}000-sample uniform search over the manipulative-variable domain.} \\
\cline{1-4}
\centercell{0.18\textwidth}{Healthcare} & \centercell{0.10\textwidth}{Min} & \centercell{0.14\textwidth}{$4.0401$} & \centercell{0.48\textwidth}{Reference value from the distributed real-data benchmark artifact.} \\
\bottomrule
\end{tabular}
\end{table*}

\subsection{Consolidated dataset summary}
\label{appendix:dataset_summary}

Table~\ref{tab:dataset_provenance} records benchmark provenance, including whether a task is inherited, reconstructed, or newly added, and Table~\ref{tab:dataset_consolidated} provides the code-authoritative structural and domain summary. Together these tables clarify that the benchmark is a standardized harness over heterogeneous prior tasks rather than a claim that every dataset is newly introduced here.

Table~\ref{tab:dataset_consolidated} provides a unified overview of all 13 benchmark datasets, consolidating key structural and experimental properties in a single reference. This table complements the SCM specifications per-dataset in Appendices~\ref{appendix:synthetic_scms}--\ref{appendix:soft_scms} and the usage-coverage table (Table~\ref{tab:dataset_usage}) in the main text. The column ``\#Nodes'' counts endogenous (observed) variables only; latent variables are listed separately where applicable. The column ``Intervention domain'' specifies the admissible range per manipulable variable as used in the benchmark.
\begin{table*}[h]
\scriptsize
\caption{Benchmark provenance and domain reconciliation. ``Current domain'' refers to the code-authoritative domain used in our benchmark.}
\label{tab:dataset_provenance}
\setlength{\tabcolsep}{4pt}
\renewcommand{\arraystretch}{1}
\begin{center}
\begin{tabular}{@{}>{\raggedright\arraybackslash}p{29mm}>{\raggedright\arraybackslash}p{45mm}>{\raggedright\arraybackslash}p{45mm}>{\raggedright\arraybackslash}p{34mm}@{}}
\toprule
Dataset & Source / provenance & Current domain note & Modification status \\
\midrule
ToyGraph & \citet{aglietti2020CBO} & $X{\in}[-5,5]$, $Z{\in}[-5,20]$ & Inherited and standardized \\
Synthetic & CBO-style synthetic SCM & $B,D,E$ code domains & Newly added SCM \\
Synthetic-2 & cCBO-style synthetic SCM & $X{\in}[-3,2]$, $Z{\in}[-1,1]$ & Newly added / reconstructed SCM \\
Chain-hard & Chain benchmark from fCBO-style SCM & $W,Z{\in}[-1,1]$ & Hard-intervention variant \\
Ecology & \citet{aglietti2020CBO} / ecological SCM & Code feature-parameter ranges & Inherited and repackaged \\
Protein-reconstructed & Sachs protein data / cCBO graph & Sachs-derived intervention ranges & Reconstructed fitted SCM \\
Healthcare & \citet{aglietti2020CBO} PSA example & Aspirin, Statin in $[0,1]$ & Inherited and repackaged \\
Epidemiology & CEO/DCBO epidemiology SCM & $L{\in}[0.479,801.787]$, $B{\in}[-0.994,0.999]$ & Reconciled to code-authoritative intervention variables \\
Ackley / Rosenbrock / Dropwave / Alpine2 & Function-network BO \citep{astudillo2021functionnetworks}; used by MCBO/ACBO & Standard function-network domains after wrapper normalization & Inherited function-network tasks \\
Chain-soft & \citet{gultchin2023FCBO} & Policy coefficients in $[-0.27,0.27]$, $W{\in}[-1,1]$ & Inherited policy-intervention task \\
\bottomrule
\end{tabular}
\end{center}
\end{table*}

\begin{table*}[h]
\centering
\small
\caption{Consolidated summary of all thirteen benchmark datasets. For each dataset, we list the category, number of observed nodes, manipulable variables, intervention type, admissible domain per variable, optimization direction, and source reference. Latent variables (if any) are noted in parentheses.}
\label{tab:dataset_consolidated}
\setlength{\tabcolsep}{3pt}
\renewcommand{\arraystretch}{1}
\begin{tabular}{@{}>{\raggedright\arraybackslash}p{19.5mm}>{\raggedright\arraybackslash}p{15.5mm}>{\centering\arraybackslash}p{11mm}>{\raggedright\arraybackslash}p{21mm}>{\raggedright\arraybackslash}p{16mm}>{\raggedright\arraybackslash}p{33mm}>{\raggedright\arraybackslash}p{10mm}>{\raggedright\arraybackslash}p{20mm}@{}}
\toprule
Dataset & Category & \#Nodes & Manipulable & Type & Domain & Task & Source \\
\midrule
ToyGraph & Synthetic (hard) & 3 & $X, Z$ & Hard & $X{\in}[-5,5]$; $Z{\in}[-5,20]$ & Min $Y$ & \citet{aglietti2020CBO} \\
Synthetic & Synthetic (hard) & 7 (+2 latent) & $B, D, E$ & Hard & $B{\in}[-5,4]$; $D{\in}[-5,5]$; $E{\in}[-6,3]$ & Min $Y$ & \citet{aglietti2020CBO} \\
Synthetic-2 & Synthetic (hard) & 3 & $X, Z$ & Hard & $X{\in}[-3,2]$; $Z{\in}[-1,1]$ & Min $Y$ & \citet{aglietti2023cCBO} \\
Chain-hard & Synthetic (hard) & 4 & $W, Z$ & Hard & $[-1,1]$ each & Min $Y$ & \citet{gultchin2023FCBO} \\
\midrule
Ecology & Real/fitted (hard) & 11 & $N, O, C, T, D$ & Hard & $N{\in}[-2,5]$; $O{\in}[2,4]$; $C{\in}[0,1]$; $T{\in}[2200,2500]$; $D{\in}[1950,2100]$ & Max $Y$ & \citet{aglietti2020CBO} \\
Protein-recon. & Real/fitted (hard) & 8 & PKC, PKA, Mek, Akt & Hard & PKC${\in}[0.5,106.5]$; PKA${\in}[1.45,4491.5]$; Mek${\in}[0.5,389.5]$; Akt${\in}[1.2,3555.5]$ & Min Erk & \citet{aglietti2023cCBO} \\
Healthcare & Real/fitted (hard) & 6 & Aspirin, Statin & Hard & $[0,1]$ each & Min PSA & \citet{aglietti2020CBO} \\
Epidemiology & Real/fitted (hard) & 5 & $L, B$ & Hard & $L{\in}[0.479,801.787]$; $B{\in}[-0.994,0.999]$ & Min $Y$ & \citet{branchini2023CEO} \\
\midrule
Ackley & Synthetic (soft) & 9 & $a_0,\ldots,a_5$ & Soft (function network) & $[-2,2]$ each & Max $Y$ & \citet{astudillo2021functionnetworks,Sussex2022MCBO} \\
Rosenbrock & Synthetic (soft) & $D{+}D{-}1$ & $a_0,\ldots,a_{D-1}$ & Soft (function network) & $[-2,2]$ each & Max $Y$ & \citet{astudillo2021functionnetworks,Sussex2022MCBO} \\
Dropwave & Synthetic (soft) & 4 & $a_0, a_1$ & Soft (function network) & $[-5.12,5.12]$ each & Max $Y$ & \citet{astudillo2021functionnetworks,Sussex2022MCBO} \\
Alpine2 & Synthetic (soft) & 12 & $a_0,\ldots,a_5$ & Soft (function network) & $[0,10]$ each & Max $Y$ & \citet{astudillo2021functionnetworks,Sussex2022MCBO} \\
Chain-soft & Synthetic (soft) & 4 & $W$ (hard), $Z$ (policy) & Mixed hard/soft & $W{\in}[-1,1]$; $Z{=}\pi(X)$ with coeffs.~${\in}[-0.27,0.27]$ & Min $Y$ & \citet{gultchin2023FCBO} \\
\bottomrule
\end{tabular}
\end{table*}
\FloatBarrier
\subsection{Synthetic dataset SCMs}
\label{appendix:synthetic_scms}

This section provides the full structural equations, noise distributions, intervention domains, and causal graph diagrams for the four synthetic hard-intervention datasets used in our benchmark. For each data set, we follow a standardized format: (i)~a brief description including the source reference, the number of nodes, and the direction of the task; (ii)~the exogenous variables and their distributions; (iii)~the endogenous structural equations; (iv)~the noise model with explicit distributional parameters; (v)~the intervention domain; and (vi)~a figure of the causal graph with a consistent color scheme (green = manipulable, red = target, light-gray = non-manipulable, dark-gray = latent).

\subsubsection{ToyGraph}
\label{appendix:ToyGraph}
A minimal three-node chain-structured SCM introduced by \citet{aglietti2020CBO}, consisting of variables $X$, $Z$, and $Y$. Both $X$ and $Z$ are manipulable (intervenable); $Y$ is the target. The task is to minimize $Y$. Figure~\ref{fig:toyGraph} shows the causal graph.

\paragraph{Exogenous variable.}
\begin{align*}
X &\sim P_X \quad \text{(input variable, exogenous)}
\end{align*}

\paragraph{Endogenous variables.}
\begin{align*}
Z &= e^{-X} + \varepsilon_{Z}, \\[4pt]
Y &= \cos Z - e^{-Z/20} + \varepsilon_{Y}.
\end{align*}

\paragraph{Noise model.}
All noise terms $\varepsilon_Z, \varepsilon_Y$ are mutually independent. In the original CBO implementation, these are drawn from $\mathcal{N}(0, \sigma^2)$ with a small variance $\sigma^2$.

\paragraph{Intervention domain.}
Hard interventions on $X$ and $Z$ use dataset-specific domains:
\[
X \in [-5,5], \qquad Z \in [-5,20].
\]
The admissible scopes are $\{X\}$, $\{Z\}$, and $\{X,Z\}$.
\begin{figure}[ht]
\centering
\includegraphics[scale=1]{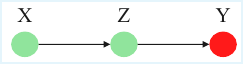} 
\caption{ToyGraph SCM. Green nodes represent manipulable variables; the red node is the target variable $Y$.}
\label{fig:toyGraph}
\end{figure}

\subsubsection{Synthetic}
\label{appendix:Synthetic}
A seven-node SCM with two latent variables, introduced by \citet{aglietti2020CBO}. The observed nodes are $A, B, C, D, E, F, Y$, with latent confounders $U_1$ and $U_2$. Among the observed nodes, $B$, $D$, and $E$ are manipulable; $A$, $C$, and $F$ are not manipulable; and $Y$ is the target. The task is to minimize $Y$. Figure~\ref{fig:completeGraph} shows the causal graph.

\paragraph{Latent variables.}
\begin{align*}
U_1 &= \varepsilon_{YA} \sim \mathcal{N}(0,1), \\[4pt]
U_2 &= \varepsilon_{YB} \sim \mathcal{N}(0,1).
\end{align*}

\paragraph{Exogenous variable.}
\begin{align*}
F &= \varepsilon_{F} \sim \mathcal{N}(0,1).
\end{align*}

\paragraph{Endogenous variables.}
\begin{align*}
A &= F^{2} + U_1 + \varepsilon_{A}, \\[4pt]
B &= U_2 + \varepsilon_{B}, \\[4pt]
C &= e^{-B} + \varepsilon_{C}, \\[4pt]
D &= \frac{e^{-C}}{10} + \varepsilon_{D}, \\[4pt]
E &= \cos A + \frac{C}{10} + \varepsilon_{E}, \\[4pt]
Y &= \cos D + \sin E + U_1 + U_2\,\varepsilon_{Y}.
\end{align*}

\paragraph{Noise model.}
Each $\varepsilon_{\bullet}$ is an independent noise term. In the original CBO implementation, the endogenous noise terms are drawn from $\mathcal{N}(0, \sigma^2)$ with small variance $\sigma^2$.

\paragraph{Intervention domain.}
Hard interventions on $B$, $D$, and $E$ use dataset-specific domains:
\[
B \in [-5,4], \qquad D \in [-5,5], \qquad E \in [-6,3].
\]
The admissible scopes include all non-empty subsets of $\{B,D,E\}$ considered by the benchmark
intervention-set construction.

\begin{figure}[ht]
\centering
\includegraphics[scale=1]{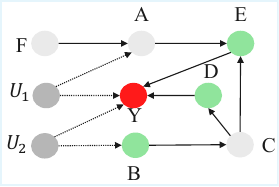} 
\caption{Synthetic SCM. Light-gray nodes are non-manipulable variables; dark-gray nodes are latent confounders; green nodes are manipulable variables; the red node is the target variable $Y$.}
\label{fig:completeGraph}
\end{figure}

\subsubsection{Synthetic-2}
\label{appendix:Synthetic-2}
A three-node SCM introduced in the cCBO line of work \citep{aglietti2023cCBO}, consisting of variables $X$, $Z$, and $Y$. Both $X$ and $Z$ are manipulable; $Y$ is the target. The task is to minimize $Y$. The structure is similar to ToyGraph but uses explicitly specified Gaussian noise. Figure~\ref{fig:synthetic-2} shows the causal graph.

\paragraph{Exogenous variable.}
\begin{align*}
X &= \varepsilon_X, \qquad \varepsilon_X \sim \mathcal{N}(0,1).
\end{align*}

\paragraph{Endogenous variables.}
\begin{align*}
Z &= e^{-X} + \varepsilon_Z, \\[4pt]
Y &= \cos Z - e^{-Z/20} + \varepsilon_Y.
\end{align*}

\paragraph{Noise model.}
$\varepsilon_X \sim \mathcal{N}(0,1)$, $\varepsilon_Z \sim \mathcal{N}(0,1)$, and $\varepsilon_Y \sim \mathcal{N}(0,1)$. All noise terms are mutually independent.

\paragraph{Intervention domain.}
Hard interventions on $X$ and $Z$ use $X \in [-3,2]$ and $Z \in [-1,1]$. The admissible scopes are $\{X\}$, $\{Z\}$, and $\{X, Z\}$.

\begin{figure}[ht]
\centering
\includegraphics[scale=0.8]{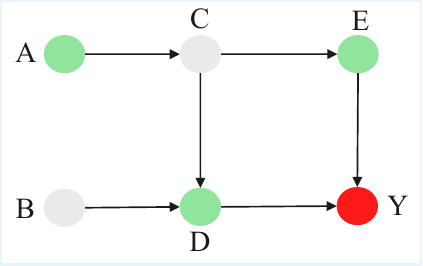} 
\caption{Synthetic-2 SCM. Green nodes represent manipulable variables; the red node is the target variable $Y$.}
\label{fig:synthetic-2}
\end{figure}

\subsubsection{Chain-hard}
\label{appendix:Chain_hard}
A four-node SCM introduced by \citet{gultchin2023FCBO} as the hard-intervention version of the Chain benchmark. The observed variables are $X$, $W$, $Z$, and $Y$. Among them, $W$ and $Z$ are manipulable; $X$ is a non-manipulable context variable; and $Y$ is the target. The task is to minimize $Y$. Figure~\ref{fig:chain-hard} illustrates the causal graph.

\paragraph{Structural equations.}
\begin{align*}
X &= U_X, \\[4pt]
W &= U_W, \\[4pt]
Z &= -0.5X + U_Z, \\[4pt]
Y &= -W - 3ZX + U_Y.
\end{align*}

\paragraph{Noise model.}
$U_X, U_W, U_Z, U_Y \sim \mathcal{N}(0,1)$, mutually independent.

\paragraph{Intervention domain.}
Hard interventions in $W$ and $Z$ are each restricted to $[-1, 1]$. A hard intervention replaces the corresponding structural equation with a fixed constant. The admissible scopes are $\{W\}$, $\{Z\}$, and $\{W, Z\}$.

\begin{figure}[ht]
\centering
\includegraphics[scale=0.8]{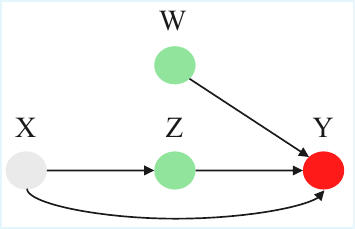} 
\caption{Chain-hard SCM. Light-gray nodes are non-manipulable variables; green nodes are manipulable variables; the red node is the target variable $Y$.}
\label{fig:chain-hard}
\end{figure}

\subsection{Real dataset SCMs}
\label{appendix:real_scms}

This section provides the full structural equations, noise distributions, fitting details, and intervention domains for the four real or fitted-SCM hard-intervention datasets. Unlike the synthetic datasets above, these SCMs are either fitted to real observational data or derived from domain-specific causal models. We follow the same standardized format as in the Appendix~\ref{appendix:synthetic_scms}. For protein reconstruction, we additionally document the fitting procedure to ensure transparent reproducibility.

\subsubsection{Ecology}
\label{appendix:Ecology}
An 11-node SCM based on the Bermuda reef calcification model of \citet{courtney2017environmental}, introduced in CBO \citep{aglietti2020CBO}. The manipulable variables are $\mathrm{Nut}$, $\mathrm{Chl}\alpha$, $\mathrm{TA}$, $\mathrm{DIC}$, and $\Omega_A$. The non-manipulable variables are $\mathrm{Tem}$, $\mathrm{Sal}$, $P_{co_2}$, $\mathrm{Light}$, and $\mathrm{pHsw}$. The target is $\mathrm{NEC}$ (net ecosystem calcification). The task is to \emph{maximize} NEC. Figure~\ref{fig:Ecology} shows the causal graph.

\paragraph{Exogenous variables.}
\begin{align*}
\mathrm{Tem} &= U_{\mathrm{Tem}} \sim \mathcal{N}(24.184130,\; 3.220405^2), \\[4pt]
\mathrm{Sal} &= U_{\mathrm{Sal}} \sim \mathcal{N}(36.591624,\; 0.149197^2), \\[4pt]
\mathrm{Nut} &= U_{\mathrm{Nut}} \sim \mathcal{N}(0.492065,\; 1.592408^2), \\[4pt]
\mathrm{TA} &= U_{\mathrm{TA}} \sim \mathcal{N}(2357.893696,\; 27.609355^2).
\end{align*}

\paragraph{Endogenous variables.}
\begin{align*}
P_{co_2} &= 18.798174 + 15.797384\,\mathrm{Tem} + U_{P_{co_2}}, \\[4pt]
\mathrm{Chl}\alpha &= 0.373420 - 0.002400\,\mathrm{Nut} + U_{\mathrm{Chl}\alpha}, \\[4pt]
\mathrm{Light} &= 6665.081996 - 10737.462582\,\mathrm{Chl}\alpha + U_{\mathrm{Light}}, \\[4pt]
\mathrm{pHsw} &= 8.427706 - 0.000966\,P_{co_2} + U_{\mathrm{pHsw}}, \\[4pt]
\mathrm{DIC} &= 2131.672107 - 0.216560\,P_{co_2} + U_{\mathrm{DIC}}, \\[4pt]
\Omega_A &= 3.245248 + 0.094332\,\mathrm{Tem} + 0.006754\,\mathrm{Sal} - 0.005737\,P_{co_2} + U_{\Omega_A}, \\[4pt]
\mathrm{NEC} &= 211.422555 - 0.000030\,\mathrm{Light} + 0.016680\,\mathrm{Nut} - 25.719277\,\mathrm{pHsw} - 0.500403\,\Omega_A + U_{\mathrm{NEC}}.
\end{align*}

\paragraph{Noise model.}
$U_{P_{co_2}} \sim \mathcal{N}(0, 28.896639^2)$, $U_{\mathrm{Chl}\alpha} \sim \mathcal{N}(0, 0.039295^2)$, $U_{\mathrm{Light}} \sim \mathcal{N}(0, 1546.913034^2)$, $U_{\mathrm{pHsw}} \sim \mathcal{N}(0, 0.005258^2)$, $U_{\mathrm{DIC}} \sim \mathcal{N}(0, 18.412100^2)$, $U_{\Omega_A} \sim \mathcal{N}(0, 0.036958^2)$, and $U_{\mathrm{NEC}} \sim \mathcal{N}(0, 1.073340^2)$. All noise terms are mutually independent.

\paragraph{Intervention domain.}
Hard interventions in $\mathrm{Nut}$, $\mathrm{Chl}\alpha$, $\mathrm{TA}$, $\mathrm{DIC}$, and $\Omega_A$ are applied in domain-specific ranges corresponding to the natural variability reported in \citet{courtney2017environmental}. The benchmark uses the intervention domains specified in the public CBO repository.

\begin{figure}[ht]
\centering
\includegraphics[scale=0.6]{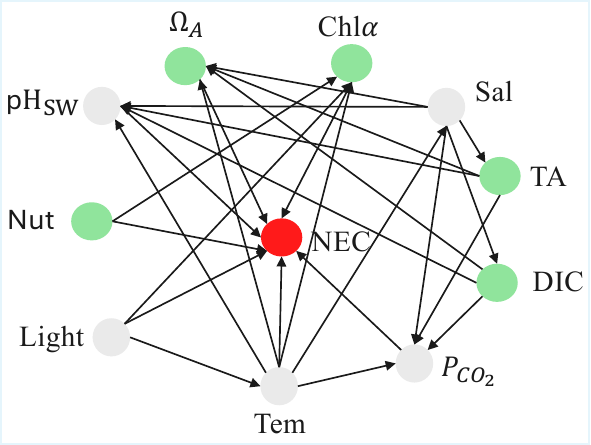} 
\caption{Ecology SCM. Light-gray nodes are non-manipulable variables; green nodes are manipulable variables; the red node is the target variable NEC.}
\label{fig:Ecology}
\end{figure}
\subsubsection{Protein-reconstructed}
\label{appendix:Protein}
An eight-node SCM reconstructed from protein-signaling data from \citet{Sachs2005}, following the DAG used in cCBO \citep{aglietti2023cCBO}. The manipulable variables are $\mathrm{PKC}$, $\mathrm{PKA}$, $\mathrm{Mek}$, and $\mathrm{Akt}$. The non-manipulable variables are $\mathrm{Raf}$, $\mathrm{P38}$, and $\mathrm{Jnk}$. The target is $\mathrm{Erk}$ (extracellular signal-regulated kinase). The task is to minimize Erk. Figure~\ref{fig:Protein} shows the causal graph.

\paragraph{Fitting procedure.}
This is a transparent reconstruction, not an exact reproduction of the unreleased fitted SCM from the original cCBO experiments. We retain the connected subgraph used in cCBO and fit \emph{linear} structural mechanisms using ordinary least squares on 852 observational samples from \citet{Sachs2005}. The parent sets are determined by the cCBO DAG. The linear model class was chosen for reproducibility and interpretability; nonlinear alternatives (e.g. kernel regression) were not explored.

\paragraph{Exogenous variable.}
\begin{align*}
\mathrm{PKC} &= U_{\mathrm{PKC}} \sim \mathrm{Uniform}(1,\,106).
\end{align*}

\paragraph{Endogenous variables.}
\begin{align*}
\mathrm{PKA} &= 554.390731 + 0.841153\,\mathrm{PKC} + U_{\mathrm{PKA}}, \\[4pt]
\mathrm{Raf} &= 62.199046 - 0.177745\,\mathrm{PKC} - 0.000379\,\mathrm{PKA} + U_{\mathrm{Raf}}, \\[4pt]
\mathrm{Mek} &= -1.090275 + 0.039662\,\mathrm{PKC} - 0.000652\,\mathrm{PKA} + 0.520845\,\mathrm{Raf} + U_{\mathrm{Mek}}, \\[4pt]
\mathrm{P38} &= 15.144328 + 1.234783\,\mathrm{PKC} + 0.000591\,\mathrm{PKA} + U_{\mathrm{P38}}, \\[4pt]
\mathrm{Jnk} &= 52.953603 - 0.764801\,\mathrm{PKC} - 0.005306\,\mathrm{PKA} + U_{\mathrm{Jnk}}, \\[4pt]
\mathrm{Akt} &= -31.110747 + 0.128905\,\mathrm{PKA} + U_{\mathrm{Akt}}, \\[4pt]
\mathrm{Erk} &= -23.248743 + 0.081707\,\mathrm{PKA} - 0.029886\,\mathrm{Mek} + U_{\mathrm{Erk}}.
\end{align*}

\paragraph{Noise model.}
$U_{\mathrm{PKA}} \sim \mathcal{N}(0,\,427.437996^2)$, $U_{\mathrm{Raf}} \sim \mathcal{N}(0,\,41.768782^2)$, $U_{\mathrm{Mek}} \sim \mathcal{N}(0,\,16.695865^2)$, $U_{\mathrm{P38}} \sim \mathcal{N}(0,\,13.111164^2)$, $U_{\mathrm{Jnk}} \sim \mathcal{N}(0,\,42.074715^2)$, $U_{\mathrm{Akt}} \sim \mathcal{N}(0,\,113.958504^2)$, and $U_{\mathrm{Erk}} \sim \mathcal{N}(0,\,82.760198^2)$. All noise terms are mutually independent.

\paragraph{Intervention domain and constraints.}
Hard interventions in $\mathrm{PKC}$ and $\mathrm{PKA}$ are drawn from the range $[0.5, 106.5]$ and $[1.45, 4491.5]$, respectively. The interventions in $\mathrm{Mek}$ and $\mathrm{Akt}$ use the domains observed in the Sachs data, which are $[0.5, 389.5]$ and $[1.2, 3555.5]$. Biological plausibility constraints on $\mathrm{PKC}$ and $\mathrm{PKA}$ are enforced by clamping the proposed values to their observed ranges.

\begin{figure}[ht]
\centering
\includegraphics[scale=0.8]{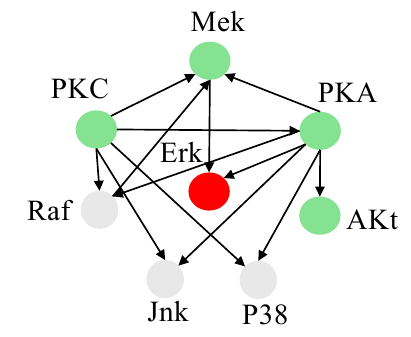}
\caption{Protein-signaling SCM (reconstructed). Light-gray nodes are non-manipulable variables; green nodes are manipulable variables; the red node is the target variable Erk.}
\label{fig:Protein}
\end{figure}

\subsubsection{Healthcare}
\label{appendix:Healthcare}
A six-node SCM derived from a real-world clinical setting \citep{b17}, involving Age, BMI, Aspirin, Statin, Cancer, and PSA. The manipulable variables are Aspirin and Statin. Non-manipulable variables are Age, BMI, and Cancer. The target is PSA (prostate-specific antigen). The task is to minimize PSA. Figure~\ref{fig:Healthcare} shows the causal graph.

\paragraph{Exogenous variable.}
\begin{align*}
\text{Age} &\sim \mathrm{Uniform}(55,\,75).
\end{align*}

\paragraph{Endogenous variables.}
\begin{align*}
\text{BMI} &= 27.0 - 0.01\,\text{Age} + \varepsilon_{\text{BMI}}, \\[4pt]
\text{Aspirin} &= \sigma\!\bigl(-8.0 + 0.10\,\text{Age} + 0.03\,\text{BMI}\bigr) + \varepsilon_{\text{Aspirin}}, \\[4pt]
\text{Statin} &= \sigma\!\bigl(-13.0 + 0.10\,\text{Age} + 0.20\,\text{BMI}\bigr) + \varepsilon_{\text{Statin}}, \\[4pt]
\text{Cancer} &= \sigma\!\bigl(2.2 - 0.05\,\text{Age} + 0.01\,\text{BMI} - 0.04\,\text{Statin} + 0.02\,\text{Aspirin}\bigr) + \varepsilon_{\text{Cancer}}, \\[4pt]
\text{PSA} &= 6.8 + 0.04\,\text{Age} - 0.15\,\text{BMI} - 0.60\,\text{Statin} + 0.55\,\text{Aspirin} + 1.00\,\text{Cancer} + \varepsilon_{\text{PSA}},
\end{align*}
where $\sigma(x) = \tfrac{1}{1+e^{-x}}$ denotes the sigmoid function.

\paragraph{Noise model.}
$\varepsilon_{\text{BMI}} \sim \mathcal{N}(0,\,0.7^2)$, $\varepsilon_{\text{PSA}} \sim \mathcal{N}(0,\,0.4^2)$. Each remaining $\varepsilon_{\bullet}$ is an independent noise term with small variance.

\paragraph{Intervention domain.}
Although the original data template specifies Aspirin and Statin as binary variables, existing CBO implementations commonly relax them to continuous values in $(0,1)$ for BO-style optimization. We follow this convention to remain comparable to the prior work. The admissible scopes are $\{\text{Aspirin}\}$, $\{\text{Statin}\}$, and $\{\text{Aspirin}, \text{Statin}\}$.

\begin{figure}[ht]
\centering
\includegraphics[scale=1]{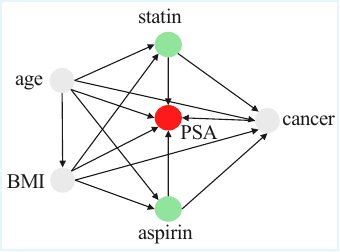} 
\caption{Healthcare SCM. Light-gray nodes are non-manipulable variables; green nodes are manipulable variables; the red node is the target variable PSA.}
\label{fig:Healthcare}
\end{figure}

\subsubsection{Epidemiology}
\label{appendix:Epidemiology}
A five-node SCM introduced in CEO \citep{branchini2023CEO}, modeling an HIV treatment scenario based on \citet{havercroft2012simulating}. The observed variables are $B$, $T$, $L$, $R$, and $Y$. In the code-authoritative benchmark configuration, the manipulable variables are $L$ and $B$; $T$ and $R$ are non-intervention SCM variables. The target is $Y$ (viral load). The task is to minimize $Y$. Figure~\ref{fig:Epidemiology} shows the causal graph.

\paragraph{Exogenous variables.}
\begin{align*}
B &\sim \mathrm{Uniform}(-1,\,1), \\[4pt]
T &\sim \mathrm{Uniform}(4,\,8).
\end{align*}

\paragraph{Endogenous variables.}
\begin{align*}
L &= \exp(0.5\,T + U), \qquad U \sim \mathcal{N}(0,1), \\[4pt]
R &= 4 + L\,T, \\[4pt]
Y &= 0.5 + \cos(4T) + \sin(-L + 2R) + B + \varepsilon, \qquad \varepsilon \sim \mathcal{N}(0,1).
\end{align*}

\paragraph{Noise model.}
$U \sim \mathcal{N}(0,1)$ and $\varepsilon \sim \mathcal{N}(0,1)$. All noise terms are mutually independent.

\paragraph{Intervention domain.}
Hard interventions in $L$ are drawn from $[0.479,801.787]$ and in $B$ from $[-0.994,0.999]$. The admissible scopes are $\{L\}$, $\{B\}$, and $\{L,B\}$. Since CEO uses a modified HIV SCM, more specific clinical interpretations of $B$ and $L$ are not provided in the original article.

\begin{figure}[ht]
\centering
\includegraphics[scale=0.8]{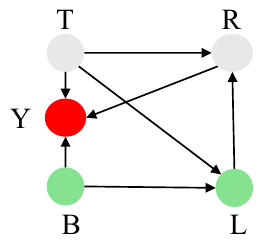} 
\caption{Epidemiology SCM. Light-gray nodes are non-manipulable variables; green nodes are manipulable variables; the red node is the target variable $Y$ (viral load).}
\label{fig:Epidemiology}
\end{figure}

\subsection{Soft-intervention datasets}
\label{appendix:soft_scms}

This section details the five soft-intervention datasets used in the benchmark. These datasets fall into two categories:

\begin{enumerate}[leftmargin=*,itemsep=2pt]
\item \textbf{Function network benchmarks} (Ackley, Rosenbrock, Dropwave, Alpine2): Classical global optimization benchmarks reformulated as deterministic function networks, where each action variable propagates through intermediate nodes to the target. Interventions correspond to setting action variables, and the network structure defines soft causal relationships. These are \emph{maximization} tasks.
\item \textbf{SCM-based policy benchmark} (Chain-soft): A full structural causal model in which a soft intervention replaces the structural equation of one variable with a context-dependent policy. This is a \emph{minimization} task.
\end{enumerate}

For each data set, we specify the network architecture, all structural equations, the action domains, and the target computation. Figures use a consistent color scheme: blue = intermediate variables, orange squares = action (intervention) variables, red = target.

\subsubsection{Ackley}
\label{appendix:Ackley}
\begin{figure}[ht]
\centering
\includegraphics[scale=1]{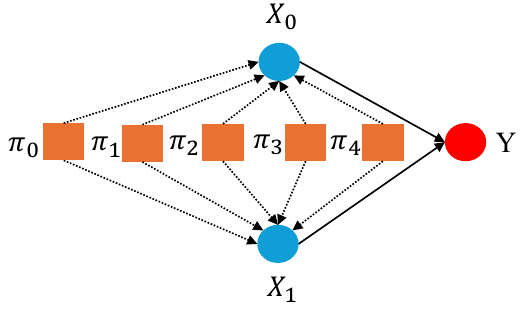} 
\caption{Ackley function network ($D=6$). Blue nodes indicate intermediate variables; orange squares represent action (intervention) variables; the red node is the target variable $Y$.}
\label{fig:Ackley}
\end{figure}

A multimodal, non-separable function-network benchmark with six action variables $a_0,\dots,a_5$, two intermediate nodes $X_0$ and $X_1$, and target node $Y$. The task is to \emph{maximize} $Y$. Following the standard configuration in the BO function-network \citep{Sussex2022MCBO,ACBOSussex}, we use the domain $a_i \in [-2,2]$ for $i=0,\dots,5$, corresponding to $D=6$. Figure~\ref{fig:Ackley} shows the function network.

\paragraph{Action variables.}
$a_i \in [-2,2]$, \quad $i=0,\dots,5$.

\paragraph{Intermediate variables.}
\begin{align*}
X_0 &= \frac{1}{D}\sum_{i=0}^{5} a_i^2, \\[4pt]
X_1 &= \frac{1}{D}\sum_{i=0}^{5} \cos(2\pi a_i),
\end{align*}
where $D=6$.

\paragraph{Target variable.}
\begin{align*}
Y &= 20\exp\!\left(-0.2\sqrt{X_0}\right) + \exp(X_1) - 20 - e.
\end{align*}

\paragraph{Interpretation.}
$X_0$ aggregates the average squared magnitude of the action variables, while $X_1$ aggregates the average cosine term. The target $Y$ combines these two intermediate quantities to recover the Ackley objective in the form of maximum reward. The observational (unintervened) setting corresponds to $a_i=0$ for all $i$.

\subsubsection{Rosenbrock}
\label{appendix:Rosenbrock}
\begin{figure}[ht]
\centering
\includegraphics[scale=1]{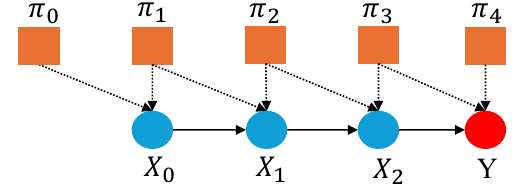} 
\caption{Rosenbrock function network (shown for $D=3$). Blue nodes indicate intermediate variables; orange squares represent action (intervention) variables; the red node is the target variable $Y$.}
\label{fig:Rosenbrock}
\end{figure}

A unimodal, non-separable function-network benchmark with a narrow curved valley. The network has action variables $a_0,\dots,a_{D-1}$, intermediate nodes $X_0,\dots,X_{D-2}$, and target node $Y$. The task is to \emph{maximize} $Y$. We use the domain $a_i \in [-2,2]$ and consider $D \in \{3,5,7\}$. Figure~\ref{fig:Rosenbrock} shows the chain-structured function network.

\paragraph{Action variables.}
$a_i \in [-2,2]$, \quad $i=0,\dots,D-1$.

\paragraph{Intermediate variables.}
The first intermediate node computes the first Rosenbrock term:
\begin{align*}
X_0 &= -100\bigl(a_1-a_0^2\bigr)^2 - (1-a_0)^2.
\end{align*}
Each subsequent node recursively accumulates the next pairwise contribution:
\begin{align*}
X_k &= -100\bigl(a_{k+1}-a_k^2\bigr)^2 - (1-a_k)^2 + X_{k-1},
\qquad k=1,\dots,D-2.
\end{align*}

\paragraph{Target variable.}
\begin{align*}
Y &= X_{D-2}.
\end{align*}

\paragraph{Interpretation.}
The Rosenbrock function network forms a causal chain in which the first node computes the initial Rosenbrock component, and each later node adds one additional pairwise contribution to the accumulated objective. The target $Y$ recovers the Rosenbrock objective in the form of maximum reward. The observational setting corresponds to $a_i=0$ for all $i$.
\subsubsection{Dropwave}
\label{appendix:Dropwave}
\begin{figure}[ht]
\centering
\includegraphics[scale=1]{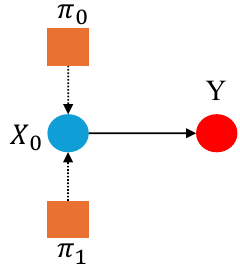} 
\caption{Dropwave function network. Blue nodes indicate intermediate variables; orange squares represent action (intervention) variables; the red node is the target variable $Y$.}
\label{fig:Dropwave}
\end{figure}

A highly multimodal two-dimensional function-network benchmark with two action variables $a_0$ and $a_1$, an intermediate node $X_0$ and a target node $Y$. The task is to \emph{maximize} $Y$. Following the BO setup of the standard function-network \citep{Sussex2022MCBO,ACBOSussex}, we use the domain $a_0,a_1 \in [-5.12,5.12]$. Figure~\ref{fig:Dropwave} shows the function network.

\paragraph{Action variables.}
$a_0, a_1 \in [-5.12,5.12]$.

\paragraph{Intermediate variable.}
\begin{align*}
X_0 &= \sqrt{a_0^2 + a_1^2} + \epsilon_0.
\end{align*}

\paragraph{Target variable.}
\begin{align*}
Y &= \frac{1+\cos(12X_0)}{2+0.5X_0^2} + \epsilon_Y.
\end{align*}

\paragraph{Interpretation.}
The intermediate node $X_0$ computes the radial distance from the origin induced by the two action variables. The target $Y$ applies the oscillatory Dropwave transformation to this radial value. Following MCBO and related function-network BO settings, this benchmark can also be evaluated in a noisy setting by choosing non-zero $\epsilon_0$ and $\epsilon_Y$.

\subsubsection{Alpine2}
\label{appendix:Alpine2}
\begin{figure}[ht]
\centering
\includegraphics[scale=1]{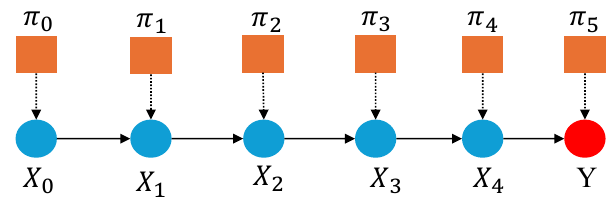} 
\caption{Alpine2 function network ($K=6$). Blue nodes indicate intermediate variables; orange squares represent action (intervention) variables; the red node is the target variable $Y$.}
\label{fig:Alpine2}
\end{figure}

A multimodal, separable chain-structured function-network benchmark with action variables $a_0,\dots,a_5$, intermediate nodes $X_0,\dots,X_4$, and target node $Y$. The task is to \emph{maximize} $Y$. We use the setting $K=6$ with the action domain $a_i \in [0,10]$ for $i=0,\dots,5$. Figure~\ref{fig:Alpine2} shows the function network.

\paragraph{Action variables.}
$a_i \in [0,10]$, \quad $i=0,\dots,5$.

\paragraph{Intermediate variables.}
The first intermediate node is defined as
\begin{align*}
X_0 &= -\sqrt{a_0}\sin(a_0).
\end{align*}
Each subsequent node combines its own action variable with the output of the previous node through the Alpine2 multiplicative structure:
\begin{align*}
X_k &= \sqrt{a_k}\sin(a_k)\,X_{k-1},
\qquad k=1,\dots,4.
\end{align*}

\paragraph{Target variable.}
\begin{align*}
Y &= \sqrt{a_5}\sin(a_5)\,X_4.
\end{align*}

\paragraph{Interpretation.}
Alpine2 forms a causal chain in which each stage combines its own decision variable with the output of the previous stage through a multiplicative structure. The analytical global maximum is $\prod_{i=0}^{5}\sqrt{a_i^*}\sin(a_i^*) = 2.808^6 \approx 400$ at $a_i^* \approx 7.917$ for all $i$. This subsection corresponds to the $K=6$ configuration used in our experiments.
\subsubsection{Chain-soft}
\label{appendix:Chain_soft}

\begin{figure}[ht]
\centering
\includegraphics[scale=0.8]{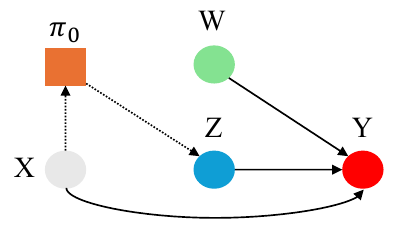} 
\caption{Chain SCM under soft intervention. Light-gray nodes are non-manipulable variables; green nodes indicate hard-intervenable variables; blue nodes indicate intermediate variables; the orange rectangle denotes the intervention policy $\pi_0$; the red node is the target variable $Y$.}
\label{fig:chain-soft}
\end{figure}

The soft-intervention Chain benchmark uses the same base SCM as Chain-hard (Appendix~\ref{appendix:Chain_hard}), with observed nodes $X, W, Z, Y$. Here, $X$ is a non-manipulable context variable, $W$ is hard-intervenable, $Z$ is \emph{soft}-intervenable via a context-dependent policy, and $Y$ is the target. The task is to \emph{minimize} $Y$. Figure~\ref{fig:chain-soft} illustrates the SCM with the intervention policy.

\paragraph{Exogenous variables.}
\begin{align*}
X &= U_X \sim \mathcal{N}(0,1), \\[4pt]
W &= U_W \sim \mathcal{N}(0,1).
\end{align*}

\paragraph{Endogenous variables (unintervened).}
\begin{align*}
Z &= -0.5X + U_Z, \\[4pt]
Y &= -W - 3ZX + U_Y,
\end{align*}
where $U_Z \sim \mathcal{N}(0,1)$ and $U_Y \sim \mathcal{N}(0,1)$, all mutually independent.

\paragraph{Intervention policy.}
Under soft intervention, the structural mechanism of $Z$ is replaced by a context-dependent policy $\pi_0$:
\[
Z := \pi_0(X),
\]
so that the intervened SCM becomes:
\begin{align*}
X &= U_X, \\[4pt]
W &= U_W \quad \text{or} \quad W := w,\; w \in [-1,1], \\[4pt]
Z &= \pi_0(X), \\[4pt]
Y &= -W - 3ZX + U_Y.
\end{align*}

\paragraph{Intervention domain and policy parameterization.}
The admissible scopes include a purely functional intervention in $Z$ (through the policy $\pi_0$) and a mixed intervention combining the functional intervention in $Z$ with a hard intervention in $W$ over $[-1,1]$. Following \citet{gultchin2023FCBO}, each functional intervention is represented using samples $\texttt{GridSize}=10$ and $N_{\alpha}=N_{\beta}=10$ for the context variable, with coefficients $\alpha_i$ and $\beta_j$ uniformly sampled from $[-0.27,0.27]$. This parameterization keeps the induced intervention values comparable to the hard-intervention range $[-1,1]$.

\paragraph{Interpretation.}
Because the target $Y = -W - 3ZX + U_Y$ depends on the interaction between $Z$ and the context $X$, the optimal intervention in $Z$ generally varies with $X$. This makes Chain-soft a key benchmark for evaluating context-adaptive intervention strategies. In the figure, the dashed arrows $X \dashrightarrow \pi_0 \dashrightarrow Z$ indicate that the policy takes $X$ as input and determines the intervened value of $Z$.

\section{Additional Results}
\label{appendix:additional_results}

\subsection{Robustness Stress Tests}
\label{appendix:robustness_stress_tests}

\subsubsection{Graph-misspecification stress test}
To go beyond a limitation statement, we added a graph-misspecification stress test. In this
experiment, the true environment is unchanged: interventional outcomes and reference optima are
always computed from the true benchmark SCM. Only the graph supplied to graph-using optimizers is
perturbed. Thus, the experiment tests sensitivity to incorrect causal information without changing
the true intervention task. We consider three perturbation types: edge addition, edge deletion, and
edge reversal. The results are shown in Tables~\ref{tab:misspec-mainshape-edge_add},
\ref{tab:misspec-mainshape-edge_delete}, and~\ref{tab:misspec-mainshape-edge_reverse}. Each
perturbation setting is evaluated over 20 random seeds, matching the seed count used in the main
benchmark. We present these stress tests as robustness diagnostics under controlled graph
misspecification, not as a complete deployment-robustness benchmark.

To make the perturbations concrete and reproducible, we visualize them directly on the benchmark
graphs. Figure~\ref{fig:misspec-schematic} illustrates the three perturbation types on the canonical
ToyGraph chain $X\to Z\to Y$, and Figures~\ref{fig:misspec-graphs-structured}
and~\ref{fig:misspec-graphs-real} show, for every other benchmark graph, the specific edge that is
added, deleted, or reversed and, where a hidden-confounder variant is defined, the node that is
removed from the learner's view. All panels share the visual encoding of Figure~\ref{fig:misspec-schematic}, with a hidden node drawn as an open dashed circle. In every panel exactly one edge (or, for the hidden-confounder
variant, one node) is perturbed relative to the true SCM, so the schematics also document the
minimal, single-perturbation nature of each stress test.

\begin{figure}[t]
\centering
\includegraphics[width=\linewidth]{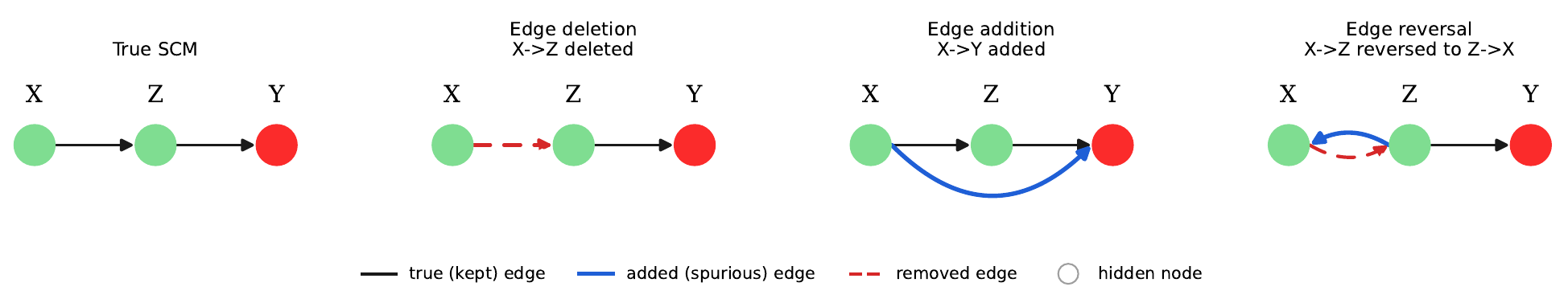}
\caption{Perturbation taxonomy on the ToyGraph SCM ($X\to Z\to Y$). From left to right: the true
benchmark graph, edge deletion ($X\to Z$ removed), edge addition (a spurious $X\to Y$ inserted), and
edge reversal ($X\to Z$ flipped to $Z\to X$). Green nodes are manipulable and the red node is the
target $Y$; kept edges are solid black, the spurious added edge is solid blue, and the deleted edge
is dashed red. Interventional outcomes and the reference optimum $y^\star$ are always computed from
the true SCM (leftmost panel); only the graph handed to the graph-using optimizers is perturbed.}
\label{fig:misspec-schematic}
\end{figure}

The stress test confirms the concern that motivated it: method rankings change substantially under graph
perturbation. For example, under edge addition at 100 trials, the GAP winner varies across datasets:
CoCaBO is best on ToyGraph, BO is best on Synthetic, DCBO is best on Synthetic-2, Ecology, and
Epidemiology, CBO is best on Chain-hard and Healthcare, and MCBO is best on
Protein-reconstructed (Table~\ref{tab:misspec-mainshape-edge_add}). Under edge deletion and edge
reversal, the winners change again (Tables~\ref{tab:misspec-mainshape-edge_delete}
and~\ref{tab:misspec-mainshape-edge_reverse}). PA-GAP also produces different rankings from GAP
because it rewards full-trajectory progress rather than only final discovery time. These results
support a more cautious conclusion: causal structure can help, but current CBO pipelines are
sensitive to how that structure is specified and operationalized. This particular stress test
addresses learner-side graph misspecification; hidden confounding is a separate dimension, which we
probe with a companion omitted-variable experiment (Table~\ref{tab:hidden-mainstyle}, below). We
treat comprehensive robustness to unobserved confounding as still open rather than resolved by these
pilots.

Some misspecified-graph runs outperform the corresponding correct-graph run. We do not interpret
this as evidence that the misspecified graph is causally preferable. Since outcomes and \(y^*\) are
still computed from the true SCM, such improvements reflect finite-budget search effects. A
perturbed graph can weaken an overconfident observational prior, change candidate intervention
scopes, alter exploration, or accidentally induce a more favorable acquisition trajectory. We
therefore report these cases as evidence that CBO is sensitive to causal inductive bias, not as
evidence that misspecification is beneficial.

To make the comparison interpretable, we compare each perturbed-graph score against the
corresponding correct-graph score obtained with the same dataset, method, metric, and budget.
Table~\ref{tab:misspec-degradation-summary} reports the resulting compact degradation
summary: for each perturbation type, the fraction of graph-using method cells (6 methods $\times$ 8
datasets $\times$ 2 metrics $\times$ 3 budgets, $N{=}288$) whose mean score degrades, improves, or
is unchanged relative to the correct-graph score in the main benchmark tables. Across all three
perturbation types, roughly half of the cells degrade (52--54\%), about a quarter improve
(26--27\%), and about a fifth are unchanged (20--21\%); CBO is the most consistently affected
pipeline (34--37 of 48 cells degrade per perturbation), while HCBO improves nearly as often as it
degrades. Because the stress-test pilots were executed as separate runs (visible in the shared BO
control, whose scores differ slightly from the main-table BO scores), these fractions should be
read as coarse sensitivity indicators rather than exact paired effect sizes. This summary prevents
individual improvements under misspecification from being misread as evidence that the perturbed
graph is causally better: improvements are a minority pattern and are consistent with the
finite-budget search effects described above.

\begin{table}[t]
\centering
\small
\caption{Degradation summary for the graph-misspecification stress tests. Each cell compares a
graph-using method's mean score under the perturbed graph against the corresponding correct-graph
mean score in the main benchmark tables, over 6 methods $\times$ 8 datasets $\times$ 2 metrics
$\times$ 3 budgets ($N{=}288$ per perturbation type).}
\label{tab:misspec-degradation-summary}
\begin{tabular}{lccc}
\toprule
Perturbation & Degrade & Improve & Unchanged \\
\midrule
Edge addition & 150 (52\%) & 78 (27\%) & 60 (21\%) \\
Edge deletion & 153 (53\%) & 76 (26\%) & 59 (20\%) \\
Edge reversal & 156 (54\%) & 75 (26\%) & 57 (20\%) \\
\bottomrule
\end{tabular}
\end{table}

For methods not directly affected by the supplied fixed graph in this pilot, the perturbation does
not change the optimizer. The non-causal BO baseline never reads the causal graph, and the CEO
runner used in this pilot does not consume a single fixed supplied graph in the same way as the
known-graph methods. We therefore treat BO and CEO as isolation controls and report a single shared
baseline for each dataset--budget--metric setting, identical across the edge-addition, edge-deletion,
and edge-reversal tables. Thus, cross-table differences are confined to the graph-using methods
(CBO, cCBO, DCBO, MCBO, HCBO, and CoCaBO).

\begin{table}[h]
\caption{Graph-misspecification stress test under edge addition (mean $\pm$ standard error over 20 seeds; illustrative pilot stress test; higher is better; best mean among methods under the same perturbation setting is in bold).}
\label{tab:misspec-mainshape-edge_add}
\centering
\scriptsize
\setlength{\tabcolsep}{2pt}
\renewcommand{\arraystretch}{1}
\resizebox{\textwidth}{!}{%
\begin{tabular}{c|c|c|cccccccc}
\toprule
Trial limit & Metric & Dataset & BO & CBO & cCBO & DCBO & MCBO & HCBO & CoCaBO & CEO \\
\midrule
\multirow{16}{*}{100}
& \multirow{8}{*}{GAP}
& ToyGraph & 0.232$\pm$.079 & 0.288$\pm$.071 & 0.567$\pm$.001 & 0.569$\pm$.055 & 0.558$\pm$.286 & 0.509$\pm$.049 & \textbf{0.682$\pm$.109} & 0.459$\pm$.084 \\
& & Synthetic & \textbf{0.651$\pm$.065} & 0.454$\pm$.072 & 0.642$\pm$.056 & 0.337$\pm$.036 & 0.456$\pm$.082 & 0.391$\pm$.088 & 0.617$\pm$.093 & 0.396$\pm$.042 \\
& & Synthetic-2 & 0.001$\pm$.004 & 0.281$\pm$.125 & 0.289$\pm$.128 & \textbf{0.578$\pm$.025} & 0.000$\pm$.000 & 0.472$\pm$.200 & 0.122$\pm$.076 & 0.526$\pm$.183 \\
& & Chain-hard & 0.787$\pm$.103 & \textbf{0.814$\pm$.128} & 0.618$\pm$.151 & 0.333$\pm$.333 & 0.284$\pm$.041 & 0.095$\pm$.062 & 0.786$\pm$.107 & 0.347$\pm$.046 \\
& & Ecology & 0.160$\pm$.128 & 0.018$\pm$.018 & 0.426$\pm$.068 & \textbf{0.519$\pm$.001} & 0.000$\pm$.000 & 0.234$\pm$.102 & 0.000$\pm$.000 & 0.228$\pm$.051 \\
& & Protein-reconstructed & 0.331$\pm$.125 & 0.363$\pm$.139 & 0.000$\pm$.000 & 0.000$\pm$.000 & \textbf{0.615$\pm$.097} & 0.288$\pm$.091 & 0.474$\pm$.045 & 0.193$\pm$.048 \\
& & Healthcare & 0.315$\pm$.054 & \textbf{0.510$\pm$.035} & 0.509$\pm$.083 & 0.341$\pm$.093 & 0.357$\pm$.029 & 0.234$\pm$.054 & 0.257$\pm$.129 & 0.483$\pm$.086 \\
& & Epidemiology & 0.506$\pm$.056 & 0.563$\pm$.038 & 0.000$\pm$.000 & \textbf{0.599$\pm$.008} & 0.000$\pm$.000 & 0.390$\pm$.077 & 0.300$\pm$.048 & 0.218$\pm$.081 \\
\cline{2-11}
& \multirow{8}{*}{PA-GAP}
& ToyGraph & 0.051$\pm$.039 & 0.089$\pm$.058 & 0.073$\pm$.001 & 0.088$\pm$.061 & 0.172$\pm$.122 & 0.199$\pm$.082 & \textbf{0.320$\pm$.032} & 0.203$\pm$.099 \\
& & Synthetic & 0.214$\pm$.051 & 0.041$\pm$.096 & 0.286$\pm$.027 & 0.025$\pm$.056 & 0.113$\pm$.082 & 0.075$\pm$.093 & \textbf{0.397$\pm$.033} & 0.253$\pm$.054 \\
& & Synthetic-2 & 0.005$\pm$.002 & 0.003$\pm$.002 & 0.139$\pm$.069 & 0.089$\pm$.029 & 0.000$\pm$.000 & 0.145$\pm$.077 & 0.090$\pm$.054 & \textbf{0.155$\pm$.057} \\
& & Chain-hard & \textbf{0.497$\pm$.015} & 0.437$\pm$.005 & 0.345$\pm$.129 & 0.168$\pm$.169 & 0.077$\pm$.014 & 0.066$\pm$.051 & 0.424$\pm$.069 & 0.007$\pm$.033 \\
& & Ecology & 0.031$\pm$.011 & 0.002$\pm$.002 & 0.174$\pm$.038 & 0.178$\pm$.001 & 0.000$\pm$.000 & \textbf{0.210$\pm$.120} & 0.000$\pm$.000 & 0.007$\pm$.069 \\
& & Protein-reconstructed & \textbf{0.153$\pm$.061} & 0.057$\pm$.018 & 0.000$\pm$.000 & 0.000$\pm$.000 & 0.031$\pm$.036 & 0.000$\pm$.000 & 0.051$\pm$.024 & 0.010$\pm$.072 \\
& & Healthcare & 0.120$\pm$.021 & 0.077$\pm$.015 & \textbf{0.202$\pm$.014} & 0.000$\pm$.000 & 0.071$\pm$.051 & 0.010$\pm$.006 & 0.116$\pm$.058 & 0.138$\pm$.049 \\
& & Epidemiology & 0.242$\pm$.021 & 0.185$\pm$.059 & 0.000$\pm$.000 & \textbf{0.341$\pm$.002} & 0.000$\pm$.000 & 0.138$\pm$.080 & 0.122$\pm$.078 & 0.001$\pm$.042 \\
\midrule
\multirow{16}{*}{50}
& \multirow{8}{*}{GAP}
& ToyGraph & 0.202$\pm$.099 & 0.235$\pm$.111 & 0.560$\pm$.009 & 0.545$\pm$.046 & 0.296$\pm$.296 & 0.450$\pm$.093 & \textbf{0.796$\pm$.041} & 0.489$\pm$.073 \\
& & Synthetic & 0.567$\pm$.091 & 0.000$\pm$.097 & 0.440$\pm$.048 & 0.295$\pm$.050 & 0.456$\pm$.099 & 0.391$\pm$.088 & \textbf{0.651$\pm$.028} & 0.254$\pm$.057 \\
& & Synthetic-2 & 0.007$\pm$.005 & 0.310$\pm$.033 & 0.377$\pm$.192 & \textbf{0.562$\pm$.017} & 0.000$\pm$.000 & 0.428$\pm$.217 & 0.122$\pm$.076 & 0.484$\pm$.222 \\
& & Chain-hard & 0.686$\pm$.086 & \textbf{0.889$\pm$.016} & 0.645$\pm$.198 & 0.333$\pm$.333 & 0.360$\pm$.107 & 0.095$\pm$.062 & 0.725$\pm$.143 & 0.356$\pm$.101 \\
& & Ecology & 0.259$\pm$.148 & 0.134$\pm$.134 & \textbf{0.502$\pm$.105} & 0.357$\pm$.001 & 0.000$\pm$.000 & 0.234$\pm$.102 & 0.000$\pm$.000 & 0.101$\pm$.031 \\
& & Protein-reconstructed & 0.365$\pm$.061 & 0.413$\pm$.046 & 0.000$\pm$.000 & 0.000$\pm$.000 & 0.526$\pm$.097 & \textbf{0.591$\pm$.040} & 0.392$\pm$.120 & 0.264$\pm$.028 \\
& & Healthcare & 0.439$\pm$.121 & 0.432$\pm$.073 & \textbf{0.604$\pm$.007} & 0.332$\pm$.093 & 0.357$\pm$.029 & 0.234$\pm$.054 & 0.244$\pm$.132 & 0.489$\pm$.076 \\
& & Epidemiology & 0.358$\pm$.052 & 0.355$\pm$.021 & 0.000$\pm$.000 & \textbf{0.671$\pm$.005} & 0.000$\pm$.000 & 0.390$\pm$.077 & 0.246$\pm$.079 & 0.060$\pm$.050 \\
\cline{2-11}
& \multirow{8}{*}{PA-GAP}
& ToyGraph & 0.024$\pm$.016 & 0.070$\pm$.053 & 0.073$\pm$.006 & 0.083$\pm$.057 & 0.108$\pm$.109 & 0.199$\pm$.082 & \textbf{0.321$\pm$.034} & 0.213$\pm$.099 \\
& & Synthetic & 0.194$\pm$.054 & 0.000$\pm$.036 & 0.201$\pm$.034 & 0.023$\pm$.050 & 0.113$\pm$.082 & 0.075$\pm$.093 & \textbf{0.344$\pm$.043} & 0.246$\pm$.077 \\
& & Synthetic-2 & 0.003$\pm$.003 & 0.003$\pm$.002 & 0.110$\pm$.056 & 0.085$\pm$.026 & 0.000$\pm$.000 & 0.081$\pm$.041 & 0.090$\pm$.054 & \textbf{0.146$\pm$.056} \\
& & Chain-hard & \textbf{0.481$\pm$.029} & 0.379$\pm$.010 & 0.327$\pm$.124 & 0.170$\pm$.170 & 0.050$\pm$.025 & 0.066$\pm$.051 & 0.391$\pm$.098 & 0.001$\pm$.054 \\
& & Ecology & 0.017$\pm$.010 & 0.002$\pm$.002 & 0.166$\pm$.040 & 0.175$\pm$.001 & 0.000$\pm$.000 & \textbf{0.208$\pm$.120} & 0.000$\pm$.000 & 0.006$\pm$.075 \\
& & Protein-reconstructed & \textbf{0.137$\pm$.054} & 0.047$\pm$.017 & 0.000$\pm$.000 & 0.000$\pm$.000 & 0.031$\pm$.036 & 0.000$\pm$.000 & 0.045$\pm$.019 & 0.009$\pm$.075 \\
& & Healthcare & 0.105$\pm$.017 & 0.069$\pm$.014 & \textbf{0.178$\pm$.010} & 0.000$\pm$.000 & 0.071$\pm$.051 & 0.009$\pm$.005 & 0.100$\pm$.051 & 0.147$\pm$.087 \\
& & Epidemiology & 0.232$\pm$.025 & 0.129$\pm$.061 & 0.000$\pm$.000 & \textbf{0.315$\pm$.003} & 0.000$\pm$.000 & 0.127$\pm$.073 & 0.085$\pm$.069 & 0.008$\pm$.002 \\
\midrule
\multirow{16}{*}{20}
& \multirow{8}{*}{GAP}
& ToyGraph & 0.272$\pm$.097 & 0.167$\pm$.064 & 0.538$\pm$.000 & 0.472$\pm$.034 & 0.239$\pm$.239 & 0.450$\pm$.093 & \textbf{0.763$\pm$.057} & 0.489$\pm$.073 \\
& & Synthetic & 0.390$\pm$.138 & 0.000$\pm$.000 & 0.475$\pm$.057 & 0.295$\pm$.050 & 0.456$\pm$.099 & 0.391$\pm$.088 & \textbf{0.590$\pm$.095} & 0.254$\pm$.057 \\
& & Synthetic-2 & 0.003$\pm$.002 & 0.185$\pm$.125 & \textbf{0.515$\pm$.009} & 0.129$\pm$.129 & 0.000$\pm$.000 & 0.428$\pm$.217 & 0.000$\pm$.000 & 0.484$\pm$.222 \\
& & Chain-hard & \textbf{0.761$\pm$.124} & 0.718$\pm$.039 & 0.486$\pm$.119 & 0.333$\pm$.333 & 0.380$\pm$.129 & 0.000$\pm$.000 & 0.731$\pm$.228 & 0.356$\pm$.101 \\
& & Ecology & 0.145$\pm$.145 & 0.080$\pm$.080 & 0.422$\pm$.159 & \textbf{0.442$\pm$.000} & 0.000$\pm$.000 & 0.234$\pm$.102 & 0.000$\pm$.000 & 0.101$\pm$.031 \\
& & Protein-reconstructed & 0.370$\pm$.073 & 0.194$\pm$.091 & 0.000$\pm$.000 & 0.000$\pm$.000 & 0.526$\pm$.097 & \textbf{0.556$\pm$.044} & 0.516$\pm$.027 & 0.264$\pm$.028 \\
& & Healthcare & 0.348$\pm$.119 & 0.268$\pm$.142 & 0.413$\pm$.036 & 0.332$\pm$.093 & 0.000$\pm$.000 & 0.234$\pm$.054 & 0.198$\pm$.100 & \textbf{0.489$\pm$.076} \\
& & Epidemiology & \textbf{0.694$\pm$.042} & 0.244$\pm$.122 & 0.000$\pm$.000 & 0.373$\pm$.005 & 0.000$\pm$.000 & 0.390$\pm$.077 & 0.259$\pm$.165 & 0.001$\pm$.005 \\
\cline{2-11}
& \multirow{8}{*}{PA-GAP}
& ToyGraph & 0.006$\pm$.004 & 0.042$\pm$.040 & 0.070$\pm$.000 & 0.069$\pm$.047 & 0.044$\pm$.044 & 0.199$\pm$.082 & \textbf{0.324$\pm$.039} & 0.213$\pm$.099 \\
& & Synthetic & 0.156$\pm$.059 & 0.000$\pm$.000 & 0.154$\pm$.036 & 0.023$\pm$.050 & 0.113$\pm$.082 & 0.000$\pm$.000 & 0.244$\pm$.054 & \textbf{0.246$\pm$.077} \\
& & Synthetic-2 & 0.004$\pm$.001 & 0.057$\pm$.029 & 0.075$\pm$.017 & 0.001$\pm$.001 & 0.000$\pm$.000 & 0.081$\pm$.041 & 0.000$\pm$.000 & \textbf{0.146$\pm$.056} \\
& & Chain-hard & \textbf{0.463$\pm$.062} & 0.228$\pm$.019 & 0.275$\pm$.112 & 0.175$\pm$.175 & 0.036$\pm$.022 & 0.000$\pm$.000 & 0.378$\pm$.108 & 0.001$\pm$.054 \\
& & Ecology & 0.011$\pm$.011 & 0.001$\pm$.001 & 0.150$\pm$.047 & 0.168$\pm$.001 & 0.000$\pm$.000 & \textbf{0.208$\pm$.120} & 0.000$\pm$.000 & 0.006$\pm$.075 \\
& & Protein-reconstructed & 0.113$\pm$.049 & 0.025$\pm$.013 & 0.000$\pm$.000 & \textbf{0.193$\pm$.097} & 0.031$\pm$.036 & 0.000$\pm$.000 & 0.045$\pm$.020 & 0.009$\pm$.075 \\
& & Healthcare & 0.091$\pm$.015 & 0.056$\pm$.011 & 0.119$\pm$.005 & 0.000$\pm$.000 & 0.000$\pm$.000 & 0.009$\pm$.005 & 0.074$\pm$.040 & \textbf{0.147$\pm$.087} \\
& & Epidemiology & 0.228$\pm$.028 & 0.087$\pm$.045 & 0.000$\pm$.000 & \textbf{0.279$\pm$.005} & 0.000$\pm$.000 & 0.127$\pm$.073 & 0.063$\pm$.060 & 0.007$\pm$.001 \\
\bottomrule
\end{tabular}%
}
\end{table}

\begin{table}[h]
\caption{Graph-misspecification stress test under edge deletion (mean $\pm$ standard error over 20 seeds; illustrative pilot stress test; higher is better; best mean among methods under the same perturbation setting is in bold).}
\label{tab:misspec-mainshape-edge_delete}
\centering
\scriptsize
\setlength{\tabcolsep}{2pt}
\renewcommand{\arraystretch}{1}
\resizebox{\textwidth}{!}{%
\begin{tabular}{c|c|c|cccccccc}
\toprule
Trial limit & Metric & Dataset & BO & CBO & cCBO & DCBO & MCBO & HCBO & CoCaBO & CEO \\
\midrule
\multirow{16}{*}{100}
& \multirow{8}{*}{GAP}
& ToyGraph & 0.232$\pm$.079 & 0.392$\pm$.009 & \textbf{0.826$\pm$.001} & 0.564$\pm$.060 & 0.394$\pm$.200 & 0.234$\pm$.054 & 0.335$\pm$.096 & 0.459$\pm$.084 \\
& & Synthetic & 0.651$\pm$.065 & 0.355$\pm$.184 & 0.642$\pm$.056 & 0.337$\pm$.036 & 0.456$\pm$.082 & 0.391$\pm$.088 & \textbf{0.663$\pm$.020} & 0.396$\pm$.042 \\
& & Synthetic-2 & 0.001$\pm$.004 & 0.281$\pm$.125 & 0.306$\pm$.145 & 0.000$\pm$.004 & 0.000$\pm$.000 & \textbf{0.657$\pm$.139} & 0.122$\pm$.076 & 0.526$\pm$.183 \\
& & Chain-hard & \textbf{0.787$\pm$.103} & 0.700$\pm$.128 & 0.767$\pm$.060 & 0.448$\pm$.300 & 0.428$\pm$.085 & 0.095$\pm$.062 & 0.688$\pm$.136 & 0.347$\pm$.046 \\
& & Ecology & 0.160$\pm$.128 & 0.012$\pm$.012 & \textbf{0.426$\pm$.068} & 0.255$\pm$.008 & 0.000$\pm$.000 & 0.234$\pm$.102 & 0.000$\pm$.000 & 0.228$\pm$.051 \\
& & Protein-reconstructed & 0.331$\pm$.125 & 0.363$\pm$.139 & 0.000$\pm$.000 & 0.000$\pm$.000 & \textbf{0.615$\pm$.097} & 0.288$\pm$.091 & 0.578$\pm$.082 & 0.193$\pm$.048 \\
& & Healthcare & 0.315$\pm$.054 & \textbf{0.510$\pm$.035} & 0.445$\pm$.100 & 0.341$\pm$.093 & 0.357$\pm$.029 & 0.234$\pm$.054 & 0.431$\pm$.216 & 0.483$\pm$.086 \\
& & Epidemiology & 0.506$\pm$.056 & 0.563$\pm$.038 & 0.000$\pm$.000 & \textbf{0.661$\pm$.092} & 0.000$\pm$.000 & 0.390$\pm$.077 & 0.572$\pm$.134 & 0.218$\pm$.081 \\
\cline{2-11}
& \multirow{8}{*}{PA-GAP}
& ToyGraph & 0.051$\pm$.039 & 0.104$\pm$.054 & \textbf{0.355$\pm$.001} & 0.095$\pm$.056 & 0.248$\pm$.129 & 0.199$\pm$.082 & 0.103$\pm$.051 & 0.203$\pm$.099 \\
& & Synthetic & 0.214$\pm$.051 & 0.087$\pm$.044 & 0.286$\pm$.027 & 0.025$\pm$.056 & 0.113$\pm$.082 & 0.075$\pm$.093 & \textbf{0.396$\pm$.039} & 0.253$\pm$.054 \\
& & Synthetic-2 & 0.005$\pm$.002 & 0.003$\pm$.002 & 0.133$\pm$.069 & 0.000$\pm$.006 & 0.000$\pm$.000 & \textbf{0.178$\pm$.131} & 0.090$\pm$.054 & 0.155$\pm$.057 \\
& & Chain-hard & \textbf{0.497$\pm$.015} & 0.437$\pm$.005 & 0.414$\pm$.030 & 0.221$\pm$.147 & 0.085$\pm$.029 & 0.066$\pm$.051 & 0.388$\pm$.098 & 0.007$\pm$.033 \\
& & Ecology & 0.031$\pm$.011 & 0.000$\pm$.005 & 0.174$\pm$.038 & 0.041$\pm$.008 & 0.000$\pm$.000 & \textbf{0.210$\pm$.120} & 0.000$\pm$.000 & 0.007$\pm$.069 \\
& & Protein-reconstructed & \textbf{0.153$\pm$.061} & 0.057$\pm$.018 & 0.000$\pm$.000 & 0.000$\pm$.000 & 0.031$\pm$.036 & 0.000$\pm$.000 & 0.094$\pm$.077 & 0.010$\pm$.072 \\
& & Healthcare & 0.120$\pm$.021 & 0.077$\pm$.015 & \textbf{0.158$\pm$.052} & 0.000$\pm$.000 & 0.071$\pm$.051 & 0.010$\pm$.006 & 0.134$\pm$.068 & 0.138$\pm$.049 \\
& & Epidemiology & 0.242$\pm$.021 & 0.185$\pm$.059 & 0.000$\pm$.000 & \textbf{0.273$\pm$.038} & 0.000$\pm$.000 & 0.138$\pm$.080 & 0.227$\pm$.087 & 0.001$\pm$.042 \\
\midrule
\multirow{16}{*}{50}
& \multirow{8}{*}{GAP}
& ToyGraph & 0.202$\pm$.099 & 0.363$\pm$.141 & \textbf{0.780$\pm$.009} & 0.525$\pm$.062 & 0.569$\pm$.288 & 0.450$\pm$.093 & 0.195$\pm$.059 & 0.489$\pm$.073 \\
& & Synthetic & 0.567$\pm$.091 & 0.271$\pm$.177 & 0.440$\pm$.048 & 0.295$\pm$.050 & 0.456$\pm$.099 & 0.391$\pm$.088 & \textbf{0.688$\pm$.082} & 0.254$\pm$.057 \\
& & Synthetic-2 & 0.007$\pm$.005 & 0.310$\pm$.033 & 0.229$\pm$.118 & 0.000$\pm$.002 & 0.000$\pm$.000 & 0.467$\pm$.251 & 0.122$\pm$.076 & \textbf{0.484$\pm$.222} \\
& & Chain-hard & 0.686$\pm$.086 & \textbf{0.889$\pm$.016} & 0.741$\pm$.151 & 0.448$\pm$.300 & 0.346$\pm$.109 & 0.095$\pm$.062 & 0.601$\pm$.165 & 0.356$\pm$.101 \\
& & Ecology & 0.259$\pm$.148 & 0.000$\pm$.005 & \textbf{0.502$\pm$.105} & 0.158$\pm$.008 & 0.000$\pm$.000 & 0.234$\pm$.102 & 0.000$\pm$.000 & 0.101$\pm$.031 \\
& & Protein-reconstructed & 0.365$\pm$.061 & 0.413$\pm$.046 & 0.000$\pm$.000 & 0.000$\pm$.000 & 0.526$\pm$.097 & \textbf{0.591$\pm$.040} & 0.562$\pm$.088 & 0.264$\pm$.028 \\
& & Healthcare & 0.439$\pm$.121 & 0.432$\pm$.073 & \textbf{0.516$\pm$.102} & 0.332$\pm$.093 & 0.357$\pm$.029 & 0.234$\pm$.054 & 0.380$\pm$.197 & 0.489$\pm$.076 \\
& & Epidemiology & 0.358$\pm$.052 & 0.355$\pm$.021 & 0.000$\pm$.000 & \textbf{0.687$\pm$.032} & 0.000$\pm$.000 & 0.390$\pm$.077 & 0.482$\pm$.140 & 0.060$\pm$.050 \\
\cline{2-11}
& \multirow{8}{*}{PA-GAP}
& ToyGraph & 0.024$\pm$.016 & 0.093$\pm$.045 & \textbf{0.338$\pm$.006} & 0.088$\pm$.054 & 0.180$\pm$.110 & 0.199$\pm$.082 & 0.074$\pm$.036 & 0.213$\pm$.099 \\
& & Synthetic & 0.194$\pm$.054 & 0.076$\pm$.038 & 0.201$\pm$.034 & 0.023$\pm$.050 & 0.113$\pm$.082 & 0.075$\pm$.093 & \textbf{0.346$\pm$.055} & 0.246$\pm$.077 \\
& & Synthetic-2 & 0.003$\pm$.003 & 0.003$\pm$.002 & 0.104$\pm$.058 & 0.000$\pm$.005 & 0.000$\pm$.000 & \textbf{0.158$\pm$.137} & 0.090$\pm$.054 & 0.146$\pm$.056 \\
& & Chain-hard & \textbf{0.481$\pm$.029} & 0.379$\pm$.010 & 0.387$\pm$.030 & 0.222$\pm$.148 & 0.062$\pm$.030 & 0.066$\pm$.051 & 0.380$\pm$.099 & 0.001$\pm$.054 \\
& & Ecology & 0.017$\pm$.010 & 0.000$\pm$.002 & 0.166$\pm$.040 & 0.031$\pm$.010 & 0.000$\pm$.000 & \textbf{0.210$\pm$.120} & 0.000$\pm$.000 & 0.006$\pm$.075 \\
& & Protein-reconstructed & \textbf{0.137$\pm$.054} & 0.047$\pm$.017 & 0.000$\pm$.000 & 0.000$\pm$.000 & 0.031$\pm$.036 & 0.000$\pm$.000 & 0.095$\pm$.078 & 0.009$\pm$.075 \\
& & Healthcare & 0.105$\pm$.017 & 0.069$\pm$.014 & 0.141$\pm$.044 & 0.000$\pm$.000 & 0.071$\pm$.051 & 0.009$\pm$.005 & 0.122$\pm$.061 & \textbf{0.147$\pm$.087} \\
& & Epidemiology & 0.232$\pm$.025 & 0.129$\pm$.061 & 0.000$\pm$.000 & \textbf{0.258$\pm$.033} & 0.000$\pm$.000 & 0.127$\pm$.073 & 0.196$\pm$.085 & 0.008$\pm$.002 \\
\midrule
\multirow{16}{*}{20}
& \multirow{8}{*}{GAP}
& ToyGraph & 0.272$\pm$.097 & 0.375$\pm$.145 & \textbf{0.639$\pm$.000} & 0.405$\pm$.074 & 0.274$\pm$.274 & 0.450$\pm$.093 & 0.097$\pm$.039 & 0.489$\pm$.073 \\
& & Synthetic & 0.390$\pm$.138 & 0.276$\pm$.163 & 0.475$\pm$.057 & 0.295$\pm$.050 & 0.456$\pm$.099 & 0.391$\pm$.088 & \textbf{0.511$\pm$.058} & 0.254$\pm$.057 \\
& & Synthetic-2 & 0.003$\pm$.002 & 0.245$\pm$.125 & 0.000$\pm$.000 & \textbf{0.507$\pm$.026} & 0.000$\pm$.000 & 0.467$\pm$.251 & 0.000$\pm$.000 & 0.484$\pm$.222 \\
& & Chain-hard & 0.761$\pm$.124 & 0.718$\pm$.039 & \textbf{0.778$\pm$.019} & 0.318$\pm$.318 & 0.250$\pm$.077 & 0.000$\pm$.000 & 0.671$\pm$.202 & 0.356$\pm$.101 \\
& & Ecology & 0.145$\pm$.145 & 0.000$\pm$.000 & \textbf{0.422$\pm$.159} & 0.221$\pm$.147 & 0.000$\pm$.000 & 0.234$\pm$.102 & 0.000$\pm$.000 & 0.101$\pm$.031 \\
& & Protein-reconstructed & 0.370$\pm$.073 & 0.194$\pm$.091 & 0.000$\pm$.000 & 0.000$\pm$.000 & 0.526$\pm$.097 & \textbf{0.556$\pm$.044} & 0.515$\pm$.109 & 0.264$\pm$.028 \\
& & Healthcare & 0.348$\pm$.119 & 0.268$\pm$.142 & 0.413$\pm$.036 & 0.332$\pm$.093 & 0.000$\pm$.000 & 0.234$\pm$.054 & 0.317$\pm$.180 & \textbf{0.489$\pm$.076} \\
& & Epidemiology & \textbf{0.694$\pm$.042} & 0.244$\pm$.122 & 0.000$\pm$.000 & 0.514$\pm$.079 & 0.000$\pm$.000 & 0.390$\pm$.077 & 0.483$\pm$.102 & 0.001$\pm$.005 \\
\cline{2-11}
& \multirow{8}{*}{PA-GAP}
& ToyGraph & 0.006$\pm$.004 & 0.066$\pm$.025 & \textbf{0.296$\pm$.000} & 0.068$\pm$.047 & 0.076$\pm$.076 & 0.199$\pm$.082 & 0.047$\pm$.034 & 0.213$\pm$.099 \\
& & Synthetic & 0.156$\pm$.059 & 0.063$\pm$.034 & 0.154$\pm$.036 & 0.023$\pm$.050 & 0.113$\pm$.082 & 0.000$\pm$.000 & \textbf{0.286$\pm$.062} & 0.246$\pm$.077 \\
& & Synthetic-2 & 0.004$\pm$.001 & 0.068$\pm$.037 & 0.000$\pm$.000 & \textbf{0.158$\pm$.017} & 0.000$\pm$.000 & \textbf{0.158$\pm$.137} & 0.000$\pm$.000 & 0.146$\pm$.056 \\
& & Chain-hard & \textbf{0.463$\pm$.062} & 0.228$\pm$.019 & 0.312$\pm$.031 & 0.160$\pm$.160 & 0.029$\pm$.017 & 0.000$\pm$.000 & 0.368$\pm$.102 & 0.001$\pm$.054 \\
& & Ecology & 0.011$\pm$.011 & 0.000$\pm$.000 & 0.150$\pm$.047 & 0.017$\pm$.016 & 0.000$\pm$.000 & \textbf{0.210$\pm$.120} & 0.000$\pm$.000 & 0.006$\pm$.075 \\
& & Protein-reconstructed & 0.113$\pm$.049 & 0.025$\pm$.013 & 0.000$\pm$.000 & \textbf{0.193$\pm$.097} & 0.031$\pm$.036 & 0.000$\pm$.000 & 0.096$\pm$.080 & 0.009$\pm$.075 \\
& & Healthcare & 0.091$\pm$.015 & 0.056$\pm$.011 & 0.119$\pm$.005 & 0.000$\pm$.000 & 0.000$\pm$.000 & 0.009$\pm$.005 & 0.102$\pm$.052 & \textbf{0.147$\pm$.087} \\
& & Epidemiology & 0.228$\pm$.028 & 0.087$\pm$.045 & 0.000$\pm$.000 & \textbf{0.241$\pm$.013} & 0.000$\pm$.000 & 0.127$\pm$.073 & 0.143$\pm$.067 & 0.007$\pm$.001 \\
\bottomrule
\end{tabular}%
}
\end{table}

\begin{table}[h]
\caption{Graph-misspecification stress test under edge reversal (mean $\pm$ standard error over 20 seeds; illustrative pilot stress test; higher is better; best mean among methods under the same perturbation setting is in bold).}
\label{tab:misspec-mainshape-edge_reverse}
\centering
\scriptsize
\setlength{\tabcolsep}{2pt}
\renewcommand{\arraystretch}{1}
\resizebox{\textwidth}{!}{%
\begin{tabular}{c|c|c|cccccccc}
\toprule
Trial limit & Metric & Dataset & BO & CBO & cCBO & DCBO & MCBO & HCBO & CoCaBO & CEO \\
\midrule
\multirow{16}{*}{100}
& \multirow{8}{*}{GAP}
& ToyGraph & 0.232$\pm$.079 & 0.333$\pm$.118 & \textbf{0.826$\pm$.001} & 0.564$\pm$.060 & 0.513$\pm$.257 & 0.358$\pm$.054 & 0.279$\pm$.074 & 0.459$\pm$.084 \\
& & Synthetic & \textbf{0.651$\pm$.065} & 0.340$\pm$.084 & 0.548$\pm$.015 & 0.337$\pm$.036 & 0.456$\pm$.082 & 0.391$\pm$.088 & 0.589$\pm$.042 & 0.396$\pm$.042 \\
& & Synthetic-2 & 0.001$\pm$.004 & 0.281$\pm$.125 & 0.335$\pm$.114 & 0.000$\pm$.004 & 0.000$\pm$.000 & \textbf{0.757$\pm$.012} & 0.122$\pm$.076 & 0.526$\pm$.183 \\
& & Chain-hard & 0.787$\pm$.103 & \textbf{0.874$\pm$.051} & 0.632$\pm$.118 & 0.295$\pm$.050 & 0.409$\pm$.093 & 0.095$\pm$.062 & 0.780$\pm$.071 & 0.347$\pm$.046 \\
& & Ecology & 0.160$\pm$.128 & 0.012$\pm$.012 & \textbf{0.426$\pm$.068} & 0.255$\pm$.008 & 0.000$\pm$.000 & 0.234$\pm$.102 & 0.000$\pm$.000 & 0.228$\pm$.051 \\
& & Protein-reconstructed & 0.331$\pm$.125 & 0.363$\pm$.139 & 0.000$\pm$.000 & 0.000$\pm$.000 & \textbf{0.615$\pm$.097} & 0.288$\pm$.091 & 0.423$\pm$.096 & 0.193$\pm$.048 \\
& & Healthcare & 0.315$\pm$.054 & \textbf{0.561$\pm$.004} & 0.509$\pm$.083 & 0.341$\pm$.093 & 0.357$\pm$.029 & 0.234$\pm$.054 & 0.432$\pm$.104 & 0.483$\pm$.086 \\
& & Epidemiology & 0.506$\pm$.056 & \textbf{0.533$\pm$.134} & 0.000$\pm$.000 & 0.456$\pm$.029 & 0.000$\pm$.000 & 0.390$\pm$.077 & 0.464$\pm$.200 & 0.218$\pm$.081 \\
\cline{2-11}
& \multirow{8}{*}{PA-GAP}
& ToyGraph & 0.051$\pm$.039 & 0.102$\pm$.054 & \textbf{0.355$\pm$.001} & 0.095$\pm$.056 & 0.271$\pm$.141 & 0.199$\pm$.082 & 0.139$\pm$.079 & 0.203$\pm$.099 \\
& & Synthetic & 0.214$\pm$.051 & 0.141$\pm$.036 & 0.257$\pm$.045 & 0.025$\pm$.056 & 0.113$\pm$.082 & 0.075$\pm$.093 & \textbf{0.376$\pm$.027} & 0.253$\pm$.054 \\
& & Synthetic-2 & 0.005$\pm$.002 & 0.003$\pm$.002 & 0.139$\pm$.070 & 0.000$\pm$.006 & 0.000$\pm$.000 & \textbf{0.213$\pm$.043} & 0.090$\pm$.054 & 0.155$\pm$.057 \\
& & Chain-hard & \textbf{0.497$\pm$.015} & 0.434$\pm$.006 & 0.325$\pm$.099 & 0.087$\pm$.050 & 0.087$\pm$.001 & 0.066$\pm$.051 & 0.454$\pm$.043 & 0.007$\pm$.033 \\
& & Ecology & 0.031$\pm$.011 & 0.000$\pm$.005 & 0.174$\pm$.038 & 0.041$\pm$.008 & 0.000$\pm$.000 & \textbf{0.210$\pm$.120} & 0.000$\pm$.000 & 0.007$\pm$.069 \\
& & Protein-reconstructed & \textbf{0.153$\pm$.061} & 0.057$\pm$.018 & 0.000$\pm$.000 & 0.000$\pm$.000 & 0.031$\pm$.036 & 0.000$\pm$.000 & 0.018$\pm$.009 & 0.010$\pm$.072 \\
& & Healthcare & 0.120$\pm$.021 & 0.090$\pm$.014 & \textbf{0.202$\pm$.014} & 0.000$\pm$.000 & 0.071$\pm$.051 & 0.010$\pm$.006 & 0.196$\pm$.010 & 0.138$\pm$.049 \\
& & Epidemiology & 0.242$\pm$.021 & 0.173$\pm$.077 & 0.000$\pm$.000 & 0.065$\pm$.074 & 0.000$\pm$.000 & 0.138$\pm$.080 & \textbf{0.251$\pm$.094} & 0.001$\pm$.042 \\
\midrule
\multirow{16}{*}{50}
& \multirow{8}{*}{GAP}
& ToyGraph & 0.202$\pm$.099 & 0.477$\pm$.097 & \textbf{0.780$\pm$.009} & 0.525$\pm$.062 & 0.411$\pm$.206 & 0.450$\pm$.093 & 0.276$\pm$.075 & 0.489$\pm$.073 \\
& & Synthetic & 0.567$\pm$.091 & 0.534$\pm$.087 & 0.425$\pm$.086 & 0.295$\pm$.050 & 0.456$\pm$.099 & 0.391$\pm$.088 & \textbf{0.569$\pm$.118} & 0.254$\pm$.057 \\
& & Synthetic-2 & 0.007$\pm$.005 & 0.310$\pm$.033 & 0.243$\pm$.127 & 0.000$\pm$.002 & 0.000$\pm$.000 & \textbf{0.536$\pm$.099} & 0.122$\pm$.076 & 0.484$\pm$.222 \\
& & Chain-hard & 0.686$\pm$.086 & \textbf{0.747$\pm$.102} & 0.411$\pm$.128 & 0.295$\pm$.050 & 0.383$\pm$.096 & 0.095$\pm$.062 & 0.744$\pm$.062 & 0.356$\pm$.101 \\
& & Ecology & 0.259$\pm$.148 & 0.000$\pm$.005 & \textbf{0.502$\pm$.105} & 0.158$\pm$.008 & 0.000$\pm$.000 & 0.234$\pm$.102 & 0.000$\pm$.000 & 0.101$\pm$.031 \\
& & Protein-reconstructed & 0.365$\pm$.061 & 0.413$\pm$.046 & 0.000$\pm$.000 & 0.000$\pm$.000 & 0.526$\pm$.097 & \textbf{0.591$\pm$.040} & 0.342$\pm$.170 & 0.264$\pm$.028 \\
& & Healthcare & 0.439$\pm$.121 & 0.522$\pm$.020 & \textbf{0.604$\pm$.007} & 0.332$\pm$.093 & 0.357$\pm$.029 & 0.234$\pm$.054 & 0.342$\pm$.077 & 0.489$\pm$.076 \\
& & Epidemiology & 0.358$\pm$.052 & 0.450$\pm$.170 & 0.000$\pm$.000 & 0.456$\pm$.039 & 0.000$\pm$.000 & 0.390$\pm$.077 & \textbf{0.624$\pm$.150} & 0.060$\pm$.050 \\
\cline{2-11}
& \multirow{8}{*}{PA-GAP}
& ToyGraph & 0.024$\pm$.016 & 0.089$\pm$.047 & \textbf{0.338$\pm$.006} & 0.088$\pm$.054 & 0.225$\pm$.134 & 0.199$\pm$.082 & 0.131$\pm$.073 & 0.213$\pm$.099 \\
& & Synthetic & 0.194$\pm$.054 & 0.123$\pm$.034 & 0.214$\pm$.051 & 0.023$\pm$.050 & 0.113$\pm$.082 & 0.075$\pm$.093 & \textbf{0.328$\pm$.036} & 0.246$\pm$.077 \\
& & Synthetic-2 & 0.003$\pm$.003 & 0.003$\pm$.002 & 0.110$\pm$.059 & 0.000$\pm$.005 & 0.000$\pm$.000 & 0.104$\pm$.055 & 0.090$\pm$.054 & \textbf{0.146$\pm$.056} \\
& & Chain-hard & \textbf{0.481$\pm$.029} & 0.373$\pm$.012 & 0.298$\pm$.088 & 0.087$\pm$.050 & 0.060$\pm$.010 & 0.066$\pm$.051 & 0.445$\pm$.051 & 0.001$\pm$.054 \\
& & Ecology & 0.017$\pm$.010 & 0.000$\pm$.002 & 0.166$\pm$.040 & 0.031$\pm$.010 & 0.000$\pm$.000 & \textbf{0.210$\pm$.120} & 0.000$\pm$.000 & 0.006$\pm$.075 \\
& & Protein-reconstructed & \textbf{0.137$\pm$.054} & 0.047$\pm$.017 & 0.000$\pm$.000 & 0.000$\pm$.000 & 0.031$\pm$.036 & 0.000$\pm$.000 & 0.018$\pm$.009 & 0.009$\pm$.075 \\
& & Healthcare & 0.105$\pm$.017 & 0.082$\pm$.010 & 0.178$\pm$.010 & 0.000$\pm$.000 & 0.071$\pm$.051 & 0.010$\pm$.006 & \textbf{0.180$\pm$.008} & 0.147$\pm$.087 \\
& & Epidemiology & \textbf{0.232$\pm$.025} & 0.150$\pm$.083 & 0.000$\pm$.000 & 0.065$\pm$.098 & 0.000$\pm$.000 & 0.138$\pm$.080 & 0.225$\pm$.093 & 0.008$\pm$.002 \\
\midrule
\multirow{16}{*}{20}
& \multirow{8}{*}{GAP}
& ToyGraph & 0.272$\pm$.097 & 0.264$\pm$.152 & \textbf{0.639$\pm$.000} & 0.405$\pm$.074 & 0.496$\pm$.275 & 0.450$\pm$.093 & 0.167$\pm$.073 & 0.489$\pm$.073 \\
& & Synthetic & 0.390$\pm$.138 & 0.350$\pm$.208 & \textbf{0.518$\pm$.156} & 0.295$\pm$.050 & 0.456$\pm$.099 & 0.391$\pm$.088 & 0.506$\pm$.067 & 0.254$\pm$.057 \\
& & Synthetic-2 & 0.003$\pm$.002 & 0.313$\pm$.163 & 0.000$\pm$.000 & 0.507$\pm$.026 & 0.000$\pm$.000 & \textbf{0.536$\pm$.099} & 0.000$\pm$.000 & 0.484$\pm$.222 \\
& & Chain-hard & 0.761$\pm$.124 & 0.650$\pm$.090 & 0.691$\pm$.051 & 0.866$\pm$.021 & 0.249$\pm$.090 & 0.000$\pm$.000 & \textbf{0.905$\pm$.066} & 0.356$\pm$.101 \\
& & Ecology & 0.145$\pm$.145 & 0.000$\pm$.000 & \textbf{0.422$\pm$.159} & 0.221$\pm$.147 & 0.000$\pm$.000 & 0.234$\pm$.102 & 0.000$\pm$.000 & 0.101$\pm$.031 \\
& & Protein-reconstructed & 0.370$\pm$.073 & 0.194$\pm$.091 & 0.000$\pm$.000 & 0.000$\pm$.000 & 0.526$\pm$.097 & \textbf{0.556$\pm$.044} & 0.326$\pm$.163 & 0.264$\pm$.028 \\
& & Healthcare & 0.348$\pm$.119 & 0.401$\pm$.079 & 0.413$\pm$.036 & 0.332$\pm$.093 & 0.000$\pm$.000 & 0.234$\pm$.054 & 0.407$\pm$.108 & \textbf{0.489$\pm$.076} \\
& & Epidemiology & \textbf{0.694$\pm$.042} & 0.378$\pm$.226 & 0.000$\pm$.000 & 0.105$\pm$.053 & 0.000$\pm$.000 & 0.390$\pm$.077 & 0.541$\pm$.079 & 0.001$\pm$.005 \\
\cline{2-11}
& \multirow{8}{*}{PA-GAP}
& ToyGraph & 0.006$\pm$.004 & 0.056$\pm$.028 & \textbf{0.296$\pm$.000} & 0.068$\pm$.047 & 0.161$\pm$.157 & 0.199$\pm$.082 & 0.113$\pm$.059 & 0.213$\pm$.099 \\
& & Synthetic & 0.156$\pm$.059 & 0.088$\pm$.038 & 0.174$\pm$.061 & 0.023$\pm$.050 & 0.113$\pm$.082 & 0.000$\pm$.000 & \textbf{0.259$\pm$.047} & 0.246$\pm$.077 \\
& & Synthetic-2 & 0.004$\pm$.001 & 0.065$\pm$.036 & 0.000$\pm$.000 & \textbf{0.158$\pm$.017} & 0.000$\pm$.000 & 0.104$\pm$.055 & 0.000$\pm$.000 & 0.146$\pm$.056 \\
& & Chain-hard & \textbf{0.463$\pm$.062} & 0.218$\pm$.023 & 0.231$\pm$.057 & 0.414$\pm$.021 & 0.024$\pm$.020 & 0.000$\pm$.000 & 0.426$\pm$.069 & 0.001$\pm$.054 \\
& & Ecology & 0.011$\pm$.011 & 0.000$\pm$.000 & 0.150$\pm$.047 & 0.017$\pm$.016 & 0.000$\pm$.000 & \textbf{0.210$\pm$.120} & 0.000$\pm$.000 & 0.006$\pm$.075 \\
& & Protein-reconstructed & 0.113$\pm$.049 & 0.025$\pm$.013 & 0.000$\pm$.000 & \textbf{0.193$\pm$.097} & 0.031$\pm$.036 & 0.000$\pm$.000 & 0.017$\pm$.009 & 0.009$\pm$.075 \\
& & Healthcare & 0.091$\pm$.015 & 0.064$\pm$.006 & 0.119$\pm$.005 & 0.000$\pm$.000 & 0.000$\pm$.000 & 0.010$\pm$.006 & \textbf{0.163$\pm$.005} & 0.147$\pm$.087 \\
& & Epidemiology & \textbf{0.228$\pm$.028} & 0.139$\pm$.087 & 0.000$\pm$.000 & 0.003$\pm$.001 & 0.000$\pm$.000 & 0.138$\pm$.080 & 0.176$\pm$.082 & 0.007$\pm$.001 \\
\bottomrule
\end{tabular}%
}
\end{table}
\FloatBarrier

\begin{figure}[h]
\centering
{\bfseries Synthetic}\par\smallskip
\includegraphics[width=\linewidth]{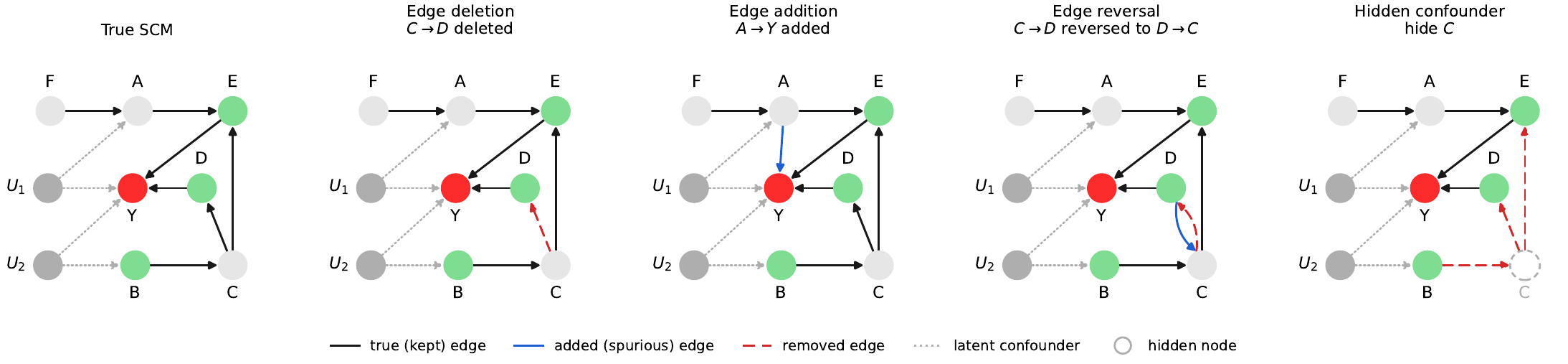}\par\bigskip
{\bfseries Synthetic-2}\par\smallskip
\includegraphics[width=\linewidth]{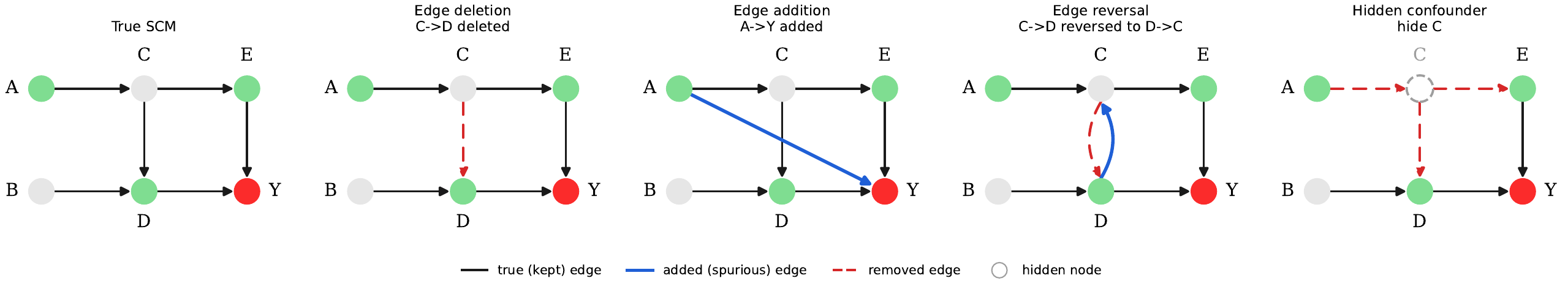}\par\bigskip
{\bfseries Chain-hard}\par\smallskip
\includegraphics[width=\linewidth]{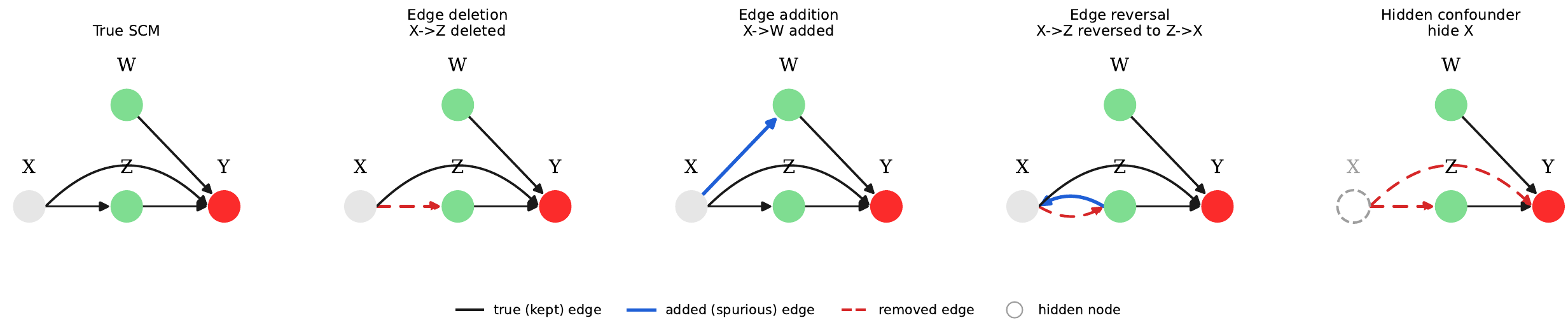}\par\bigskip
\caption{Graph-misspecification schematics for the synthetic SCMs (Synthetic, Synthetic-2, and Chain-hard). Visual encoding follows Figure~\ref{fig:misspec-schematic}: each row shows the true SCM followed by learner-belief graphs under the edge-deletion, edge-addition, and edge-reversal perturbations named in the panel titles. Only the learner's supplied graph is perturbed; the true SCM and $y^\star$ are unchanged.}
\label{fig:misspec-graphs-structured}
\end{figure}

\begin{figure}[h]
\centering
{\bfseries Protein-reconstructed}\par\smallskip
\includegraphics[width=.8\linewidth]{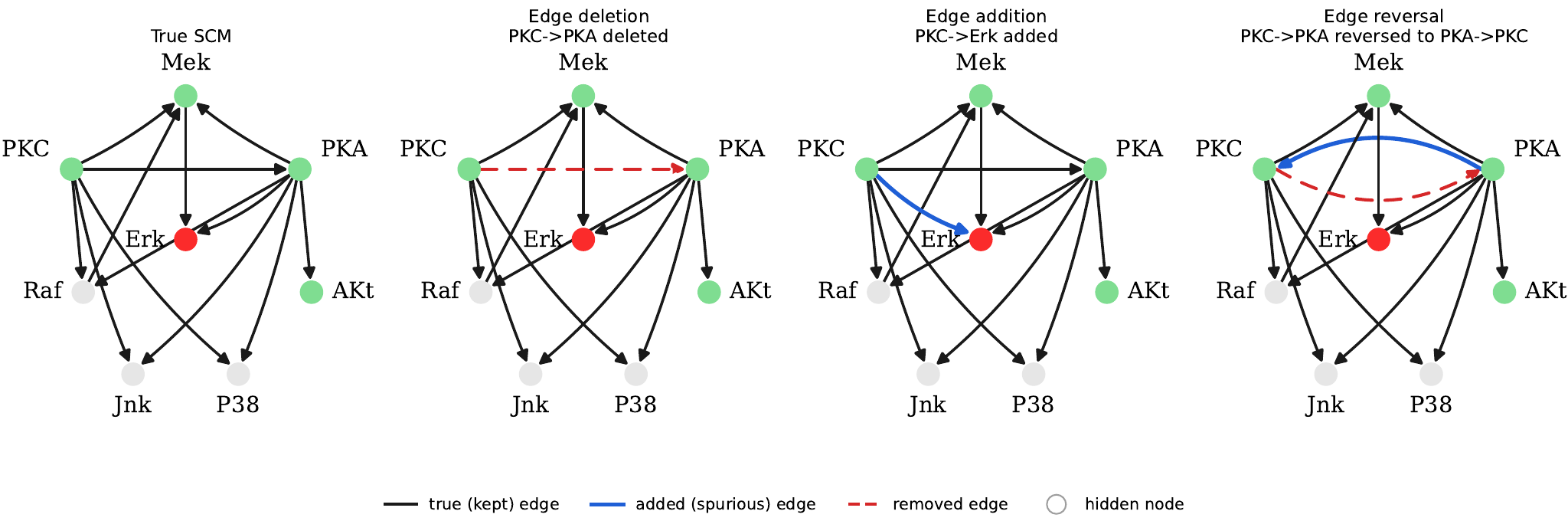}
{\bfseries Ecology}\par\smallskip
\includegraphics[width=\linewidth]{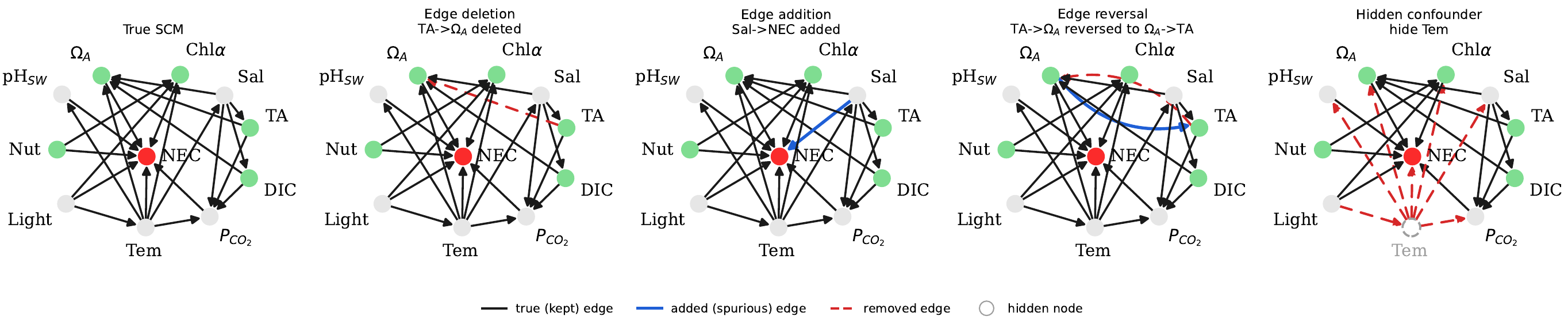}\par\bigskip
{\bfseries Healthcare}\par\smallskip
\includegraphics[width=\linewidth]{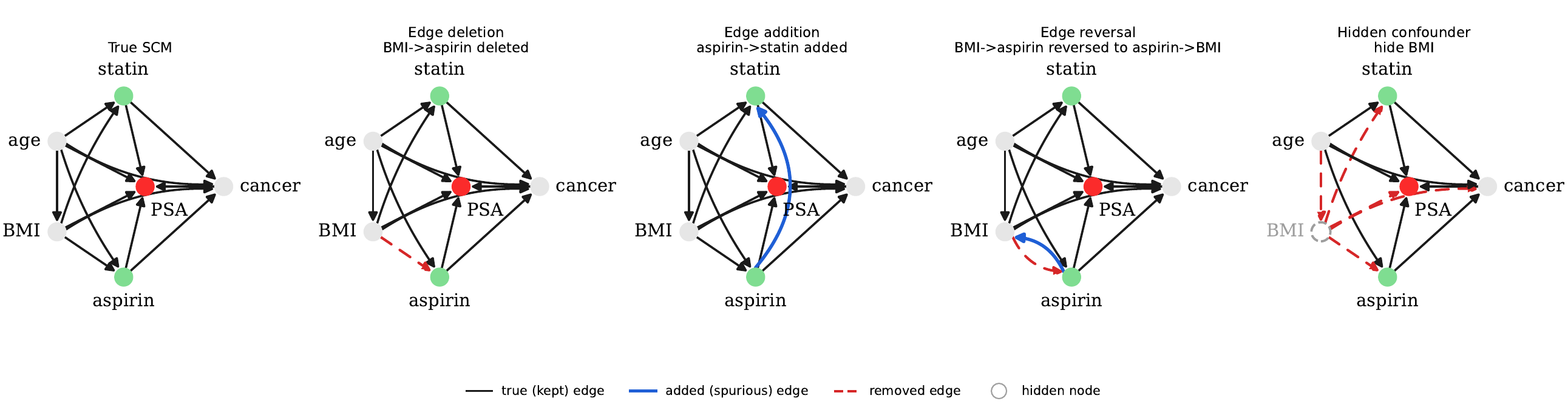}\par\bigskip
{\bfseries Epidemiology}\par\smallskip
\includegraphics[width=.9\linewidth]{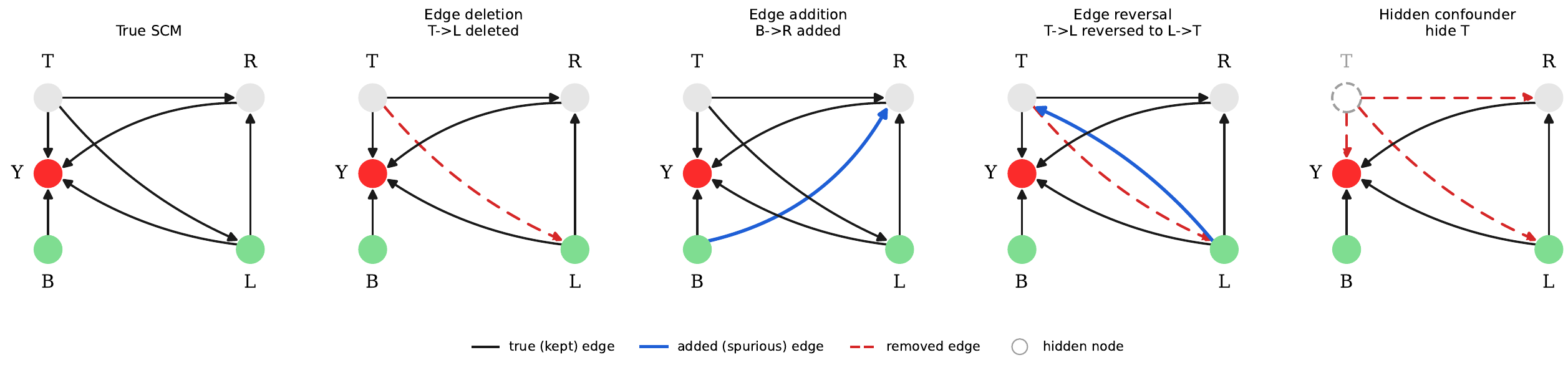}
\caption{Graph-misspecification schematics for Protein-reconstructed and the
domain-inspired benchmark SCMs (Ecology, Healthcare, and Epidemiology). Layout
and visual encoding follow Figure~\ref{fig:misspec-graphs-structured}. Each row
shows the true SCM followed by learner-belief graphs under the misspecification
perturbations shown in the panel titles. Protein-reconstructed uses only the
three edge perturbations: PKC$\to$PKA deletion, PKC$\to$Erk addition, and
PKC$\to$PKA reversal to PKA$\to$PKC. For Ecology, the perturbations are
TA$\to\Omega_A$ deletion, Sal$\to$NEC addition, TA$\to\Omega_A$ reversal to
$\Omega_A\to$TA, and hiding Tem. For Healthcare, the perturbations are
BMI$\to$aspirin deletion, aspirin$\to$statin addition, BMI$\to$aspirin reversal
to aspirin$\to$BMI, and hiding BMI. For Epidemiology, the perturbations are
$T\to L$ deletion, $B\to R$ addition, $T\to L$ reversal to $L\to T$, and hiding
$T$. The Ecology schematic is illustrative and uses representative perturbations
rather than a run-pilot setting. The Healthcare hide-BMI and Epidemiology hide-$T$
panels are the omitted common causes used in the omitted-variable stress test of
Table~\ref{tab:hidden-mainstyle}. Only the learner's supplied graph is perturbed;
the true SCM and $y^\star$ are unchanged.}
\label{fig:misspec-graphs-real}
\end{figure}
\FloatBarrier

\subsubsection{Hidden confounding (omitted-variable stress test).}
Hidden confounding is a distinct assumption violation, and
the graph-perturbation study above does not address it. We therefore added a companion
omitted-variable stress test (Table~\ref{tab:hidden-mainstyle}). Here the learner's graph and
observational view omit a genuine common cause (BMI in Healthcare, $X$ in Chain-hard, and $T$ in
Epidemiology) while the true SCM and reference optimum $y^\star$ are held fixed, so the comparison
isolates the effect of an unobserved confounder rather than changing the task. This omitted-variable
stress test is evaluated over 20 random seeds. We currently include BO as a graph-free control and
CBO/CoCaBO as belief-graph methods.

The effect is method- and dataset-dependent. Omitting the confounder degrades CBO on Chain-hard,
whereas on Healthcare and some Epidemiology settings the omitted-variable runs of CBO and CoCaBO
sometimes match or slightly exceed their full-information counterparts. We do not interpret these
numerical increases as evidence that confounding is beneficial or harmless. Rather, they reflect
finite-budget search effects: a coarser or misspecified observational view can weaken an
overconfident causal prior, change candidate scopes, or accidentally induce a more favorable
acquisition trajectory. The graph-free BO baseline is unchanged by construction, since it does not
use the omitted causal structure. Overall, the omitted-variable stress test reinforces the same
message as the graph-perturbation study: current CBO pipelines are sensitive to the causal
information they are given, and robustness to hidden confounding remains an open evaluation
dimension.

\begin{table}[h]
\centering
\scriptsize
\renewcommand{\arraystretch}{1}
\caption{Hidden-confounding / omitted-variable stress test (mean $\pm$ standard error over 20 seeds; higher is better; best mean per row in bold). The learner's graph and observational view omit a genuine common cause (Healthcare: BMI; Chain-hard: $X$; Epidemiology: $T$); the true SCM and $y^\star$ are unchanged. BO is graph-free and uses the full variable set, so its Full and Hidden rows are identical by construction.}
\label{tab:hidden-mainstyle}
\begin{tabular}{c|c|c|c|ccc}
\toprule
Trial limit & Metric & Dataset & Information & BO & CBO & CoCaBO \\
\midrule
\multirow{12}{*}{100} & \multirow{6}{*}{GAP} & \multirow{2}{*}{Healthcare} & Full information & 0.309$\pm$.057 & \textbf{0.510$\pm$.035} & 0.226$\pm$.118  \\
 &  &  & Hidden variable & 0.309$\pm$.057 & \textbf{0.543$\pm$.012} & 0.251$\pm$.126  \\
\cline{3-7}
 &  & \multirow{2}{*}{Chain-hard} & Full information & 0.780$\pm$.102 & \textbf{0.814$\pm$.128} & 0.761$\pm$.129  \\
 &  &  & Hidden variable & \textbf{0.780$\pm$.102} & 0.695$\pm$.106 & 0.770$\pm$.139  \\
\cline{3-7}
 &  & \multirow{2}{*}{Epidemiology} & Full information & 0.508$\pm$.057 & \textbf{0.563$\pm$.038} & 0.424$\pm$.178  \\
 &  &  & Hidden variable & 0.508$\pm$.057 & \textbf{0.546$\pm$.116} & 0.399$\pm$.094  \\
\cline{2-7}
 & \multirow{6}{*}{PA-GAP} & \multirow{2}{*}{Healthcare} & Full information & 0.122$\pm$.018 & 0.077$\pm$.015 & \textbf{0.125$\pm$.062}  \\
 &  &  & Hidden variable & 0.122$\pm$.018 & 0.080$\pm$.006 & \textbf{0.139$\pm$.070}  \\
\cline{3-7}
 &  & \multirow{2}{*}{Chain-hard} & Full information & \textbf{0.492$\pm$.013} & 0.437$\pm$.005 & 0.486$\pm$.008  \\
 &  &  & Hidden variable & \textbf{0.492$\pm$.013} & 0.420$\pm$.010 & 0.479$\pm$.008  \\
\cline{3-7}
 &  & \multirow{2}{*}{Epidemiology} & Full information & \textbf{0.244$\pm$.023} & 0.185$\pm$.059 & 0.163$\pm$.080  \\
 &  &  & Hidden variable & \textbf{0.244$\pm$.023} & 0.182$\pm$.057 & 0.130$\pm$.091  \\
\midrule
\multirow{12}{*}{50} & \multirow{6}{*}{GAP} & \multirow{2}{*}{Healthcare} & Full information & \textbf{0.438$\pm$.124} & 0.432$\pm$.073 & 0.225$\pm$.128  \\
 &  &  & Hidden variable & 0.438$\pm$.124 & \textbf{0.502$\pm$.020} & 0.378$\pm$.196  \\
\cline{3-7}
 &  & \multirow{2}{*}{Chain-hard} & Full information & 0.680$\pm$.087 & \textbf{0.889$\pm$.016} & 0.815$\pm$.146  \\
 &  &  & Hidden variable & 0.680$\pm$.087 & 0.811$\pm$.054 & \textbf{0.873$\pm$.102}  \\
\cline{3-7}
 &  & \multirow{2}{*}{Epidemiology} & Full information & 0.357$\pm$.053 & 0.355$\pm$.021 & \textbf{0.565$\pm$.076}  \\
 &  &  & Hidden variable & 0.357$\pm$.053 & \textbf{0.590$\pm$.103} & 0.344$\pm$.188  \\
\cline{2-7}
 & \multirow{6}{*}{PA-GAP} & \multirow{2}{*}{Healthcare} & Full information & 0.111$\pm$.017 & 0.069$\pm$.014 & \textbf{0.114$\pm$.057}  \\
 &  &  & Hidden variable & 0.111$\pm$.017 & 0.076$\pm$.005 & \textbf{0.131$\pm$.066}  \\
\cline{3-7}
 &  & \multirow{2}{*}{Chain-hard} & Full information & \textbf{0.484$\pm$.026} & 0.379$\pm$.010 & 0.474$\pm$.016  \\
 &  &  & Hidden variable & \textbf{0.484$\pm$.026} & 0.348$\pm$.018 & 0.468$\pm$.013  \\
\cline{3-7}
 &  & \multirow{2}{*}{Epidemiology} & Full information & \textbf{0.230$\pm$.022} & 0.129$\pm$.061 & 0.141$\pm$.077  \\
 &  &  & Hidden variable & \textbf{0.230$\pm$.022} & 0.162$\pm$.065 & 0.102$\pm$.084  \\
\midrule
\multirow{12}{*}{20} & \multirow{6}{*}{GAP} & \multirow{2}{*}{Healthcare} & Full information & \textbf{0.348$\pm$.120} & 0.268$\pm$.142 & 0.328$\pm$.180  \\
 &  &  & Hidden variable & 0.348$\pm$.120 & \textbf{0.376$\pm$.048} & 0.250$\pm$.159  \\
\cline{3-7}
 &  & \multirow{2}{*}{Chain-hard} & Full information & 0.761$\pm$.124 & 0.718$\pm$.039 & \textbf{0.913$\pm$.047}  \\
 &  &  & Hidden variable & 0.761$\pm$.124 & 0.615$\pm$.090 & \textbf{0.938$\pm$.023}  \\
\cline{3-7}
 &  & \multirow{2}{*}{Epidemiology} & Full information & \textbf{0.694$\pm$.042} & 0.244$\pm$.122 & 0.389$\pm$.071  \\
 &  &  & Hidden variable & \textbf{0.694$\pm$.042} & 0.429$\pm$.181 & 0.156$\pm$.156  \\
\cline{2-7}
 & \multirow{6}{*}{PA-GAP} & \multirow{2}{*}{Healthcare} & Full information & 0.091$\pm$.015 & 0.056$\pm$.011 & \textbf{0.100$\pm$.051}  \\
 &  &  & Hidden variable & 0.091$\pm$.015 & 0.068$\pm$.003 & \textbf{0.112$\pm$.057}  \\
\cline{3-7}
 &  & \multirow{2}{*}{Chain-hard} & Full information & \textbf{0.463$\pm$.062} & 0.228$\pm$.020 & 0.449$\pm$.038  \\
 &  &  & Hidden variable & \textbf{0.463$\pm$.062} & 0.178$\pm$.029 & 0.437$\pm$.033  \\
\cline{3-7}
 &  & \multirow{2}{*}{Epidemiology} & Full information & \textbf{0.228$\pm$.028} & 0.087$\pm$.045 & 0.104$\pm$.069  \\
 &  &  & Hidden variable & \textbf{0.228$\pm$.028} & 0.128$\pm$.080 & 0.068$\pm$.068  \\
\bottomrule
\end{tabular}
\end{table}

\subsection{MCBO \texorpdfstring{$\beta$}{beta} sensitivity}
\label{appendix:mcbo_beta_sensitivity}

The main benchmark keeps the MCBO wrapper setting $\beta=10$, matching the runs reported in Table~\ref{tab:trials_metrics_models}. To assess how much this default matters, we reran MCBO with $\beta\in\{0.5,1,2,5\}$ and compare the resulting GAP and PA-GAP values with the main $\beta=10$ results. The best-performing $\beta$ is not consistent across datasets, budgets, or metrics: smaller values are often better on ToyGraph and Chain-hard, $\beta=5$ is competitive on several PA-GAP rows, and $\beta=10$ remains strongest on Protein-reconstructed. We therefore report the sensitivity sweep as an appendix diagnostic and avoid interpreting the default MCBO performance as an intrinsic property of the method.

\begin{table}[t]
\centering
\scriptsize
\caption{MCBO GAP and PA-GAP vs.\ UCB $\beta$ at $T\in\{100,50,20\}$, $\beta{=}10$ columns are the \texttt{main\_hard} values (the parameter used in the paper); $\beta\in\{0.5,1,2,5\}$ are 20-seed sweep re-runs (100-trial trajectory truncated to $T$). Higher is better; best score per row in bold.}
\label{tab:mcbo-beta-reconciled-full}
\begin{tabular}{ll l ccccc}
\toprule
Metric & $T$ & Dataset & $\beta{=}0.5$ & $\beta{=}1$ & $\beta{=}2$ & $\beta{=}5$ & $\beta{=}10$ \\
\midrule
\multirow{24}{*}{GAP} & \multirow{8}{*}{100} & ToyGraph & \textbf{0.748} & 0.381 & 0.746 & 0.357 & 0.547 \\
 &  & Synthetic & 0.659 & \textbf{0.686} & 0.684 & 0.681 & 0.652 \\
 &  & Synthetic-2 & 0.000 & 0.000 & 0.000 & 0.000 & \textbf{0.000} \\
 &  & Chain-hard & 0.516 & \textbf{0.554} & 0.435 & 0.423 & 0.502 \\
 &  & Ecology & 0.000 & 0.000 & 0.000 & 0.000 & \textbf{0.000} \\
 &  & Protein-reconstructed & 0.419 & 0.337 & 0.109 & 0.286 & \textbf{0.878} \\
 &  & Healthcare & 0.485 & 0.524 & \textbf{0.545} & 0.519 & 0.510 \\
 &  & Epidemiology & 0.000 & 0.000 & 0.000 & 0.000 & \textbf{0.000} \\
\cline{2-8}
 & \multirow{8}{*}{50} & ToyGraph & 0.545 & \textbf{0.576} & 0.492 & 0.253 & 0.513 \\
 &  & Synthetic & 0.605 & 0.606 & \textbf{0.651} & 0.587 & 0.630 \\
 &  & Synthetic-2 & 0.000 & 0.000 & 0.000 & 0.000 & \textbf{0.000} \\
 &  & Chain-hard & 0.392 & \textbf{0.540} & 0.381 & 0.367 & 0.215 \\
 &  & Ecology & 0.000 & 0.000 & 0.000 & 0.000 & \textbf{0.000} \\
 &  & Protein-reconstructed & 0.354 & 0.348 & 0.264 & 0.144 & \textbf{0.752} \\
 &  & Healthcare & 0.303 & 0.292 & 0.275 & \textbf{0.330} & 0.315 \\
 &  & Epidemiology & 0.000 & 0.000 & 0.000 & 0.000 & \textbf{0.000} \\
\cline{2-8}
 & \multirow{8}{*}{20} & ToyGraph & 0.333 & \textbf{0.530} & 0.282 & 0.171 & 0.406 \\
 &  & Synthetic & 0.523 & 0.574 & 0.573 & \textbf{0.578} & 0.558 \\
 &  & Synthetic-2 & 0.000 & 0.000 & 0.000 & 0.000 & \textbf{0.000} \\
 &  & Chain-hard & 0.420 & \textbf{0.432} & 0.422 & 0.207 & 0.360 \\
 &  & Ecology & 0.000 & 0.000 & 0.000 & 0.000 & \textbf{0.000} \\
 &  & Protein-reconstructed & 0.393 & 0.300 & 0.140 & 0.399 & \textbf{0.540} \\
 &  & Healthcare & 0.000 & 0.000 & 0.000 & 0.000 & \textbf{0.000} \\
 &  & Epidemiology & 0.000 & 0.000 & 0.000 & 0.000 & \textbf{0.000} \\
\midrule
\multirow{24}{*}{PA-GAP} & \multirow{8}{*}{100} & ToyGraph & 0.266 & \textbf{0.295} & 0.218 & 0.127 & 0.071 \\
 &  & Synthetic & 0.149 & 0.154 & 0.136 & 0.125 & \textbf{0.162} \\
 &  & Synthetic-2 & 0.000 & 0.000 & 0.000 & 0.000 & \textbf{0.000} \\
 &  & Chain-hard & 0.112 & \textbf{0.174} & 0.142 & 0.067 & 0.111 \\
 &  & Ecology & 0.000 & 0.000 & 0.000 & 0.000 & \textbf{0.000} \\
 &  & Protein-reconstructed & 0.033 & 0.034 & 0.011 & \textbf{0.047} & 0.045 \\
 &  & Healthcare & 0.127 & 0.097 & 0.053 & \textbf{0.136} & 0.100 \\
 &  & Epidemiology & 0.000 & 0.000 & 0.000 & 0.000 & \textbf{0.000} \\
\cline{2-8}
 & \multirow{8}{*}{50} & ToyGraph & 0.186 & \textbf{0.261} & 0.136 & 0.089 & 0.063 \\
 &  & Synthetic & 0.162 & 0.165 & 0.123 & \textbf{0.186} & 0.150 \\
 &  & Synthetic-2 & 0.000 & 0.000 & 0.000 & 0.000 & \textbf{0.000} \\
 &  & Chain-hard & 0.089 & \textbf{0.152} & 0.119 & 0.035 & 0.078 \\
 &  & Ecology & 0.000 & 0.000 & 0.000 & 0.000 & \textbf{0.000} \\
 &  & Protein-reconstructed & 0.033 & 0.034 & 0.011 & \textbf{0.046} & 0.045 \\
 &  & Healthcare & 0.016 & 0.032 & 0.012 & \textbf{0.034} & \textbf{0.034} \\
 &  & Epidemiology & 0.000 & 0.000 & 0.000 & 0.000 & \textbf{0.000} \\
\cline{2-8}
 & \multirow{8}{*}{20} & ToyGraph & \textbf{0.175} & 0.171 & 0.093 & 0.028 & 0.040 \\
 &  & Synthetic & 0.137 & 0.137 & 0.116 & \textbf{0.141} & 0.116 \\
 &  & Synthetic-2 & 0.000 & 0.000 & 0.000 & 0.000 & \textbf{0.000} \\
 &  & Chain-hard & 0.049 & \textbf{0.116} & 0.094 & 0.016 & 0.034 \\
 &  & Ecology & 0.000 & 0.000 & 0.000 & 0.000 & \textbf{0.000} \\
 &  & Protein-reconstructed & 0.034 & 0.034 & 0.011 & 0.043 & \textbf{0.047} \\
 &  & Healthcare & 0.000 & 0.000 & 0.000 & 0.000 & \textbf{0.000} \\
 &  & Epidemiology & 0.000 & 0.000 & 0.000 & 0.000 & \textbf{0.000} \\
\bottomrule
\end{tabular}
\end{table}
\FloatBarrier

\subsection{Ranking Statistical Testing}
\label{appendix:ranking_statistical_testing}

The main text reports no pooled cross-method rank figures, because the compared pipelines do not all solve the same optimization problem (Table~\ref{tab:formulation-labels}). The tables in this appendix are the benchmark's only cross-method aggregation: they are retained as a \emph{descriptive} reliability record, and we assess whether such rank summaries would support statistically separable method differences. For each fixed budget, methods are ranked within each dataset--metric setting, where
rank 1 denotes the best mean score in that setting. We then summarize the average rank across
settings and report bootstrap confidence intervals over settings. We also report a Friedman
omnibus test and the Nemenyi critical difference for post-hoc rank separation. Because GAP and
PA-GAP are computed from the same trajectories, we additionally report a conservative per-dataset
Friedman test in which the two metric ranks are collapsed within each dataset.
An important caveat applies to all of these tests: the pooled settings are not mutually
independent. GAP and PA-GAP share the same underlying trajectories, and the three budgets
$T\in\{100,50,20\}$ are nested truncations of the same runs rather than independent replications.
The reported $p$-values and critical differences should therefore be read as descriptive
reliability checks under these dependencies, not as exact inference over independent samples; the
conservative per-dataset test partially mitigates the metric-sharing dependence but not the budget
nesting.

\subsubsection{Hard-intervention}
\label{appendix:ranking_statistical_testing_hard}

\begin{table}[t]
\centering
\footnotesize
\setlength{\tabcolsep}{4pt}
\caption{Ranking reliability on the hard-intervention benchmark by budget.
Ranks (1=best) are computed within each dataset--metric setting
($8$ datasets $\times$ $2$ metrics, $N{=}16$ per budget). Entries show
average rank $\pm$ s.d.\ and 95\% bootstrap CI; lower is better. Bottom rows
report the Friedman test, Nemenyi critical difference versus rank spread, and
a conservative per-dataset Friedman test.}
\label{tab:avg-rank-hard}
\begin{tabular}{lcccccc}
\toprule
 & \multicolumn{2}{c}{$T{=}100$} & \multicolumn{2}{c}{$T{=}50$} & \multicolumn{2}{c}{$T{=}20$} \\
\cmidrule(lr){2-3}\cmidrule(lr){4-5}\cmidrule(lr){6-7}
Method & Rank $\pm$ s.d. & 95\% CI & Rank $\pm$ s.d. & 95\% CI & Rank $\pm$ s.d. & 95\% CI \\
\midrule
CoCaBO & \textbf{3.38 $\pm$ 2.75} & [2.12, 4.69] & \textbf{3.44 $\pm$ 2.68} & [2.22, 4.75] & 3.88 $\pm$ 2.82 & [2.56, 5.25] \\
cCBO & 3.91 $\pm$ 2.23 & [2.88, 5.03] & 3.69 $\pm$ 2.32 & [2.66, 4.84] & 3.88 $\pm$ 2.24 & [2.84, 4.97] \\
CBO & 3.50 $\pm$ 1.59 & [2.81, 4.31] & 3.94 $\pm$ 1.18 & [3.44, 4.50] & 4.31 $\pm$ 1.35 & [3.62, 4.94] \\
BO & 4.12 $\pm$ 2.84 & [2.78, 5.53] & 3.94 $\pm$ 2.69 & [2.66, 5.22] & \textbf{3.81 $\pm$ 2.48} & [2.62, 5.00] \\
CEO & 4.78 $\pm$ 1.70 & [3.94, 5.56] & 4.88 $\pm$ 1.96 & [3.94, 5.81] & 4.25 $\pm$ 1.98 & [3.31, 5.19] \\
DCBO & 5.03 $\pm$ 2.22 & [3.94, 6.06] & 4.78 $\pm$ 2.27 & [3.75, 5.84] & 4.12 $\pm$ 2.18 & [3.09, 5.22] \\
HCBO & 5.38 $\pm$ 1.89 & [4.44, 6.25] & 5.44 $\pm$ 1.86 & [4.50, 6.25] & 5.84 $\pm$ 2.19 & [4.75, 6.84] \\
MCBO & 5.91 $\pm$ 1.89 & [4.97, 6.72] & 5.91 $\pm$ 2.19 & [4.81, 6.88] & 5.91 $\pm$ 2.01 & [4.91, 6.81] \\
\midrule
Friedman $p$ & \multicolumn{2}{c}{$0.0272$} & \multicolumn{2}{c}{$0.0384$} & \multicolumn{2}{c}{$0.0462$} \\
Nemenyi CD / spread & \multicolumn{2}{c}{2.62 / 2.53} & \multicolumn{2}{c}{2.62 / 2.47} & \multicolumn{2}{c}{2.62 / 2.09} \\
Per-dataset $p$ ($N{=}8$) & \multicolumn{2}{c}{$0.278$} & \multicolumn{2}{c}{$0.342$} & \multicolumn{2}{c}{$0.406$} \\
\bottomrule
\end{tabular}
\end{table}
The hard-intervention results show that average-rank orderings vary with budget. CoCaBO has the
lowest average rank at $T=100$ and $T=50$, while BO has the lowest average rank at $T=20$.
However, these rank differences should be interpreted cautiously. Although the Friedman omnibus
test detects rank heterogeneity at each budget, the rank spread is always smaller than the Nemenyi
critical difference, so no pair of methods is statistically separable by the post-hoc test. The
more conservative per-dataset Friedman test also does not reject at any budget. Thus, the ranking
analysis supports the qualitative conclusion that performance is heterogeneous, but it does not
support declaring a single hard-intervention method as uniformly superior.
\subsubsection{Soft-intervention}
\label{appendix:ranking_statistical_testing_soft}

\begin{table}[t]
\centering
\small
\setlength{\tabcolsep}{4pt}
\caption{Ranking reliability on the soft-intervention benchmark by budget.
Ranks (1=best) are computed within each dataset--metric setting
($5$ datasets $\times$ $2$ metrics, $N{=}10$ per budget). Entries show
average rank $\pm$ s.d.\ and 95\% bootstrap CI; lower is better. Bottom rows
report the Friedman test, Nemenyi critical difference versus rank spread, and
a conservative per-dataset Friedman test.}
\label{tab:avg-rank-soft}
\begin{tabular}{lcccccc}
\toprule
 & \multicolumn{2}{c}{$T{=}100$} & \multicolumn{2}{c}{$T{=}50$} & \multicolumn{2}{c}{$T{=}20$} \\
\cmidrule(lr){2-3}\cmidrule(lr){4-5}\cmidrule(lr){6-7}
Method & Rank $\pm$ s.d. & 95\% CI & Rank $\pm$ s.d. & 95\% CI & Rank $\pm$ s.d. & 95\% CI \\
\midrule
ACBO & \textbf{1.60 $\pm$ 0.84} & [1.10, 2.10] & \textbf{1.60 $\pm$ 0.70} & [1.20, 2.00] & \textbf{1.60 $\pm$ 0.66} & [1.20, 2.00] \\
BO & 1.80 $\pm$ 0.79 & [1.40, 2.30] & 1.90 $\pm$ 0.74 & [1.50, 2.30] & 2.00 $\pm$ 0.78 & [1.55, 2.45] \\
MCBO & 2.60 $\pm$ 0.52 & [2.30, 2.90] & 2.50 $\pm$ 0.85 & [2.00, 3.00] & 2.40 $\pm$ 0.77 & [1.90, 2.80] \\
\midrule
Friedman $p$ & \multicolumn{2}{c}{$0.0608$} & \multicolumn{2}{c}{$0.1225$} & \multicolumn{2}{c}{$0.1690$} \\
Nemenyi CD / spread & \multicolumn{2}{c}{1.05 / 1.00} & \multicolumn{2}{c}{1.05 / 0.90} & \multicolumn{2}{c}{1.05 / 0.80} \\
Per-dataset $p$ ($N{=}5$) & \multicolumn{2}{c}{$0.211$} & \multicolumn{2}{c}{$0.311$} & \multicolumn{2}{c}{$0.390$} \\
\bottomrule
\end{tabular}
\end{table}

The soft-intervention results show a consistent descriptive ordering: ACBO has the lowest average
rank at all three budgets, followed by BO and then MCBO. Nevertheless, the statistical tests do not
support a separable winner. The Friedman omnibus test does not reject at any budget, the rank
spread remains below the Nemenyi critical difference, and the conservative per-dataset test also
does not reject. We therefore treat the soft-intervention ranking as descriptive evidence that ACBO
is often strong in this benchmark, rather than as a statistically conclusive superiority claim.

\subsection{Average Reward Visualization}
\label{appendix:average_reward_visualization}

GAP, PA-GAP, and best-so-far curves summarize incumbent quality, but they do not show the quality of
all evaluations made along the way. We therefore add average-reward trajectories as a
complementary diagnostic. For each run, rewards are normalized so that the reference optimum is $1$ in the task's natural optimization direction. This makes average-reward
curves comparable across datasets while preserving their interpretation as the mean quality of the
sequence of queried interventions. These plots are not used as the primary ranking criterion,
because exploratory acquisitions can intentionally query uncertain points with poor immediate reward,
but they help verify that the benchmark conclusions do not depend only on final incumbent values.

Figures~\ref{fig:average_reward_hard} and~\ref{fig:average_reward_soft} should be read together with
the best-so-far trajectories and scalar metrics in the main text. When a method has strong
best-so-far performance but weaker average reward, the optimizer is finding good incumbents while
spending part of its budget on exploratory or poorly calibrated trials. Conversely, strong average
reward with weaker final GAP indicates reliable early sampling without necessarily discovering the
best final intervention. This reinforces the paper's main evaluation message: CBO methods should be
compared using multiple trajectory summaries rather than a single scalar metric.

\begin{figure}[h]
    \centering
    \includegraphics[width=\linewidth]{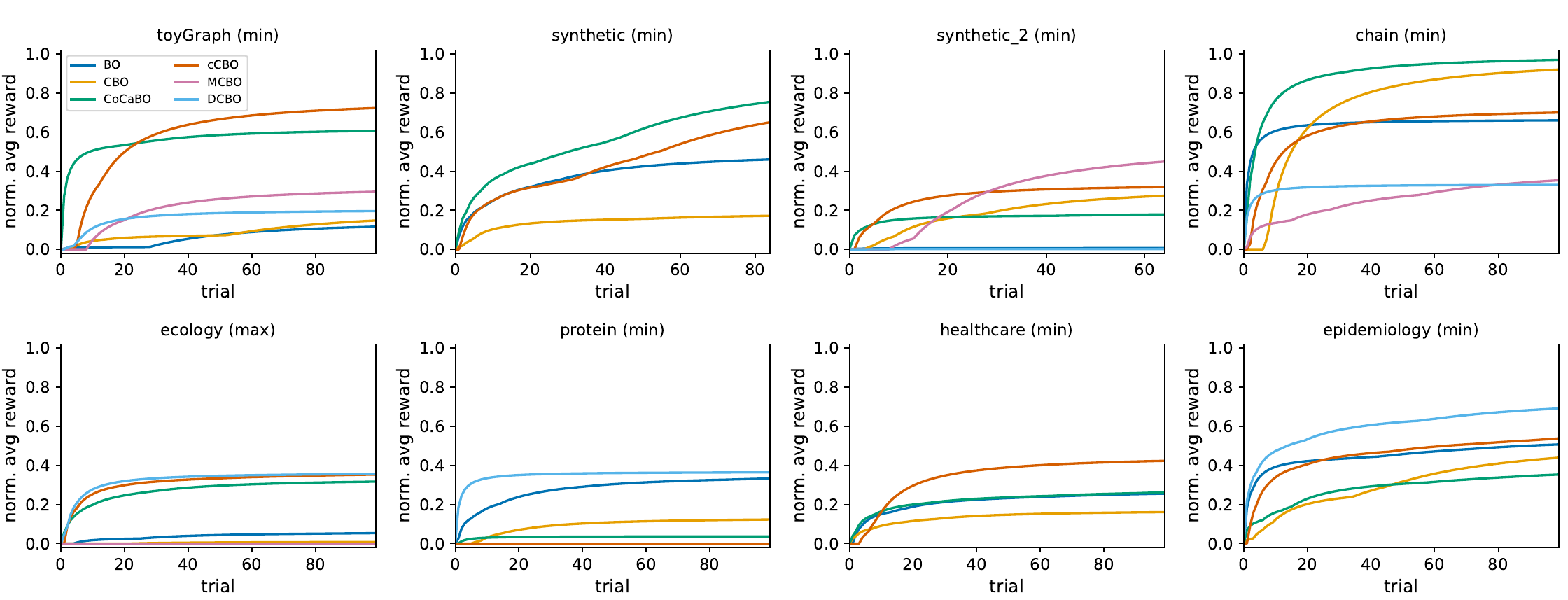}
    \caption{Average-reward trajectories for the hard-intervention benchmark.}
    \label{fig:average_reward_hard}
\end{figure}

\begin{figure}[h]
    \centering
    \includegraphics[width=\linewidth]{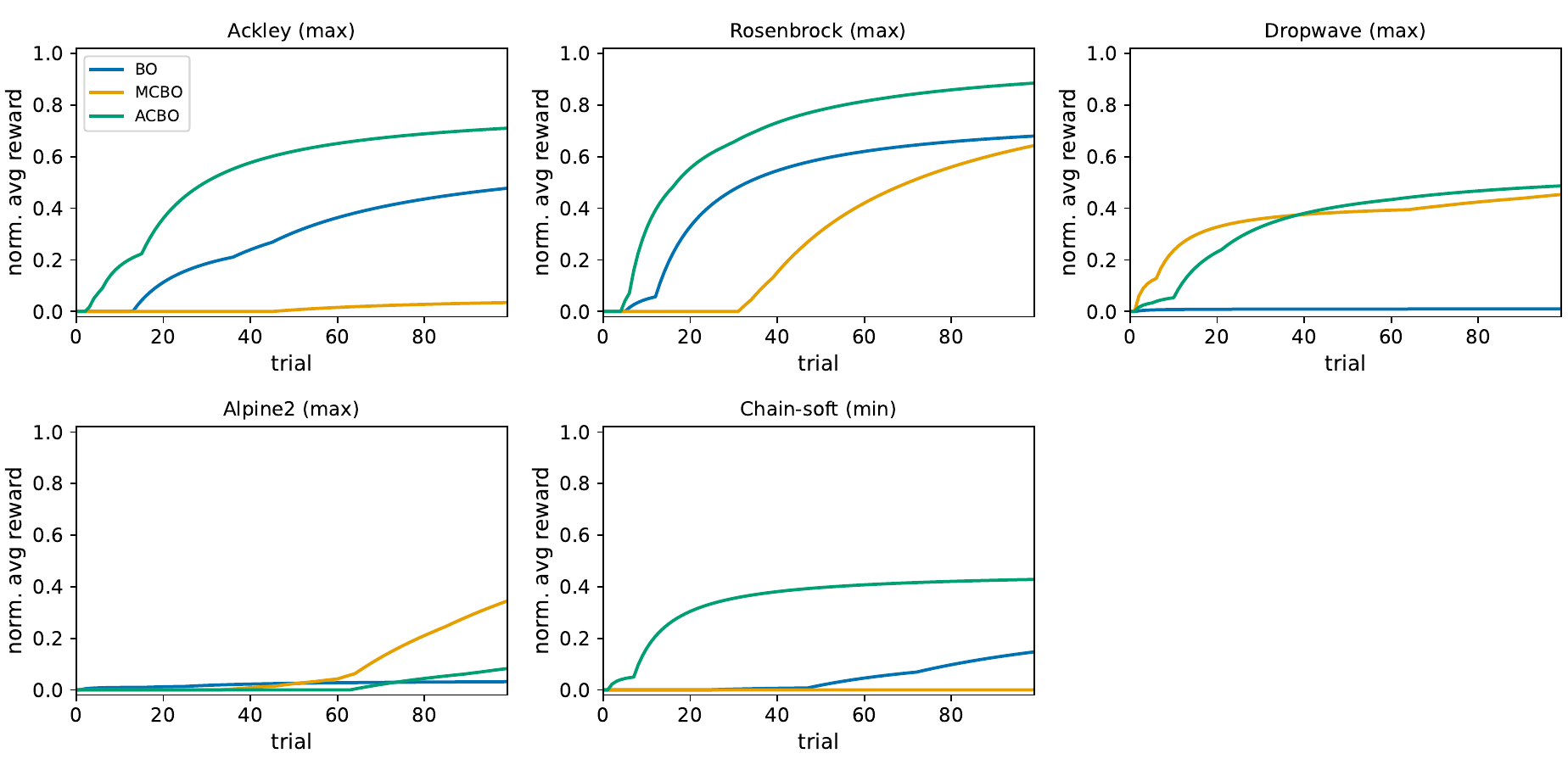}
    \caption{Average-reward trajectories for the soft-intervention/function-network benchmark.}
    \label{fig:average_reward_soft}
\end{figure}
\end{document}